\documentclass{article}
\usepackage{iclr2027_conference,times}
\usepackage{amsmath,amsfonts,bm}

\def\eqref#1{equation~\ref{#1}}
\def\1{\bm{1}}

\DeclareMathAlphabet{\mathsfit}{\encodingdefault}{\sfdefault}{m}{sl}
\SetMathAlphabet{\mathsfit}{bold}{\encodingdefault}{\sfdefault}{bx}{n}

\newcommand{\KL}{D_{\mathrm{KL}}}

\usepackage{amsmath,amssymb,amsthm}
\usepackage{hyperref,url}
\usepackage{booktabs,array,graphicx,xcolor,multirow,wrapfig,enumitem}
\usepackage{textcomp,flafter}
\graphicspath{{./}}
\input{ts1ptm.fd}
\DeclareFontShape{TS1}{ptm}{m}{sc}{<->ssub*ptm/m/n}{}

\newtheorem{theorem}{Theorem}
\newtheorem{proposition}[theorem]{Proposition}
\newtheorem{lemma}[theorem]{Lemma}
\newtheorem{corollary}[theorem]{Corollary}
\theoremstyle{definition}
\newtheorem{definition}[theorem]{Definition}

\newtheorem{fact}{Fact}
\theoremstyle{remark}
\newtheorem{remark}[theorem]{Remark}
\newcommand{\gauss}{\gamma}
\newcommand{\TV}[1]{\left\lVert #1 \right\rVert_{\mathrm{TV}}}
\newcommand{\norm}[1]{\left\lVert #1 \right\rVert}
\newcommand{\Law}{\mathrm{Law}}
\newcommand{\ie}{i.e.\ }
\newcommand{\eg}{e.g.\ }
\newcommand{\Kbar}{\bar{K}}
\newcommand{\Exp}{\mathbb{E}}
\newcommand{\dd}{\mathrm{d}}

\makeatletter
\let\wm@underscore\_
\renewcommand{\_}{\wm@underscore\discretionary{}{}{}}
\newenvironment{wmtable}[2]
 {\par\addvspace{\intextsep}\noindent\begin{minipage}{\linewidth}%
  \def\@currentlabelname{#1}\refstepcounter{table}\label{#2}%
  \@makecaption{\tablename~\thetable}{#1}\par\nobreak
  \begingroup\parindent\z@\centering}
 {\par\endgroup\end{minipage}\par\addvspace{\intextsep}}
\makeatother

\title{World Models with Predictable\\Long-Horizon Marginals}
\iclrfinalcopy
\author{Yuhao DU$^{1,2}$ \quad Shunian Chen$^1$\\
\textnormal{\small $^1$The Chinese University of Hong Kong, Shenzhen
\quad $^2$Shenzhen Loop Area Institute}\\
\textnormal{\small\texttt{yuhaodu1@link.cuhk.edu.cn}}}
\hypersetup{
  pdftitle={World Models with Predictable Long-Horizon Marginals},
  pdfauthor={Yuhao DU; Shunian Chen}
}

\begin{document}
\maketitle
\lhead{Preprint}
\begin{abstract}
Accurate one-step predictions do not ensure that a world model's rollouts retain the data distribution.
We make the model's decoded stationary law explicit by learning a decoder of a fixed Gaussian
reference and constraining the behaviour-averaged transition to preserve that reference.
For controlled systems, a joint transition uses a conditional action chart to preserve behaviour occupancy without
requiring invariance at each fixed action. Joint state--action rotations and parallel Gaussian noise give an exactly
preserving transition with a tractable conditional density. We derive an absolute convergence
bound from finite initialization banks and control departure from the reference through
conditional action-space divergence. Across $216$ fitted pixel checkpoints on twelve control
tasks, the occupancy model with a reference mixture retains every evaluated chain at $10^5$ steps in all $36$
task--seed cells, with a rollout-minus-reference energy-statistic difference of $-0.0002\pm0.0003$ (training-seed standard error).
Each of the four nonpreserving comparison arms loses chains, although the Gaussian arm is more
accurate at ten steps. An offline DreamerV3 reference also achieves better short-horizon accuracy.
These results distinguish three properties of a world model: the distribution it approaches,
the rate of approach, and the conditional dynamics it learns.
\end{abstract}

\section{Introduction}

World models support prediction and planning by repeatedly applying a learned transition.
Their outputs become future inputs, so accurate one-step predictions are insufficient: a rollout
can collapse, expand, or leave the region represented in the data. Conversely, trajectories in a
chaotic system can separate even when both obey the correct dynamics. The long-horizon marginal
is therefore a distinct object of prediction.

Teacher-forced likelihood scores transitions under the data law, whereas a rollout visits the
model's evolving distribution (Appendix~\ref{app:path-laws}). Even in a first-order autoregressive
process, a coefficient perturbation can incur second-order conditional divergence while changing
the invariant variance to first order (Appendix~\ref{app:knife}). Small local error alone need not
control the long-run law.

We address this problem by \textbf{making the model's stationary law explicit}.
A stationary data law satisfies $\mu\Kbar^\star=\mu$, where $\Kbar^\star$ averages the true transition
under the behaviour policy. In suitable latent coordinates, this identity becomes preservation
of a standard Gaussian reference $\gauss$. We learn a decoder of this reference and preserve it
in the transition; matching the resulting observation law $p_\theta$ to data remains a learned
approximation. The transition determines the conditional dynamics and how quickly other
initial laws approach that reference. For a measured discrepancy from data, this yields
\[
\underbrace{D_T}_{\text{rollout discrepancy}}
=\underbrace{D_\infty}_{\text{decoded-reference discrepancy}}
+\underbrace{r_T}_{\text{signed transient}},\qquad |r_T|\le\text{an explicit bound}.
\]
Independent reference draws estimate this model-implied limiting level before any long rollout.
The transition must still learn useful dynamics.

Invariant-measure objectives fit long-run statistics \citep{schiff2024dyslim}, and learned
Markov chain Monte Carlo transitions preserve a specified target \citep{song2017anicemc}.
A controlled world model must also fit action-dependent dynamics. Our construction combines
behaviour occupancy preservation, an exact conditional density, and finite-initialization bounds
in one parallel transition.

Requiring invariance at each action can exclude valid controlled dynamics. We instead Gaussianize
behaviour actions and preserve the joint state--action reference, enforcing the correct averaged
constraint (Figure~\ref{fig:architecture}).

Our contributions are:
\begin{itemize}[leftmargin=*]
\item \textbf{A controlled preserving architecture.} A joint state--action mixer and parallel noise
stage enforce behaviour occupancy while retaining an exact conditional density. Representation
and affine-coupling analysis identify the structural constraints (\S\ref{sec:construction}--\ref{sec:architecture}).
\item \textbf{Computable marginal guarantees.} Applying Gaussian entropy contraction to the joint
innovation gives an absolute finite-bank bound and controls policy deviation in physical
action-space Kullback--Leibler (KL) divergence (\S\ref{sec:certificates}).
\item \textbf{A separation of model properties.} Static-reference comparisons, controlled examples,
and $216$ pixel checkpoints distinguish marginal predictability from conditional accuracy and
planning return (\S\ref{sec:experiments}).
\end{itemize}

\section{Stationary data as a preserved reference}
\label{sec:construction}
\label{sec:leakage}

Let $\mu$ be a stationary state law and $K^\star(x,a,\dd x')$ its controlled transition under a fixed
behaviour policy $\pi_b$. The relevant constraint is
\begin{equation}
\iint\mu(\dd x)\pi_b(\dd a\mid x)K^\star(x,a,\dd x')=\mu(\dd x').
\label{eq:occupancy-main}
\end{equation}
It does not require every action kernel to preserve $\mu$: independent uniform $a\in\{0,1\}$ and
$x'=a$ give stationary uniform states, although each fixed action collapses the state law.

\paragraph{A stationary representation.}
Let $\gauss_d=\mathcal N(0,I_d)$ and let $E_\#\mu$ denote the pushforward of $\mu$ by $E$.
For an invertible chart satisfying $E_\#\mu=\gauss_d$, transport of the kernel gives
$\mu\Kbar=\mu\iff\gauss_d\tilde K=\gauss_d$.
With arbitrary measurable charts and kernels, this represents precisely the pair laws with equal
atomless marginals (Proposition~\ref{prop:exact-class}). Differentiable density representations require
the transport regularity stated in Appendix~\ref{app:transport}. Finite networks cover only part of
this constraint set; the theorem characterizes pair laws, not the reach of any fixed architecture.

\paragraph{The conditional action chart.}\label{sec:controlled}
In latent coordinates, transform the behaviour action law by
\begin{equation}
\alpha=G(a\mid z),\qquad G(\cdot\mid z)_\#\pi_b(\cdot\mid z)=\gauss_m.
\label{eq:chart-body}
\end{equation}
Then $\alpha$ is Gaussian independently of $z$. A transformation preserving the \emph{joint}
reference of $(z,\alpha)$ can move information from action to state while preserving occupancy.
With a state-independent chart, joint preservation strictly contains fixed-action preservation.
With state-dependent charts, these classes need not be nested: fixing a chart coordinate and
fixing a physical action are different interventions (Proposition~\ref{prop:nesting},
Remark~\ref{rem:nesting-state}).

For the recurrent pixel model, a frozen tanh-Gaussian policy reads the current frame $o$:
$a=\tanh(m_b(o)+s_b\odot\alpha)$, with known $s_b>0$. Recorded pre-tanh actions provide the exact
training chart. During generation the policy reads the decoded observation, so its innovation
remains independent of the latent state (Appendix~\ref{app:controlled}). A learned approximate
chart requires additional residual-error control.

\section{Gaussian-preserving transitions}
\label{sec:architecture}

\begin{figure}[t]
\centering
\includegraphics[width=\textwidth]{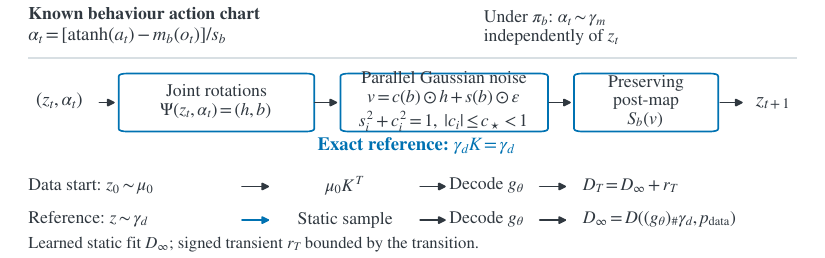}
\caption{\textbf{A transition with an exactly preserved Gaussian reference.}
Latent state $z$ and charted action $\alpha$ feed joint rotations under behaviour $\pi_b$, a parallel Gaussian noise
stage with $s_i^2=1-c_i^2$, and a preserving post-map, so the Gaussian reference holds at every
parameter value. Decoding the reference gives the
static observation law $p_\theta$. The rollout discrepancy from data equals the static discrepancy
plus a bounded signed transient $r_T$ (Section~\ref{sec:certificates}).}
\label{fig:architecture}
\end{figure}

For a diffeomorphism $T$, Gaussian preservation is equivalent to
\begin{equation*}
\log|\det J_T(x)|=\tfrac12\bigl(\|T(x)\|^2-\|x\|^2\bigr).
\tag{$\dagger$}\label{eq:dagger}
\end{equation*}
It couples radial displacement to volume change and determines the admissible form of affine couplings.

\begin{theorem}[Gaussian-preserving affine couplings]\label{thm:H3}
The block-affine diffeomorphism $(u,v)\mapsto(u,A(u)v+b(u))$ preserves a standard Gaussian if and only if
$b(u)=0$ and $A(u)A(u)^\top=I$. Thus every admissible coupling is
\begin{equation}
(u,v)\longmapsto(u,Q(u)v),\qquad Q(u)\in\mathrm O(n_v).
\label{eq:rotcoup}
\end{equation}
\end{theorem}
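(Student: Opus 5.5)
The plan is to apply the preservation identity (\ref{eq:dagger}) to $T(u,v)=(u,A(u)v+b(u))$ and compare both sides as functions of $v$ for each fixed $u$.

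\textbf{Reducing to a fixed $u$.} The Jacobian of $T$ is block lower-triangular with diagonal blocks $I$ and $A(u)$. Hence $\log|\det J_T(u,v)|=\log|\det A(u)|$, which does not depend on $v$. Since the $u$-coordinate is unchanged, $\|u\|^2$ cancels on the right of (\ref{eq:dagger}), and the condition becomes
\[
\log|\det A(u)|=\tfrac12\bigl(\|A(u)v+b(u)\|^2-\|v\|^2\bigr)\quad\text{for all }(u,v).
\]
For fixed $u$ the right side is a polynomial of degree at most two in $v$:
\[
\tfrac12 v^\top(A^\top A-I)v+b^\top A v+\tfrac12\|b\|^2 .
\]
The left side is constant in $v$, so each coefficient must match.

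\textbf{Matching coefficients.} The quadratic part gives $A^\top A=I$. The linear part gives $A^\top b=0$, and since $A(u)$ is invertible this forces $b(u)=0$. The constant terms then give $\log|\det A(u)|=0$. This is automatic, because an orthogonal matrix has determinant $\pm1$. For square matrices, $A^\top A=I$ is equivalent to $AA^\top=I$, which gives the stated form (\ref{eq:rotcoup}).

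\textbf{Converse.} Suppose $b=0$ and $A(u)=Q(u)\in\mathrm O(n_v)$. Then $\|Q(u)v\|=\|v\|$ and $|\det Q(u)|=1$, so (\ref{eq:dagger}) holds identically. More directly, condition on $u\sim\gauss$: the conditional law of $v$ is $\gauss_{n_v}$, and $Q(u)v$ is again $\gauss_{n_v}$ and independent of $u$. So the joint law is preserved. This direct argument needs only measurability of $Q$, not differentiability.

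\textbf{Main obstacle.} The delicate point is justifying (\ref{eq:dagger}) as a pointwise identity rather than an almost-everywhere one. Gaussian preservation by a diffeomorphism means $\varphi(T(x))\,|\det J_T(x)|=\varphi(x)$, where $\varphi$ is the standard Gaussian density. This is the change-of-variables formula applied to $T_\#\gauss=\gauss$. Both sides are continuous when $A$ and $b$ are $C^1$, which holds because $T$ is a diffeomorphism. So equality a.e.\ upgrades to equality everywhere, and taking logarithms gives (\ref{eq:dagger}). Coefficient matching then only needs the identity on an open set of $v$ for each $u$, which is available.

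If $A$ and $b$ were only measurable, I would instead use the conditional-law argument. $T_\#\gauss=\gauss$ means that for a.e.\ $u$ the affine image $A(u)v+b(u)$ of $\gauss_{n_v}$ is $\gauss_{n_v}$. Matching means and covariances of this Gaussian pushforward gives $b=0$ and $AA^\top=I$ directly, bypassing (\ref{eq:dagger}).
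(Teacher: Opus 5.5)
Your proof is correct, but it takes a different route from the paper's.

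\textbf{The paper's route.} The paper's proof never computes the coupling's Jacobian. It first invokes the block-reduction result (Theorem~\ref{thm:H2}). There, cancelling the factor $\gamma_{n_u}(u)$ in the pushforward density shows that $T$ preserves $\gamma$ if and only if each fibre map $v\mapsto A(u)v+b(u)$ pushes $\gamma_{n_v}$ to itself. It then notes that this pushforward is $\mathcal N\bigl(b(u),A(u)A(u)^\top\bigr)$. Matching mean and covariance gives $b(u)=0$ and $A(u)A(u)^\top=I$ in one step, and $\log|\det J_T|=0$ is read off afterwards.

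\textbf{Your route.} You substitute the block-triangular Jacobian into $(\dagger)$ and match the coefficients of a quadratic polynomial in $v$. The advantage is that the rigidity appears as a direct consequence of $(\dagger)$. The log-volume term does not depend on $v$, so it can balance the radial displacement only if the quadratic and linear parts of $\tfrac12\bigl(\|A(u)v+b(u)\|^2-\|v\|^2\bigr)$ both vanish. This is exactly the mechanism the main text refers to when it says $(\dagger)$ ``determines the admissible form of affine couplings.''

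The cost is some extra bookkeeping, all of which you handle correctly:
\begin{itemize}
\item the pointwise (not merely a.e.) form of $(\dagger)$;
\item $C^1$ regularity of $A$ and $b$, obtained from that of $T$;
\item invertibility of $A(u)$, needed to pass from $A^\top b=0$ to $b=0$;
\item the conversion of $A^\top A=I$ into $AA^\top=I$.
\end{itemize}
The paper puts the analytic work into Theorem~\ref{thm:H2}, which it also reuses for scalar-block rigidity. After that it needs only the elementary fact that an affine image of a Gaussian is Gaussian, so $b=0$ and $AA^\top=I$ come out directly.

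Your fallback argument for merely measurable $A$ and $b$ is essentially the paper's proof. Disintegration plays the role of Theorem~\ref{thm:H2}, and it gives the conclusion for almost every $u$ rather than every $u$.
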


This characterization (Appendix~\ref{app:preserving-diffeomorphisms}) gives the building blocks
(Figure~\ref{fig:architecture}). Conditioned Givens rotations
implement this identity without projection: we update disjoint coordinate
pairs in parallel, change the fixed partition between layers, and with a fixed number of layers the
depth no longer grows with latent width, as it would in a scan. These maps preserve each sample's norm.
Complementary norm-changing maps take the form $\Phi^{-1}\circ S\circ\Phi$, where $\Phi$ applies the standard Gaussian cumulative distribution function coordinatewise
and $S$ preserves volume in the quantile cube. Their approximation theorem allows unrestricted smooth profiles and depth
(Appendix~\ref{app:ideal-representation}); finite-network capacity is a separate constraint. The stochastic architecture
below uses rotations for both deterministic stages, with radial change supplied by its noise.

\paragraph{Joint mixing and simultaneous refresh.}
Mix state and action innovations using a norm-preserving Gaussian bijection $\Psi$:
\begin{equation}
(h,b)=\Psi(z,\alpha),\qquad
v_i=c_i(b)h_i+\sqrt{1-c_i(b)^2}\,\varepsilon_i,\qquad z'=S_b(v),
\label{eq:kernel}
\end{equation}
where $\varepsilon\sim\gauss_d$ is fresh, $|c_i(b)|\le c_\star<1$, and $S_b$ preserves $\gauss_d$.
At the reference law, $h$ and $b$ are independent Gaussians. Conditional on $b$, the simultaneous noise
stage and the post-map retain $\gauss_d$. Every recurrent coordinate receives noise; leaving deterministic memory outside this state
can prevent contraction (Appendix~\ref{app:e26}). The coefficient network reads $b$, not the
coordinates being refreshed. In code, $c_i=c_\star\tanh\ell_i(b)$.

Given $(z,a,o)$, including any observation context used by the action chart, both $h$ and $b$
are known. For a diffeomorphic post-map $S_b$, let $v=S_b^{-1}(z')$; the exact conditional log density is
\begin{equation}
\log k_\theta(z'\mid z,a,o)
=\sum_i\log\mathcal N\!\left(v_i;c_i(b)h_i,1-c_i(b)^2\right)
+\log|\det J_{S_b^{-1}}(z')|.
\label{eq:parallel-density-main}
\end{equation}
The inverse-Jacobian term vanishes for the implemented rotational post-map. The input mixer has no
Jacobian term in this \emph{conditional} density because it transforms known conditioning variables.
The sequential block scan used in the fixed-latent transition benchmark is a separate preserving
parameterization, described in Appendices~\ref{app:arch} and~\ref{app:cert-mix}.

\paragraph{An optional reference component.}
The mixture $k_r=(1-r)k_\theta+r\gauss_d$ preserves the reference and refreshes the entire
state with probability $r$. Its additional conditional loss is at most $-\log(1-r)$:
$0.001001$ nats per transition at the recurrent study's default $r=10^{-3}$.
The reference component also bounds extreme log-loss relative to the fitted marginal
(Appendix~\ref{app:reference-component}). We examine its density responsibility in frozen-latent
checkpoints and evaluate a reset-free recurrent variant.

\paragraph{Training with observations and recurrent inference.}
Invertible state charts use the pair objective
$-\log p_\theta(x'\mid x,a)-w\log p_\theta(x)$, with marginal weight $w$.
For pixels, a convolutional decoder and recursive Gaussian posterior
$q_\varphi(z_t\mid z_{t-1},o_t,a_{t-1})$ are learned jointly with the transition and reward head.
Training minimizes the negative evidence lower bound (ELBO) of the joint behaviour-generated
sequence model, up to known behaviour-action log densities. It combines observation and reward
reconstruction, an initial KL to $\gauss_d$, and transition KL terms using the exact prior density.
Observation $o_t$ is emitted before $z_{t+1}$ and supplies its action-chart context; the posterior
approximates the latent posterior. Appendix~\ref{app:recurrent-model} gives the
factorization and objective; Appendix~\ref{app:gpu-protocol} specifies the comparisons.

\section{Marginal certificates}
\label{sec:certificates}

Write $g_\theta$ for the observation map, including an inverse chart and any decoder.
The static reference image is $p_\theta=(g_\theta)_\#\gauss_d$. For a stochastic observation model the
same argument applies to its observation kernel; pixel-mean metrics below use the deterministic decoder
mean. This formulation accommodates noninvertible pixel decoders.

\begin{theorem}[Zero marginal drift]\label{thm:A}
Under the behaviour action chart---including observation-conditioned charts, by
Lemma~\ref{lem:observation-chart}---the transition in \eqref{eq:kernel}, with or without the
full-state
reference mixture, preserves $\gauss_d$ for every parameter value. If $z_0\sim\gauss_d$, then
$\Law(z_T)=\gauss_d$ and $\Law(g_\theta(z_T))=p_\theta$ at every horizon.
\end{theorem}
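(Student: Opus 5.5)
The plan is to verify invariance one stage at a time, assuming $z\sim\gauss_d$ and the action drawn from the behaviour policy, and then conclude by induction on the horizon.

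\emph{Stage 1 (chart).} The chart condition \eqref{eq:chart-body} says that, given $z$, $\alpha=G(a\mid z)$ has law $\gauss_m$. Since this conditional law does not depend on $z$, $\alpha$ is independent of $z$, and $(z,\alpha)\sim\gauss_{d+m}$. For observation-conditioned charts, where the policy reads the decoded frame $o$ rather than $z$, I will invoke Lemma~\ref{lem:observation-chart}. It should give the same conclusion: the innovation is $\gauss_m$ conditional on $(z,o)$, hence independent of $z$.

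\emph{Stage 2 (mixer).} $\Psi$ is a Gaussian-preserving bijection, so $(h,b)=\Psi(z,\alpha)\sim\gauss_{d+m}$. In particular $h\sim\gauss_d$ is independent of $b$.

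\emph{Stage 3 (noise).} Condition on $b$. The coefficients $c_i=c_i(b)$ are then fixed, while $h$ and $\varepsilon$ are independent standard Gaussians that do not depend on $b$. Each $v_i=c_ih_i+\sqrt{1-c_i^2}\,\varepsilon_i$ is Gaussian with mean $0$ and variance $c_i^2+(1-c_i^2)=1$. The coordinates are independent across $i$, because the pairs $(h_i,\varepsilon_i)$ are independent. Hence $v\mid b\sim\gauss_d$.

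\emph{Stage 4 (post-map).} $S_b$ preserves $\gauss_d$ for each fixed $b$, so $z'\mid b\sim\gauss_d$. Integrating over $b$ gives $z'\sim\gauss_d$.

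For the reference mixture $k_r=(1-r)k_\theta+r\gauss_d$, linearity of $\nu\mapsto\nu k$ gives $\gauss_d k_r=(1-r)\gauss_d k_\theta+r\gauss_d=\gauss_d$. None of these steps uses the parameter values beyond $|c_i|\le c_\star<1$, which keeps the noise scale $\sqrt{1-c_i^2}$ real, and the preservation properties of $\Psi$ and $S_b$. The conclusion therefore holds for every $\theta$.

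\emph{Induction.} If $\Law(z_t)=\gauss_d$, then $z_{t+1}$ is produced by the same kernel with a fresh action and fresh noise, so $\Law(z_{t+1})=\gauss_d$. Hence $\Law(z_T)=\gauss_d$ for all $T$. Applying $g_\theta$ gives $\Law(g_\theta(z_T))=(g_\theta)_\#\gauss_d=p_\theta$. For a stochastic observation kernel, the composition $\gauss_d$ followed by that kernel likewise gives $p_\theta$.

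\emph{Main obstacle.} The delicate step is Stage 1 in the recurrent setting. The action at step $t$ depends on $o_t$, which is emitted from $z_t$ and may carry extra randomness. Past actions also shaped $z_t$. I must ensure that the fresh pre-tanh innovation $\alpha_t$ is independent of $z_t$, and not merely marginally Gaussian. I would argue this with the tanh-Gaussian form $a=\tanh(m_b(o)+s_b\odot\alpha)$. Here $\alpha_t$ is drawn fresh, independent of $(z_t,o_t)$ and of the past, and $G$ recovers exactly this $\alpha_t$. The joint law is therefore $\gauss_d\otimes\gauss_m$, which is the content I expect Lemma~\ref{lem:observation-chart} to supply.
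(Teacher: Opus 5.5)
Your proposal is correct and follows essentially the same route as the paper's proof: Lemma~\ref{lem:observation-chart} gives $(z_t,\alpha_t)\sim\gauss_{d+m}$, and the preserving mixer then yields independent Gaussian blocks $(h,b)$. Conditional on $b$, the noise stage gives independent unit-variance coordinates, $S_b$ preserves $\gauss_d$ for each $b$, and the reference mixture, induction over time, and pushforward through $g_\theta$ finish the argument (one minor point: preservation itself needs only $|c_i(b)|\le 1$, and the strict cap $c_\star<1$ matters for the contraction results rather than here).
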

The guarantee concerns the generative prior; recurrent inference learns an approximate posterior.

\paragraph{Static level and signed transient.}
Let $K$ be the behaviour-averaged latent kernel, $\mu_0$ its initialized law, and $A_T=(g_\theta)_\#(\mu_0K^T)$.
For a metric $D$ satisfying $D(P,Q)\le C_D\TV{P-Q}$, set
$D_T=D(A_T,p_{\rm data})$ and $D_\infty=D(p_\theta,p_{\rm data})$,
using the total-variation (TV) norm in the $[0,2]$ convention.

\begin{proposition}[Static prediction and transient bound]\label{thm:R}
For any reference-preserving transition,
\begin{equation}
D_T=D_\infty+r_T,\qquad
|r_T|\le C_D\TV{\mu_0K^T-\gauss_d}\le C_D\TV{\mu_0-\gauss_d}.
\label{eq:plateau}
\end{equation}
Consequently, total-variation convergence to the reference implies $D_T\to D_\infty$.
\end{proposition}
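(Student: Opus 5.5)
Define the signed transient as $r_T:=D_T-D_\infty$, so the displayed identity is a definition and only the bound needs proof. The plan has three parts: a triangle inequality for $D$, data processing for total variation under the pushforward by $g_\theta$ (and, for a stochastic decoder, under the observation kernel), and monotonicity of total variation under the Markov kernel $K$ together with $\gauss_d K=\gauss_d$.

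\textbf{Step 1: the triangle inequality.} I would use that $D$ is a metric (or a pseudometric, as for energy distance or MMD). The triangle inequality gives
\[
|D(A_T,p_{\rm data})-D(p_\theta,p_{\rm data})|\le D(A_T,p_\theta)\le C_D\TV{A_T-p_\theta}.
\]

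\textbf{Step 2: pull back to latent space.} Since $A_T=(g_\theta)_\#(\mu_0K^T)$ and $p_\theta=(g_\theta)_\#\gauss_d$, pushforward by a measurable map cannot increase total variation. For any signed measure $\nu$,
\[
\TV{(g_\theta)_\#\nu}=\sup_{|f|\le1}\int f\circ g_\theta\,\dd\nu\le\TV{\nu}.
\]
This yields $\TV{A_T-p_\theta}\le\TV{\mu_0K^T-\gauss_d}$. For a stochastic observation kernel the same argument applies with $f$ replaced by the kernel average of $f$, which is still bounded by $1$.

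\textbf{Step 3: reduce the horizon to zero.} Here I use reference preservation, $\gauss_dK^T=\gauss_d$, which holds by hypothesis and also follows from Theorem~\ref{thm:A} for the construction in \eqref{eq:kernel}. Then
\[
\mu_0K^T-\gauss_d=(\mu_0-\gauss_d)K^T.
\]
A Markov kernel is a TV contraction on signed measures: for bounded $|f|\le1$, the function $Kf$ is also bounded by $1$. Iterating this $T$ times gives the second inequality in \eqref{eq:plateau}.

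\textbf{Consequence.} If $\TV{\mu_0K^T-\gauss_d}\to0$, then the middle term of \eqref{eq:plateau} forces $r_T\to0$, so $D_T\to D_\infty$.

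\textbf{Main obstacle.} The argument is routine. The one point needing care is stating the hypotheses precisely: $D$ must satisfy the triangle inequality, and the TV bound $D(P,Q)\le C_D\TV{P-Q}$ must be applied to the pair $(A_T,p_\theta)$ rather than involving $p_{\rm data}$. The $[0,2]$ normalization of TV should be kept consistent, so that the supremum over $|f|\le1$ matches that convention.
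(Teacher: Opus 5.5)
Your proposal is correct and follows essentially the same route as the paper's proof. Both define $r_T=D_T-D_\infty$ and use the reverse triangle inequality to get $|r_T|\le D(A_T,p_\theta)\le C_D\TV{A_T-p_\theta}$; both then apply total-variation contraction under the decoder pushforward (or observation kernel) and under $K^T$, after writing $\mu_0K^T-\gauss_d=(\mu_0-\gauss_d)K^T$ by reference preservation.
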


Reverse triangle inequality and data processing give the bound (Appendix~\ref{app:static-transient}).
For observations of diameter $\Delta$, the squared energy statistic instead satisfies
$|\mathcal E(A_T,Q)-\mathcal E(p_\theta,Q)|\le2\Delta\TV{\mu_0K^T-\gauss_d}$
(Appendix~\ref{app:finite-warm-start}). Normalized root-mean-square (RMS) pixel distance has $\Delta\le1$.
Unbounded state-space distances require additional moment control.

Preservation alone permits the identity kernel; the parallel noise stage additionally supplies relaxation.

\begin{theorem}[Joint contraction and finite-start convergence]\label{thm:parallel-main}
Let $M$ be the kernel from $(z,\alpha)$ to $z'$ in \eqref{eq:kernel}, with Gaussian-preserving
$\Psi$ and $S_b$ and uniform cap $|c_i(b)|\le c_\star<1$. For any joint input law $\nu$,
\begin{equation}
\KL(\nu M\|\gauss_d)\le c_\star^2\KL(\nu\|\gauss_{d+m}).
\label{eq:rate-body}
\end{equation}
Under an exact behaviour chart, $\nu=\mu\otimes\gauss_m$, giving state-kernel KL contraction by
$c_\star^2$. If $\Psi$ also preserves norms pointwise and $M_2=\Exp_{\mu_0}\|z\|^2<\infty$, define
\begin{equation}
B_\star=\tfrac12\left[c_\star^2(M_2+m)-dc_\star^2-d\log(1-c_\star^2)\right].
\label{eq:warm-main}
\end{equation}
Then, for $T\ge1$,
\begin{equation}
\TV{\mu_0K^T-\gauss_d}\le
\min\{2,\sqrt{2B_\star}\,c_\star^{T-1}\}.
\label{eq:absolute-main}
\end{equation}
\end{theorem}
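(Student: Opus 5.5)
The plan is to factor one step of \eqref{eq:kernel} into four stages and follow $\KL(\cdot\|\gauss)$ through each: the mixer $\Psi$, the conditional noise channel given $b$, the conditional post-map $S_b$, and finally discarding $b$.

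\textbf{Mixer.} $\Psi$ is a bijection with $\Psi_\#\gauss_{d+m}=\gauss_{d+m}$. KL is invariant under a common measurable bijection, so $\KL(\Psi_\#\nu\|\gauss_{d+m})=\KL(\nu\|\gauss_{d+m})$. Write $\rho$ for the law of $(h,b)$ and $\rho_b$ for the conditional law of $h$ given $b$. The chain rule gives
\[
\KL(\rho\|\gauss_d\otimes\gauss_m)=\KL(\rho^b\|\gauss_m)+\Exp_b\KL(\rho_b\|\gauss_d)\ge \Exp_b\KL(\rho_b\|\gauss_d).
\]

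\textbf{Noise channel, for fixed $b$.} The map $h\mapsto v=Ch+(I-C^2)^{1/2}\varepsilon$, with $C=\mathrm{diag}(c_i(b))$, is the time-one kernel of an anisotropic Ornstein--Uhlenbeck process with rates $\lambda_i=-\log|c_i|\ge-\log c_\star$ (a sign flip on a coordinate is itself $\gauss_d$-preserving). Its generator has Bakry--\'Emery curvature at least $\min_i\lambda_i$. The resulting entropy decay, equivalently Gaussian log-Sobolev applied to the Mehler semigroup, gives
\[
\KL(\rho_b P_{C(b)}\|\gauss_d)\le e^{-2\min_i\lambda_i}\KL(\rho_b\|\gauss_d)\le c_\star^2\KL(\rho_b\|\gauss_d).
\]
This is the step I expect to be the main obstacle. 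The rates are unequal across coordinates and the coefficients depend on $b$. I would handle both by conditioning on $b$, so that $C$ is a fixed diagonal matrix, and then using the curvature argument, which only needs a lower bound on the rates. If that route is awkward, the fallback is to tensorize a one-dimensional strong data-processing bound coordinatewise via the chain rule.

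\textbf{Post-map and assembly.} Since $S_b$ preserves $\gauss_d$ and is a bijection, it leaves $\KL(\cdot\|\gauss_d)$ unchanged given $b$. The output law $\nu M=\int (S_b)_\#(\rho_bP_{C(b)})\,\rho^b(\dd b)$ is a mixture over $b$, so joint convexity of KL gives
\[
\KL(\nu M\|\gauss_d)\le \Exp_b\KL\bigl((S_b)_\#(\rho_bP_{C(b)})\|\gauss_d\bigr)\le c_\star^2\Exp_b\KL(\rho_b\|\gauss_d)\le c_\star^2\KL(\nu\|\gauss_{d+m}).
\]
This is \eqref{eq:rate-body}. Under an exact chart, Lemma~\ref{lem:observation-chart} gives $\nu=\mu\otimes\gauss_m$. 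Then $\KL(\nu\|\gauss_{d+m})=\KL(\mu\|\gauss_d)$, so the state kernel contracts by $c_\star^2$.

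\textbf{Finite start.} Iterating from $\KL(\mu_0\|\gauss_d)$ is not enough, because that quantity may be infinite (for example for a finite bank). So I bound the first step directly. Condition on a point input $(z,\alpha)$, which fixes $(h,b)$. Given these, $v\sim\mathcal N(Ch,I-C^2)$, and a closed-form computation gives
\[
\KL=\tfrac12\sum_i\bigl[c_i^2h_i^2-c_i^2-\log(1-c_i^2)\bigr].
\]
Each summand has derivative $\tfrac12\bigl(h_i^2-1+(1-c_i^2)^{-1}\bigr)\ge0$ in $c_i^2$, so it is maximized at $c_i^2=c_\star^2$. The post-map does not change this value. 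Convexity then averages over the input, and pointwise norm preservation gives $\Exp\|h\|^2=\Exp(\|z\|^2+\|\alpha\|^2)=M_2+m$. Together these yield $\KL(\mu_0K\|\gauss_d)\le B_\star$.

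Applying \eqref{eq:rate-body} for the remaining $T-1$ steps gives $\KL(\mu_0K^T\|\gauss_d)\le c_\star^{2(T-1)}B_\star$. Pinsker's inequality in the $[0,2]$ convention, $\TV{P-Q}\le\sqrt{2\KL(P\|Q)}$, together with the trivial bound $2$, gives \eqref{eq:absolute-main}.
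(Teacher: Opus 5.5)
Your proof is correct and follows the paper's structure step for step: invariance of $\KL$ under $\Psi$ with the chain rule, contraction of the noise stage at fixed $b$, data processing through $S_b$, and convexity over $b$. The finite-start part also matches: the closed-form first-step bound $\tfrac12\sum_i[c_i^2h_i^2-c_i^2-\log(1-c_i^2)]$, monotonicity in $c_i^2$, iteration, and Pinsker.

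The one genuine difference is the lemma you use for unequal, $b$-dependent coefficients. The paper factors the diagonal noise stage at fixed $b$ into an isotropic OU kernel with coefficient $c_\star$, followed by coordinatewise OU kernels with coefficients $c_i(b)/c_\star\in[-1,1]$. The second factor preserves $\gauss_d$, so data processing reduces everything to the isotropic Gaussian log-Sobolev bound, and negative or zero coefficients are covered automatically.

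Your Bakry--\'Emery route is also valid. For $L=\sum_i\lambda_i(\partial_i^2-x_i\partial_i)$ one computes $\Gamma_2(f)=\sum_{i,j}\lambda_i\lambda_j(\partial_{ij}f)^2+\sum_i\lambda_i^2(\partial_i f)^2\ge\min_i\lambda_i\,\Gamma(f)$. This gives the factor $e^{-2\min_i\lambda_i}=\max_i c_i(b)^2\le c_\star^2$. It needs a little more bookkeeping: your sign flip for $c_i(b)<0$, and a separate remark for $c_i(b)=0$, where the rate is infinite. For the latter, you can realize a full refresh of those coordinates as a $\gauss_d$-preserving kernel applied after a capped OU step. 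The paper's factorization is the more elementary argument, while yours works with the anisotropic semigroup directly.

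One small slip in the finite-start step: pointwise norm preservation gives $\|h\|^2=\|z\|^2+\|\alpha\|^2-\|b\|^2$, so $\Exp\|h\|^2\le M_2+m$ rather than equality. Only the inequality is needed, so $B_\star$ is unaffected.
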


\emph{Proof sketch.} After $\Psi$, the KL chain rule separates the residual action $b$ from
the conditional state $h\mid b$. At fixed $b$, Gaussian entropy contraction and post-map data processing
give $c_\star^2$; convexity removes $b$. Norm preservation bounds $\Exp\|h\|^2$ by $M_2+m$.
Convexity bounds first-step KL by $B_\star$, even from atomic initial laws. Iteration and Pinsker give
\eqref{eq:absolute-main} (Appendices~\ref{app:joint-entropy}--\ref{app:finite-warm-start}).

For a specified finite initialization bank, $M_2$ is exactly computable; a population certificate
requires a population moment bound. The optional reference mixture gives the independent bound $2(1-r)^T$ for \emph{any} initial
law and strengthens the entropy contraction to $(1-r)c_\star^2$.
Across the fitted initialization banks in \S\ref{sec:exp:recurrent},
\eqref{eq:absolute-main} becomes informative after $3\times10^3$--$10^4$ steps.
At $10^5$ steps, the reference-mixture bound puts the population marginal gap below
$1.5\times10^{-43}$, far below evaluation sampling error. The terminal comparison therefore
tests the predicted static level; intermediate horizons assess the finite-start bound.
These are real-arithmetic guarantees, with numerical scope detailed in
Appendix~\ref{app:conditional-numerics}.

\paragraph{Policy deviation in physical action space.}
For a bijective exact chart, divergence is invariant under its action transformation.
Let $\lambda_t=\operatorname{Law}(Z_t,O_t)$, with state marginal $\mu_t$, and define
$H_t=\KL(\mu_t\|\gauss_d)$ and
$\kappa_t=\Exp_{\lambda_t}\KL(\pi_t(\cdot\mid z,o)\|\pi_b(\cdot\mid z,o))$.
The transition reads $o$ only through the action innovation.
For finite $H_0$, the joint-input argument gives
\begin{equation}
H_{t+1}\le c_\star^2(H_t+\kappa_t),\qquad
H_T\le c_\star^{2T}H_0+\sum_{t<T}c_\star^{2(T-t)}\kappa_t.
\label{eq:policy-kl-main}
\end{equation}
Pinsker gives a marginal TV bound. When both policies are nonsingular tanh-Gaussians with the
same tanh transform, their conditional KL is the Gaussian KL before tanh; deterministic action
selection has infinite KL against this behaviour law. The bound concerns departure from the
behaviour reference, not predictive error. Fitted charts require divergence to their induced policy
(Appendix~\ref{app:policy-physical}; numerics in Figure~\ref{fig:certificate}).

\section{Experiments}
\label{sec:experiments}

We evaluate static marginal predictions, finite-start bounds, conditional fit, and planning.
A chain \emph{escapes} if its state or decoded observation becomes non-finite; recurrent evaluation
also stops a chain when any latent coordinate exceeds $\pm10^6$. \emph{Survival} is the fraction
remaining at the stated horizon, a numerical criterion distinct from distributional fidelity.
We report negative log-likelihood (NLL), mean-squared error (MSE), and marginal energy statistics;
standard errors (SEs) distinguish evaluation sampling from training-seed variation
(Appendix~\ref{app:refbias}).

Pixel studies use the DeepMind Control Suite \citep{tassa2018dmc}. The marginal statistic is
$\mathcal E(P,Q)=2\mathbb E\delta(X,Y)-\mathbb E\delta(X,X')-\mathbb E\delta(Y,Y')$,
with independent draws. The distance $\delta$ is normalized root-mean-square pixel distance in the
recurrent/native study; fixed-latent studies use Euclidean distances in the stated latent or pixel
coordinates (Appendix~\ref{app:refbias}). Finite unbiased estimates can be negative.
$D_{\rm ref}$ estimates the static level $D_\infty$; pixel scores use decoder means.

\subsection{Static prediction of the long-horizon marginal}
\label{sec:exp:plateau}

Can a model's limiting discrepancy be predicted before running it to a long horizon?
Figure~\ref{fig:plateau} compares static reference estimates with $10^5$-step rollouts.

\begin{figure}[t]
\centering
\includegraphics[width=\textwidth]{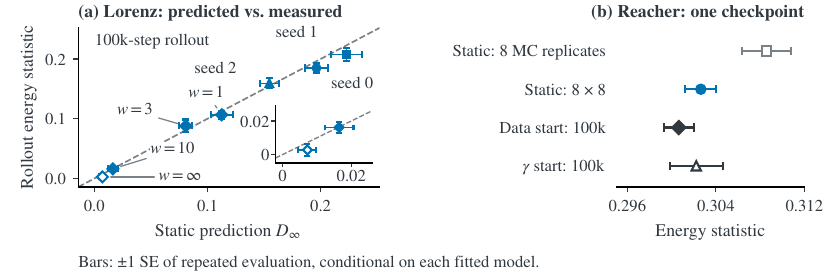}
\caption{\textbf{Static reference samples predict model-specific long-horizon discrepancies.}
\textbf{(a)} Seven Lorenz models span a $31\times$ range of predicted energy statistics;
$w$ is the marginal-loss weight.
The inset enlarges the two smallest values; an open marker denotes a value below evaluation
resolution. \textbf{(b)} One capped Reacher checkpoint, using Euclidean distance in 256-dimensional
frozen-autoencoder coordinates: the static estimate ($8$ Monte Carlo replicates), resampling
(eight 2,048-point clouds per reference block; SE over eight blocks), and data- and prior-initialized
$10^5$-step rollouts. The offset is consistent with static-estimate sampling variability.}
\label{fig:plateau}
\end{figure}

\paragraph{Model-specific levels.}
Seven constrained Lorenz-63 models---three training seeds and four marginal-weight fits---span a
$31\times$ range of static energy statistics. Each $10^5$-step measurement is within one nominal
combined SE of its prediction; this SE omits the unavailable prediction--rollout covariance.
Every constrained chain remains finite. An unconstrained flow improves held-out pair likelihood
by $0.83$--$0.99$ nats per dimension, yet only $14\%$--$94\%$ of its chains survive.
Survivor distances describe the retained subpopulation
(Appendix~\ref{app:seeds}).

\paragraph{Marginal fidelity and conditional fit.}\label{sec:exp:inversion}
At fixed architecture and training budget, raising the marginal-loss weight from $1$ to $10$
reduces long-horizon error by $6.6\times$ while worsening pair NLL by $0.19$ nats per dimension
(Appendix~\ref{app:wsweep}). Conditional and marginal objectives select different tradeoffs.
Soft multistep, Jacobian, and invariant-measure penalties \citep{schiff2024dyslim} are compared
in Appendix~\ref{app:stabilizers}.

\paragraph{Fixed-latent pixel models.}\label{sec:exp:tier2}
The sequential preserving transition retains every chain on twelve tasks with frozen
256-dimensional autoencoder latents. Flow matching retains every chain on ten tasks, with smaller
pixel distance on five and larger on five. The sequential model preserves fixed-action kernels
(data and fitting: Appendices~\ref{app:screen}--\ref{app:fitting-budgets}; outcomes and alternate
metrics: Appendices~\ref{app:fixed-latent-results} and~\ref{app:metrics}). For the capped Reacher
checkpoint in Figure~\ref{fig:plateau}b, resampling and independent rollout replication support
sampling variability as the explanation of the static offset
(Appendices~\ref{app:static-resampling} and~\ref{app:reacher-replication}).

\subsection{Finite-start bounds and conditional likelihood}

Do the computable bounds hold for singular initial laws and known policy shifts, and what do they
leave unconstrained? We first check the implemented inequalities, then examine fitted conditional densities.

\begin{figure}[t]
\centering
\includegraphics[width=\textwidth]{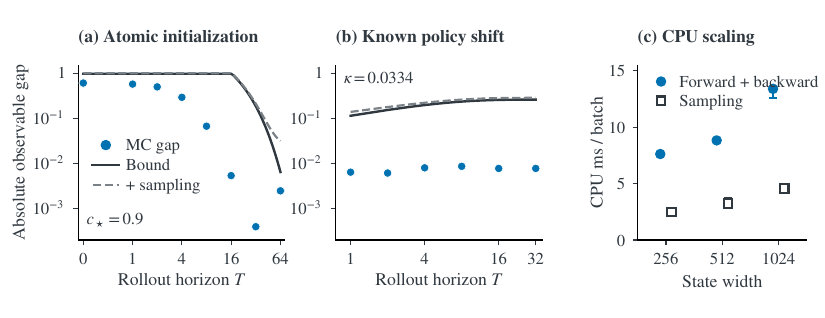}
\caption{\textbf{Finite-start and policy-shift checks for the parallel transition.}
\textbf{(a)} A 16-point initialization bank supplies the moment in \eqref{eq:absolute-main}.
\textbf{(b)} A known policy shift has conditional action KL $\kappa=0.0334$.
Dots are Monte Carlo gaps for $f(z)=\exp(-\|z\|^2/16)$, with 8,192 independent chains per
horizon. Solid curves are population bounds; dashed curves add a simultaneous $99\%$ Hoeffding
allowance. \textbf{(c)} Median forward/backward and sampling times per batch of 32, on four CPU threads,
with min--max whiskers over five repetitions (prior only).
These are randomized-kernel diagnostics without dynamics fitting (Appendix~\ref{app:cpu-diagnostics}).}
\label{fig:certificate}
\end{figure}

\paragraph{Atomic starts and policy shift.}
A finite initialization bank has infinite KL to a Gaussian but a known second moment.
Across contraction caps, reset probabilities, and tanh-Gaussian policy shifts, all $114$
bounded-observable comparisons satisfy the population bounds with the simultaneous sampling
allowance (Figure~\ref{fig:certificate}). The diagnostics in Appendix~\ref{app:atomic-diagnostics} check the implementation of the proved
inequalities through bounded observables; predictive accuracy is evaluated separately below.
Panel (c) isolates prior computation: sampling takes $2.50$--$4.57$ ms per batch of 32 as
state width increases from $256$ to $1024$. These CPU timings exclude the decoder and simulator
(Table~\ref{tab:cpu-scaling}).

\paragraph{Conditional fit in frozen-latent models.}
The frozen-latent sequential-scan checkpoints illustrate what preservation leaves unconstrained.
On $479{,}968$ held-out Reacher transitions, the cap reduces maximum conditional NLL from
$1.41\times10^8$ to $193$ nats per dimension (Appendix~\ref{app:conditional-tails}).
The capped Swimmer model's median remains $294$. Adding the reference mixture without retraining
reduces that median to $6.79$, but the reference branch dominates $98.3\%$ of transition densities.
Most of this reduction in scored loss comes from the static reference component
(Table~\ref{tab:cpu-guard}, Appendix~\ref{app:reference-mixture}).

\subsection{Controlled occupancy and planning}
\label{sec:exp:controlled}

Does preserving occupancy avoid the restrictions of preserving every fixed-action kernel?
With independent Gaussian behaviour actions ($\tau=0$), the stationary action-driven
system has a fixed-action-preserving
conditional-loss floor $-\tfrac12\log(1-\varrho^2)$, where $\varrho$ controls the
action-to-state correlation. Across four $\tau=0$ settings spanning a $25\times$ range of floors,
its learned excess loss is $0.99$--$1.00$ times the prediction, while the occupancy model's
excess is $0.0011$ nats (Appendix~\ref{app:controlled-gaussian}).
Control also depends on the fitted dynamics. On a controlled Duffing oscillator,
occupancy-based cross-entropy-method (CEM) planning approaches the oracle at short horizons but
incurs mean excess cost $0.242$ at horizon $500$; the Gaussian head's excess is $0.001$.
The per-action comparison includes differences in fit and transition size
(Appendix~\ref{app:planning}).

\subsection{Jointly trained recurrent pixel models}
\label{sec:exp:recurrent}

Does the separation persist when the representation, inference, and dynamics are learned jointly?
We train six arms on twelve screened control tasks with three paired training seeds, yielding
$216$ checkpoints. All share the observation, recursive posterior, and reward networks.
The occupancy transition uses joint state--action mixing; a conditioner-only ablation removes
that mixing. Four additional arms test a Gaussian transition, conditional RealNVP, latent
overshooting, and an invariant-measure penalty. The latter two test multistep and marginal
regularization against structural preservation. Collection uses a fixed frame-conditioned
tanh-Gaussian behaviour policy (Appendix~\ref{app:gpu-protocol}).

\paragraph{Comparison protocol.}
The arms use common data and paired seeds, but differ in achieved conditional fit.
None of the five alternative families meets the suite-wide validation matching criteria
(prediction-error, likelihood, parameter, and compute gates in Appendix~\ref{app:recurrent-evaluation}).
We therefore report fitted-model contrasts under these common budgets.
A same-checkpoint output-gain intervention meets these criteria, with gain $1.001$
selected on validation data. Table~\ref{tab:recurrent-main} reports equal-task means; the
nominal $95\%$ percentile intervals resample three paired seeds within each fixed task, assuming
independent task--seed cells. SEs follow the same convention; Appendix~\ref{app:refbias}
examines the reused seed IDs and small-sample calibration.

\begin{table}[ht]
\centering
\caption{\textbf{Conditional accuracy and long-horizon stability across recurrent models.}
Equal-task means, twelve tasks $\times$ three paired seeds; NLL, MSE, and gaps show training-seed SE.
NLL is a posterior-initialized particle score under recorded actions, in nats per pixel channel.
$D_{\rm ref}$ scores decoded Gaussian reference states, a limiting level only for preserving
models under their convergence assumptions. ``Escape'': no gap estimated after censoring.
The last row scales the fitted occupancy model's output. Bold marks best primary-arm means;
the signed gap has no directional optimum.}
\label{tab:recurrent-main}
\setlength{\tabcolsep}{3.3pt}
\begin{tabular}{@{}lcccc@{}}
\toprule
Transition / objective & Survival & Obs.\ NLL $\downarrow$ & 10-step MSE $\downarrow$ & $D_{10^5}-D_{\rm ref}$ \\
\midrule
Occupancy & \textbf{1.0000} & $0.6843\!\pm\!0.6832$ & $0.1271\!\pm\!0.0211$ & $-0.0002\!\pm\!0.0003$ \\
Conditioner only & 0.9999 & $\mathbf{0.5648\!\pm\!0.5422}$ & $0.1416\!\pm\!0.0247$ & Escape \\
Gaussian & 0.5149 & $1.7013\!\pm\!1.1747$ & $\mathbf{0.0708\!\pm\!0.0218}$ & Escape \\
Conditional RealNVP & 0.2394 & $3.1391\!\pm\!1.5809$ & $0.1119\!\pm\!0.0264$ & Escape \\
Latent overshooting & 0.3380 & $7.0333\!\pm\!1.0566$ & $0.1692\!\pm\!0.0212$ & Escape \\
Invariant penalty & 0.4399 & $4.0644\!\pm\!1.6440$ & $0.1120\!\pm\!0.0327$ & Escape \\
\midrule
Occupancy $\times 1.001$ & 1.0000 & $0.6845\!\pm\!0.6832$ & $0.1273\!\pm\!0.0211$ & $-0.0003\!\pm\!0.0003$ \\
\bottomrule
\end{tabular}
\end{table}

\paragraph{Long-horizon survival and marginal agreement.}
The occupancy arm retains all $294{,}912$ evaluated chains across its $36$ task--seed cells.
Its task-mean static discrepancies span $0.0033$--$0.0758$, with mean rollout-minus-reference
gap $-0.0002\pm0.0003$ (SE; Figure~\ref{fig:marginal-gap}). The Gaussian, conditional RealNVP, latent-overshooting, and
invariant-penalty arms have survival fractions $0.51$, $0.24$, $0.34$, and $0.44$.
All four paired survival contrasts against occupancy have intervals excluding zero
(Figure~\ref{fig:survival-forest}). All three hopper stand Gaussian cells survive with marginal
gaps of $1.10$, $1.06$, and $-0.16$.
The conditioner-only arm reaches survival $0.99993$, with $21$ threshold crossings in one
Swimmer cell; its extreme posterior initialization is examined below.

\begin{figure}[t]
\centering
\includegraphics[width=\textwidth]{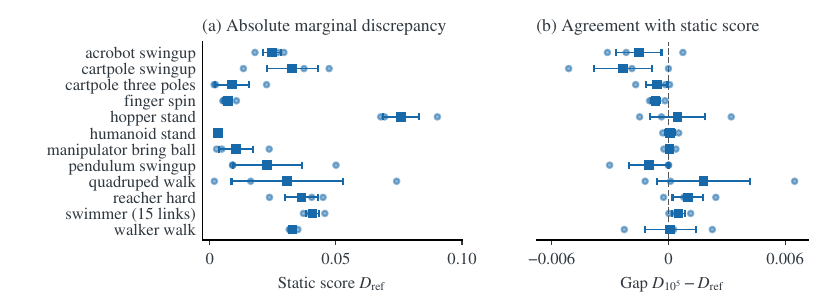}
\caption{\textbf{Absolute marginal discrepancy and static-prediction agreement.}
\textbf{(a)} Occupancy reference energy statistics against data, using normalized RMS pixel distance.
\textbf{(b)} Rollout-minus-reference discrepancies against the same data.
Points: three training seeds; squares: task means $\pm$ SE. The equal-task gap is
$-0.0002$, with SE $0.0003$ for independent task--seed cells and $0.0006$ when
grouping the reused seeds (Appendix~\ref{app:refbias}).}
\label{fig:marginal-gap}
\end{figure}

\paragraph{Conditional accuracy remains distinct.}
The Gaussian arm has the smallest measured mean ten-step MSE among the six arms (contrast against occupancy
$-0.0563$ $[-0.1067,-0.0024]$), despite losing almost half its chains.
Conversely, ten occupancy checkpoints have observation NLL above $1$, reaching $17.37$ on
pendulum swingup, while retaining every chain with marginal gaps within $\pm0.007$.
The conditioner-only variant also exceeds NLL $1$ in the same ten task--seed cells.
Its $21$ threshold crossings occur at the first step from an initialization bank with second
moment $7.4\times10^{10}$, compared with $256$ for the Gaussian reference; its reference-initialized
control retains every chain. These measurements distinguish posterior initialization from
reference-law preservation (Appendix~\ref{app:posterior-excursion}). The conditioner-only model also has
higher ten-step MSE than occupancy (contrast $+0.0145$ $[0.003,0.026]$).

\paragraph{Output gain and architectural ablations.}
Multiplying each prior output by $1.001$ removes exact preservation but leaves survival unchanged
at $10^5$ steps. The changes are small: observation NLL $+0.00019$ $[0.00012,0.00028]$,
ten-step MSE $+0.00015$ $[-0.00015,0.00041]$, and marginal gap $-0.00016$
$[-0.00029,-0.00004]$. An exploratory gain of $1.003$ also retains every chain in all $36$
cells, while the worst-cell second-moment deviation grows from $0.15$ to $0.54$ to $6.9$
for gains $1$, $1.001$, and $1.003$. Moment diagnostics thus distinguish these models even when
finite-horizon survival is identical (Table~\ref{tab:gain1003}).
The selected gain covers all $36$ primary cells; six retrained cap, reset, and width variants
cover four priority tasks ($12$ cells each). All have survival $1.0$; across these scopes, mean
gaps range from $-0.00052$ to $+0.00047$.
The default reset probability is $r=10^{-3}$; the reset-free variant retains all chains in its
$12$ evaluated cells with gap $-0.00025\pm0.00035$
(Figure~\ref{fig:recurrent-results}, Appendices~\ref{app:gain-control} and~\ref{app:recurrent-results}).

\paragraph{Planning tests a stronger requirement.}
Of $348$ planning evaluations, $172$ fail because imagination produces fewer finite candidates
than required CEM elites. Primary-arm counts range from $14$ to $19$ failures among $36$ cells;
occupancy and conditioner-only each have $19$. Five tasks account for $77\%$ of all failures
(Figure~\ref{fig:crash-census}). On commonly completed cells, equal-task mean return contrasts
against occupancy range from $-0.0128$ to $+0.0089$; these descriptive comparisons condition on
completion. The native reference completes all $12$ evaluations and leads in three of the four
cells where both it and occupancy complete. Completion and return measure the additional
requirements of action-selected planning (Appendix~\ref{app:common-planning}).

\paragraph{A native DreamerV3 reference.}
An offline DreamerV3 baseline \citep{hafner2023dreamerv3} retains the authors' categorical recurrent
state and observation networks on four priority tasks. Exposure is matched at $25.6$M frames;
capacity and context differ ($8.85$M versus $3.26$--$3.29$M parameters; $64$ versus $16$ frames
per window), and wall time is not matched. With a common external Gaussian image score, it achieves lower observation NLL and ten-step MSE
($0.0024\pm0.00004$ versus $0.1206\pm0.0244$). Both models retain every chain at the shared
$10^3$-step horizon. The native model's $10^5$-step marginal was not measured, and our Gaussian
certificate does not apply to its categorical state. It supplies a strong conditional-prediction
reference alongside the long-horizon property studied here
(Table~\ref{tab:native-results}, Figure~\ref{fig:native-panel}, Appendix~\ref{app:native-baseline}).

\section{Related work}
\label{sec:related}

\paragraph{World models and rollout error.}
Latent world models support prediction and planning
\citep{ha2018world,hafner2019planet,hafner2020dreamer,hafner2021dreamerv2,hafner2023dreamerv3}.
Multistep objectives and distribution-shift mitigation address errors along generated trajectories
\citep{bengio2015scheduled,ross2011reduction,talvitie2017selfcorrecting,chen2024diffusionforcing},
while control-oriented analyses distinguish likelihood from task-relevant fidelity
\citep{lambert2020objective}. Our constraint acts on the marginal law itself, complementing
objectives that improve conditional predictions. Occupancy methods estimate stationary-distribution
corrections for off-policy evaluation \citep{nachum2019dualdice,zhang2020gendice}; here a conditional
action chart makes occupancy preservation an identity of the learned generative model.

\paragraph{Invariant measures and structured dynamics.}
Hamiltonian, Lagrangian, symplectic, and equivariant networks encode structure in learned dynamics
\citep{greydanus2019hamiltonian,cranmer2020lagrangian,jin2020sympnets,cohen2016group,satorras2021egnn}.
Invariant-measure objectives fit long-run statistics
\citep{li2021dissipative,jiang2023invariant,schiff2024dyslim,cheng2025pfnn}.
We instead learn coordinates with an exactly preserved latent reference and bound relaxation to it.
Target-preserving transitions also appear in learned Markov chain Monte Carlo samplers
\citep{song2017anicemc,hoffman2019neutra}; a world model additionally fits the observed conditional
dynamics. Koopman-operator projections
\citep{koopman1931hamiltonian,williams2015edmd} provide finite-basis diagnostics
(Appendix~\ref{app:operator-estimates}); our certificate follows from the architecture directly.

\paragraph{Flows, conditional transport, and mixing.}
Normalizing flows provide invertible density representations
\citep{dinh2015nice,dinh2017realnvp,kingma2018glow,papamakarios2017maf,kingma2016iaf}.
We identify which block-affine couplings preserve a Gaussian reference exactly; density checks,
finite-network capacity, and copula examples appear in Appendices~\ref{app:rigidity},
\ref{app:finite-capacity}, and~\ref{app:chartsub}.
Quantile constructions connect this constraint to measure-preserving geometry and copulas
\citep{arnold1998topological,nelsen2006copulas}, and the action chart uses conditional
Rosenblatt transport \citep{rosenblatt1952}. Gaussian entropy contraction \citep{gentil2010}
applied to the joint state--action innovation yields our finite-start and policy-deviation bounds.

\section{Conclusion and limitations}
\label{sec:limitations}

An explicitly preserved latent reference makes a model's long-horizon marginal available by
static sampling. Joint state--action mixing fits controlled dynamics under this constraint,
while parallel noise supplies finite-start bounds. The experiments distinguish marginal
predictability, conditional accuracy, and planning.

\paragraph{Limitations.}
Stationarity and the action-chart conditions delimit the occupancy interpretation; approximate
charts and policy shifts require error control (\S\ref{sec:certificates}). Finite-start bounds
use bank-specific moments and real arithmetic. Architecture contrasts combine preservation with
achieved fit; the matched intervention isolates output scaling. The measured planning completion
rates and posterior excursions motivate improved inference and finite-candidate planning
(Appendix~\ref{app:limitations}).

\label{wm:main-end}
\clearpage
\subsection*{AI use statement}
Generative AI assisted code development, manuscript writing, mathematical and reference checks, the authors reviewed and verified all AI-assisted work.

\subsection*{Reproducibility statement}
Appendix~\ref{app:experiments} specifies the systems, data splits, training budgets, and evaluation
procedures. The theoretical supplement states assumptions and provides proofs. Detailed experimental
sections report per-seed results, estimator conventions, and diagnostic controls. Experiment and analysis
scripts generate the reported statistics and tables from saved outputs.

\bibliography{refs}
\bibliographystyle{iclr2027_conference}
\clearpage
\appendix
\raggedbottom
\section{Representation, preservation, and relaxation proofs}
\label{app:proofs}

Throughout, $\gauss_d$ is the standard Gaussian law on $\mathbb R^d$;
$\varphi$ and $\Phi$ are its scalar density and cumulative distribution function (CDF). A diffeomorphism is
$C^1$ with a $C^1$ inverse. State, action, and observation spaces are standard
Borel, so the conditional laws used below exist. Total variation uses the
signed-measure $[0,2]$ convention of Section~\ref{sec:certificates}.
Reference preservation is a property of a specified kernel, while accuracy
of its decoded reference is a separate statistical question.

\subsection{Transport of stationary pair laws}
\label{app:transport}

\begin{definition}[Reference-preserving kernel]\label{def:pp}
A state kernel $\tilde K$ preserves the reference if
$\gauss_d\tilde K=\gauss_d$. For a controlled kernel, we distinguish this
property after behaviour-policy averaging from the distinct requirement
that every fixed-action kernel preserve $\gauss_d$.
\end{definition}

\begin{proposition}[Transport of the law]\label{prop:C}
Let $T$ be a bimeasurable bijection with $T_\#\mu = \gauss$, and let $\tilde K$ be the pushforward of
$K$ under $T$. Then $\mu K = \mu \iff \gauss\tilde K = \gauss$.
\end{proposition}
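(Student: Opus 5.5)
The plan is to define the pushforward kernel concretely and then reduce the claim to a single change-of-variables identity for measures. Define $\tilde K(y,B)=K\bigl(T^{-1}(y),T^{-1}(B)\bigr)$ for $y$ in the target space and Borel $B$. Because $T$ is bimeasurable, $T^{-1}(B)$ is measurable and $y\mapsto K(T^{-1}(y),T^{-1}(B))$ is a measurable function. So $\tilde K$ is a genuine Markov kernel, and it is the kernel of $T(X')$ given $T(X)=y$.

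The key step is the intertwining identity $(T_\#\nu)\tilde K=T_\#(\nu K)$, which holds for every probability measure $\nu$. To verify it, fix Borel $B$ and compute
\[
(T_\#\nu)\tilde K(B)=\int K\bigl(T^{-1}(y),T^{-1}(B)\bigr)\,(T_\#\nu)(\dd y)=\int K\bigl(x,T^{-1}(B)\bigr)\,\nu(\dd x)=(\nu K)\bigl(T^{-1}(B)\bigr).
\]
The middle equality is the abstract change-of-variables formula $\int f\,\dd(T_\#\nu)=\int f\circ T\,\dd\nu$, applied to the bounded measurable function $f(y)=K(T^{-1}(y),T^{-1}(B))$; it uses $T^{-1}(T(x))=x$. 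The right-hand side is $T_\#(\nu K)(B)$.

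Applying the identity with $\nu=\mu$ and using $T_\#\mu=\gauss$ gives $\gauss\tilde K=T_\#(\mu K)$. For the forward direction, if $\mu K=\mu$ then $\gauss\tilde K=T_\#\mu=\gauss$. For the converse, suppose $\gauss\tilde K=\gauss$, so $T_\#(\mu K)=T_\#\mu$. Pushing both sides forward by the measurable map $T^{-1}$ gives $\mu K=\mu$. This last step is where bijectivity matters: $T^{-1}_\#T_\#=\mathrm{id}$ on measures, so $T_\#$ is injective.

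The main obstacle is measure-theoretic bookkeeping rather than any genuine difficulty. One must check that $\tilde K$ is measurable in its first argument, which uses measurability of $T^{-1}$, and that the identity holds on all Borel sets, which fully determines the measures. If $T$ were only defined $\mu$-almost everywhere, one would restrict to a full-measure set on which $T$ is a bimeasurable bijection and define $\tilde K$ arbitrarily (say, as $\gauss$) off its image. This is harmless because that complement is $\gauss$-null.
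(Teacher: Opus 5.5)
Your proof is correct and is essentially the paper's argument: the paper performs the same change-of-variables computation $(\gauss\tilde K)(B)=(\mu K)(T^{-1}B)=T_\#(\mu K)(B)$ directly at $\nu=\mu$ and then invokes bijectivity of $T$, whereas you prove the intertwining identity for arbitrary $\nu$ and make the injectivity of $T_\#$ explicit through $T^{-1}_\#$. Your closing remark on almost-everywhere defined $T$ goes beyond the statement and would additionally require $K(x,\cdot)$ to charge only the domain of $T$ for $\mu$-a.e.\ $x$; otherwise pushing $K(x,\cdot)$ forward by $T$ loses mass and $\tilde K$ is not a probability kernel.
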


\begin{proof}[Proof of Proposition~\ref{prop:C}]
For measurable $B$,
\[
  (\gauss\tilde K)(B) = \int \tilde K(w,B)\,\gauss(\dd w)
  = \int K(z, T^{-1}B)\,\mu(\dd z) = (\mu K)(T^{-1}B) = T_\#(\mu K)(B),
\]
using $T_\#\mu = \gauss$ and the definition $\tilde K(T z,\cdot) := T_\#K(z,\cdot)$. Since $T$ is a
bijection, $T_\#(\mu K) = \gauss = T_\#\mu$ iff $\mu K = \mu$.
\end{proof}

\begin{proposition}[Representation of stationary pair laws]\label{prop:exact-class}
Let $\mathcal X$ be a standard Borel space and let $\mathcal P_=$ be the probability laws on
$\mathcal X\times\mathcal X$ with equal atomless marginals.
As $E$ ranges over measure-space isomorphisms to the Gaussian reference, modulo null sets, and
$\tilde K$ over Gaussian-preserving kernels, the induced laws of
$(E^{-1}(z),E^{-1}(z'))$ range over precisely $\mathcal P_=$.

For a smooth realization, suppose $\mathcal X=\mathbb R^d$ and the common marginal has strictly positive
conditional densities on the full real coordinate line. Assume each conditional CDF $F_i(x_i\mid x_{<i})$ is $C^k$ jointly in its arguments,
$k\ge1$, has positive derivative in $x_i$, and tends to $0$ and $1$ at the two ends of each conditional
support. Then
$E_i(x)=\Phi^{-1}(F_i(x_i\mid x_{<i}))$ defines a $C^k$ diffeomorphism onto $\mathbb R^d$.
When the pair law has a density, its representation is
\[
p(x,x')=\gauss(Ex)|\det J_E(x)|\,
          \tilde p(Ex'\mid Ex)|\det J_E(x')|.
\]
\end{proposition}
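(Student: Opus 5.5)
The proof has three parts: the measurable characterization (two inclusions), the smooth realization, and the density formula.

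\emph{Forward inclusion.} Let $E$ be an isomorphism with $E_\#\mu=\gauss_d$, and let $\tilde K$ preserve $\gauss_d$. Take $z\sim\gauss_d$ and $z'\sim\tilde K(z,\cdot)$. Then $z'\sim\gauss_d$, so both $E^{-1}(z)$ and $E^{-1}(z')$ have law $E^{-1}_\#\gauss_d=\mu$. This is the transport identity of Proposition~\ref{prop:C} read in reverse. The common marginal is atomless because $\gauss_d$ is atomless and $E^{-1}$ is injective off a null set.

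\emph{Reverse inclusion.} Let $P\in\mathcal P_=$ have common atomless marginal $\mu$ on a standard Borel space. By the isomorphism theorem for standard probability spaces, $(\mathcal X,\mu)$ is isomorphic mod null sets to $([0,1],\mathrm{Leb})$. The latter is isomorphic to $(\mathbb R^d,\gauss_d)$, for instance via coordinatewise $\Phi^{-1}$ after a Borel isomorphism $[0,1]\cong[0,1]^d$ preserving Lebesgue measure. Composing gives $E$ with $E_\#\mu=\gauss_d$. Disintegrate the pushed pair law $(E\times E)_\#P$ as $\gauss_d(\dd z)\tilde K(z,\dd z')$; this is possible on standard Borel spaces. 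The second marginal equals $E_\#\mu=\gauss_d$, so $\tilde K$ preserves $\gauss_d$. Pulling back by $E^{-1}$ recovers $P$ modulo null sets. The main obstacle is the null-set bookkeeping: $E$ is only defined a.e., so I would fix a full-measure Borel set on which $E$ is a Borel bijection onto a full-measure set. Both marginals put zero mass on the complement, so the pushforward and pullback agree with $P$ as measures.

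\emph{Smooth realization.} This is the Knothe--Rosenblatt map followed by the Gaussian quantile. Under the stated assumptions, each $F_i(\cdot\mid x_{<i})$ is a $C^k$ strictly increasing bijection $\mathbb R\to(0,1)$. Hence $E_i=\Phi^{-1}\circ F_i$ is $C^k$, and $x_i\mapsto E_i$ is a bijection of $\mathbb R$ for fixed $x_{<i}$. The Jacobian is triangular with diagonal entries
\[
\partial_{x_i}E_i=\frac{p_i(x_i\mid x_{<i})}{\varphi(E_i(x))}>0,
\]
so it is invertible everywhere. Bijectivity onto $\mathbb R^d$ follows by solving triangularly: given $y$, determine $x_1$ from $y_1$, then $x_2$ from $y_2$ and $x_1$, and so on. The inverse function theorem then makes $E^{-1}$ $C^k$. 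The pushforward is $\gauss_d$ because the conditional law of $E_i$ given $x_{<i}$ is standard normal and does not depend on $x_{<i}$ (probability integral transform). Therefore the coordinates are i.i.d.\ $\mathcal N(0,1)$.

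\emph{Density formula.} The pushed pair law has density $\gauss(z)\tilde p(z'\mid z)$. The change of variables under the diffeomorphism $E\times E$ multiplies this by $|\det J_E(x)||\det J_E(x')|$, which gives the displayed formula.
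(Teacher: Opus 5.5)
Your proposal is correct and follows the paper's proof essentially step for step. The forward inclusion pushes the Gaussian marginals back through $E^{-1}$, the converse uses the measure isomorphism theorem plus disintegration, and the smooth part uses the triangular Knothe--Rosenblatt map (positive-diagonal Jacobian, successive inversion, inverse function theorem) followed by change of variables; your explicit null-set bookkeeping, detour through $[0,1]$, and diagonal entries $p_i/\varphi(E_i)$ only add detail the paper leaves implicit.
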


\begin{proof}
A Gaussian-preserving kernel gives a pair law whose two marginals are $\gauss$.
Applying $E^{-1}$ to each coordinate therefore gives equal marginals.
Conversely, an atomless probability measure on a standard Borel space is isomorphic, modulo null sets,
to the Gaussian probability space \citep[Theorem~17.41]{kechris1995}.
Transporting a stationary pair through this isomorphism gives two
Gaussian marginals. Disintegrating that pair yields a Gaussian-preserving kernel, and transporting back
recovers the original pair law.

Under the smooth assumptions, the conditional probability integral transform
\citep{rosenblatt1952} maps the common marginal to independent uniforms, and $\Phi^{-1}$ maps these to
$\gauss_d$. Each coordinate is strictly increasing in $x_i$ and onto $\mathbb R$ for fixed $x_{<i}$.
Coordinates can therefore be inverted successively for every point of $\mathbb R^d$.
The Jacobian is triangular with positive diagonal, and the inverse function theorem gives a $C^k$
inverse locally at every point. Together with global bijectivity, this establishes the claimed
diffeomorphism. Applying the change-of-variables formula to both coordinates gives the density expression.
\end{proof}

\begin{remark}[Transport regularity and finite architectures]\label{rem:exact-class-reg}
The measurable statement characterizes pair laws; the differentiable statement supplies their likelihood
formula. Joint smoothness of conditional CDFs is an explicit hypothesis. Locally uniform integrable bounds
on density derivatives suffice to justify differentiation of the marginal integrals; smoothness of the
joint density alone need not suffice. Global bounds on the Jacobian or its inverse are additional
assumptions only where stated.

These are statements about unrestricted transports and kernels. The frozen
feature experiments and the recurrent pixel model use different finite
representations; neither is asserted to be an exact chart of pixel space.
\end{remark}

\begin{corollary}[Exact representation with a perfect chart]\label{cor:R1}
For a stationary pair law with common marginal $\mu$ and a joint density, if $E$ is a
diffeomorphism with $E_\#\mu=\gauss$ exactly, the class of arbitrary Gaussian-preserving
transition kernels represents that density, with zero marginal and conditional excess loss.
Finite parameterizations retain their approximation restrictions.
\end{corollary}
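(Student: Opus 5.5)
The plan is to build the representing kernel directly from the given pair law. Let the stationary pair law have density $p(x,x')$ with common marginal $\mu$, and let $E$ be the diffeomorphism with $E_\#\mu=\gauss$. Define the kernel $\tilde K$ as the conditional law of $Ez'$ given $Ez$, where $(z,z')\sim p$. Equivalently, push the pair law forward by $E\times E$ and disintegrate it along its first coordinate. The joint law of $(Ez,Ez')$ then has both marginals equal to $\gauss$, because both coordinates of the pair have law $\mu$. Hence $\gauss\tilde K=\gauss$, and $\tilde K$ is a Gaussian-preserving kernel in the sense of Definition~\ref{def:pp}. By Proposition~\ref{prop:C}, transporting $\tilde K$ back through $E^{-1}$ gives a kernel that preserves $\mu$. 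Transporting back also recovers the original pair law exactly, since $E\times E$ is a bijection.

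Next I will verify the density representation. Because $E$ is a diffeomorphism and the pair has a density, $(Ez,Ez')$ has the joint density
\[
\tilde p(w,w')=p(E^{-1}w,E^{-1}w')\,|\det J_{E^{-1}}(w)|\,|\det J_{E^{-1}}(w')|.
\]
Its first marginal is $\gauss$, so the conditional density $\tilde p(w'\mid w)=\tilde p(w,w')/\gauss(w)$ is well defined for every $w$, since $\gauss>0$ everywhere. Changing variables back gives the formula in Proposition~\ref{prop:exact-class}:
\[
p(x,x')=\gauss(Ex)\,|\det J_E(x)|\,\tilde p(Ex'\mid Ex)\,|\det J_E(x')|.
\]
This identity holds exactly, not merely almost everywhere in a weaker sense, because the marginal factor is $\mu$'s density $\gauss(Ex)|\det J_E(x)|$.

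Zero excess loss then follows from the fact that the model's marginal density $\gauss(Ex)|\det J_E(x)|$ equals $\mu$'s density. Its conditional density $\tilde p(Ex'\mid Ex)|\det J_E(x')|$ equals $p(x'\mid x)$. So the marginal and conditional cross-entropies attain the data entropies, and both KL excesses are zero.

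The only delicate point is measure-theoretic: the disintegration is defined only up to $\gauss$-null sets. I will handle this by choosing the version given by the density ratio above. The final sentence of the corollary is a caveat rather than a claim, and needs only the remark that the construction uses an unrestricted kernel, as in Remark~\ref{rem:exact-class-reg}.
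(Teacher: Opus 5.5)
Your proposal is correct and follows the paper's route. Like the paper, you transport the stationary pair through $E\times E$, disintegrate to get a Gaussian-preserving conditional, and read off zero marginal and conditional excess from the change-of-variables formula; you reprove that part of Proposition~\ref{prop:exact-class} inline where the paper simply cites it. One small nit: the density-ratio version $\tilde p(w,w')/\gauss(w)$ integrates to one only for $\gauss$-a.e.\ $w$, so it is not "exact" for every $w$ as you claim; redefine it on that null set (e.g.\ as $\gauss_d$), which changes no expected loss.
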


\begin{proof}[Proof of Corollary~\ref{cor:R1}]
Transporting the true stationary pair through an exact chart gives its true
Gaussian-preserving conditional (Proposition~\ref{prop:exact-class}). The
unrestricted preserving-kernel class therefore attains zero excess marginal
and conditional loss. Restricting either transport or transition to a finite
parameterization can introduce approximation or optimization error.
\end{proof}

\subsection{Rigidity and preserving diffeomorphisms}
\label{app:preserving-diffeomorphisms}

\begin{proof}[Proof of the identity \eqref{eq:dagger}]
Since $T$ is a diffeomorphism and both densities are continuous, $T_\#\gauss = \gauss$ holds if and
only if the change-of-variables identity $\gauss(T(x))\,|\det J_T(x)| = \gauss(x)$ holds at every $x$.
Taking logarithms of both sides and cancelling the common constant $-\tfrac{d}{2}\log2\pi$ leaves
$-\tfrac12|T(x)|^2 + \log|\det J_T(x)| = -\tfrac12|x|^2$, which is exactly \eqref{eq:dagger}.
\end{proof}

\begin{corollary}[Volume and norm preservation]\label{cor:H4}
For a $\gauss$-preserving diffeomorphism, $|\det J_T|\equiv1\iff|T(x)|=|x|$ for all $x$.
\end{corollary}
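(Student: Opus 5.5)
The corollary follows directly from the identity \eqref{eq:dagger}, already established for any $\gauss$-preserving diffeomorphism $T$:
\[
\log|\det J_T(x)|=\tfrac12\bigl(\|T(x)\|^2-\|x\|^2\bigr)\quad\text{for every }x.
\]
Both directions of the equivalence come from reading this pointwise equality in each direction. No integration or global argument is needed, because \eqref{eq:dagger} holds at every point rather than almost everywhere. It holds everywhere because $T$ is a diffeomorphism and both Gaussian densities are continuous and strictly positive.

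\emph{Forward direction.} Assume $|\det J_T|\equiv1$. Then the left side of \eqref{eq:dagger} vanishes identically, so $\|T(x)\|^2=\|x\|^2$ for all $x$. Taking nonnegative square roots gives $\|T(x)\|=\|x\|$.

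\emph{Reverse direction.} Assume $\|T(x)\|=\|x\|$ for all $x$. Then the right side of \eqref{eq:dagger} vanishes, so $\log|\det J_T(x)|=0$. Since $T$ is a diffeomorphism, $\det J_T(x)\neq 0$, so the logarithm is finite and exponentiating gives $|\det J_T(x)|=1$ for all $x$.

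There is no real obstacle here. The only points to state explicitly are that \eqref{eq:dagger} is a pointwise identity and that the Jacobian determinant never vanishes, so taking logarithms and exponentiating is legitimate at every point.
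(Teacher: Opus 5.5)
Your proposal is correct and follows the paper's proof: both read the pointwise identity \eqref{eq:dagger} in each direction, noting that its left side vanishes exactly when its right side does. Your added remarks, that the identity holds at every point and that $\det J_T$ never vanishes, make explicit what the paper leaves implicit.
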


\begin{proof}[Proof of Corollary~\ref{cor:H4}]
Both directions are immediate from \eqref{eq:dagger}: the left side vanishes iff the right side does.
\end{proof}

\begin{proof}[Proof of the identity \eqref{eq:ddagger}]
The density of $T_\#\gauss$ at $w$ is $\gauss(T^{-1}w)|\det J_{T^{-1}}(w)|$. Substituting $w = T(z)$
with $z\sim\gauss$,
\[
\log\frac{\dd T_\#\gauss}{\dd\gauss}(T(z))
= \log\gauss(z) - \log|\det J_T(z)| - \log\gauss(T(z))
= \tfrac12\big(|T(z)|^2 - |z|^2\big) - \log|\det J_T(z)| ,
\]
and $\mathrm{KL}(T_\#\gauss\Vert\gauss) = \mathbb{E}_{w\sim T_\#\gauss}[\log\frac{\dd T_\#\gauss}{\dd\gauss}(w)]
= \mathbb{E}_{z\sim\gauss}[\cdots]$. Gibbs' inequality gives non-negativity and equality exactly when the two laws
agree. Continuity then gives the pointwise identity \eqref{eq:dagger}.
\end{proof}

\begin{lemma}[Relative entropy of volume-preserving maps]\label{lem:H5}
If $|\det J_T|\equiv1$ then $\mathrm{KL}(T_\#\gauss\Vert\gauss) = \tfrac12(\mathbb{E}_\gauss|T(z)|^2 - d)$.
\end{lemma}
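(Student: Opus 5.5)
We reuse the log-density-ratio identity established just above in the proof of \eqref{eq:ddagger}, and then take the expectation under $\gauss$. For a diffeomorphism $T$ and $z\sim\gauss$, that computation gives
\[
\log\frac{\dd T_\#\gauss}{\dd\gauss}(T(z))=\tfrac12\bigl(|T(z)|^2-|z|^2\bigr)-\log|\det J_T(z)|.
\]
Under the hypothesis $|\det J_T|\equiv1$, the Jacobian term vanishes identically. The log ratio is then just the radial displacement $\tfrac12(|T(z)|^2-|z|^2)$.

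Next we write the relative entropy as an expectation under the pushed-forward law, $\KL(T_\#\gauss\Vert\gauss)=\Exp_{w\sim T_\#\gauss}\log\frac{\dd T_\#\gauss}{\dd\gauss}(w)$. The substitution $w=T(z)$ with $z\sim\gauss$ turns this into $\Exp_{z\sim\gauss}\tfrac12(|T(z)|^2-|z|^2)$. Since $\Exp_\gauss|z|^2=d$, linearity gives the stated formula $\tfrac12(\Exp_\gauss|T(z)|^2-d)$.

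The only delicate point is integrability, and it is the step needing care rather than real difficulty. If $\Exp_\gauss|T(z)|^2=\infty$, we must check that both sides equal $+\infty$ and that splitting the expectation is legitimate. Here the negative part of the integrand is bounded by $\tfrac12|z|^2$, which is integrable under $\gauss$. The expectation of the log ratio is therefore well defined in $(-\infty,\infty]$, and the subtraction of the finite quantity $d$ is valid in the extended reals. No further assumptions are needed, since the density of $T_\#\gauss$ exists and is positive by the change-of-variables formula already used for \eqref{eq:dagger}.
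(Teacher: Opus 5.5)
Your proposal is correct and follows the same route as the paper: set $\log|\det J_T|=0$ in the identity $(\ddagger)$ and use $\Exp_\gauss|z|^2=d$. Your extra integrability step, which bounds the negative part of the integrand by $\tfrac12|z|^2$ so the identity also holds in the extended reals when $\Exp_\gauss|T(z)|^2=\infty$, is valid and covers a case the paper's one-line proof leaves implicit.
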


\begin{proof}
Substitute $\log|\det J_T| = 0$ into \eqref{eq:ddagger} and use $\mathbb{E}_\gauss|z|^2 = d$.
\end{proof}

For an exactly volume-preserving diffeomorphism, equality of the
\emph{population} second moment to $d$ makes this KL zero and therefore
implies Gaussian preservation. A near-zero Monte Carlo estimate does not
establish that equality: cancellation and sampling can conceal nonzero
pointwise residuals. Appendix~\ref{app:rigidity} reports both diagnostics.

\begin{lemma}[Scalar rigidity]\label{lem:scalar}
A $\gauss_1$-preserving diffeomorphism of $\mathbb{R}$ is $x\mapsto x$ or $x\mapsto -x$.
\end{lemma}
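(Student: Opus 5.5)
Since $T$ is a diffeomorphism of $\mathbb R$, it is strictly monotone. The plan is to apply the pointwise identity \eqref{eq:dagger} in one dimension, which turns it into an ODE, and then use the boundary behaviour forced by monotonicity to pin down the solution.

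\textbf{Increasing case.} Suppose first that $T$ is increasing. Then $T'>0$, and \eqref{eq:dagger} reads
\[
T'(x)\,\varphi(T(x))=\varphi(x) \qquad \text{for every } x .
\]
This is the derivative form of $\Phi(T(x))=\Phi(x)+C$. Because $T$ is an increasing bijection of $\mathbb R$, we have $T(x)\to-\infty$ as $x\to-\infty$. Both CDF terms therefore tend to $0$, which gives $C=0$. Since $\Phi$ is strictly increasing, $\Phi(T(x))=\Phi(x)$ forces $T(x)=x$.

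\textbf{Decreasing case.} If $T$ is decreasing, then $|T'|=-T'$, and \eqref{eq:dagger} integrates to
\[
\Phi(T(x))=1-\Phi(x)+C .
\]
Now $T(x)\to+\infty$ as $x\to-\infty$. Taking that limit, the left side tends to $1$ and the right side tends to $1+C$, so $C=0$. Using the symmetry $1-\Phi(x)=\Phi(-x)$ and injectivity of $\Phi$, we conclude $T(x)=-x$.

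\textbf{Alternative and main obstacle.} One could instead avoid the ODE by noting that $T_\#\gauss_1=\gauss_1$ for monotone $T$ already means $T$ is the monotone (quantile) rearrangement of $\gauss_1$ onto itself. The only such maps are the identity and, in the decreasing case, the reflection $x\mapsto -x$. The only real point of care in either route is justifying the sign of $T'$, that is, that a diffeomorphism of $\mathbb R$ has nonvanishing derivative of constant sign. This follows from continuity of $T'$ together with invertibility of the derivative, which is part of having a $C^1$ inverse. I do not expect a genuine obstacle.
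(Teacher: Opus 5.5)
Your proof is correct, but your main route is not the paper's; the paper's proof is the one you list as an alternative. It never uses the density identity. For strictly increasing $T$, the events $\{T(X)\le T(x)\}$ and $\{X\le x\}$ coincide, so $T_\#\gauss_1=\gauss_1$ gives $\Phi(T(x))=\Phi(x)$ at once. For decreasing $T$ the event becomes $\{X\ge x\}$, giving $\Phi(T(x))=1-\Phi(x)=\Phi(-x)$.

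You instead specialise $(\dagger)$ to $|T'(x)|\,\varphi(T(x))=\varphi(x)$ and integrate, which recovers the same two CDF identities up to an additive constant. You then remove the constant with the limit $x\to-\infty$, which uses that $T$ is onto.

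The paper's argument is shorter, needs no differentiability, and has no integration constant to fix. It therefore proves the rigidity for any strictly monotone map, in particular any homeomorphism, that pushes $\gauss_1$ to itself. Yours goes from the pushforward condition to the pointwise Jacobian identity and back to the CDF. Along the way it must use that $T'$ is continuous and nonvanishing, which you justify correctly from the $C^1$ inverse. In return, your version shows concretely that $(\dagger)$ becomes a separable ODE in one dimension. Its only solutions that are bijections of $\mathbb R$ are $x\mapsto\pm x$.
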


\begin{proof}
$T$ is a diffeomorphism of $\mathbb{R}$, hence strictly monotone. If increasing, $T_\#\gauss_1=\gauss_1$
forces $\Phi(T(x)) = \Phi(x)$ for all $x$ (equality of CDFs), so $T = \mathrm{id}$. If decreasing,
$\Phi(T(x)) = 1-\Phi(x) = \Phi(-x)$, so $T(x) = -x$.
\end{proof}

\begin{theorem}[Triangular rigidity]\label{thm:H1}
Let $T$ be a $\gauss_d$-preserving diffeomorphism whose Jacobian is triangular for a fixed coordinate
order. Then $T_i(x)=\sigma_ix_i$ with $\sigma_i\in\{\pm1\}$. The conclusion also applies to compositions that retain this common triangular order
(Corollary~\ref{cor:H1p}).
\end{theorem}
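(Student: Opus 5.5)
We argue by induction on the triangular structure, reducing to the scalar case of Lemma~\ref{lem:scalar} one coordinate at a time. Take the Jacobian to be lower triangular in the fixed order, so that $T_i(x)$ depends only on $x_{\le i}$ and $T$ is a triangular (Knothe--Rosenblatt type) map. The first coordinate $T_1$ is then a function of $x_1$ alone.

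\textbf{First coordinate.} The first marginal of $T_\#\gauss_d$ is the pushforward of the first marginal of $\gauss_d$ under $T_1$. This gives $(T_1)_\#\gauss_1=\gauss_1$. Because $T$ is a diffeomorphism with triangular Jacobian, $\det J_T=\prod_i\partial_iT_i$ never vanishes. Hence $T_1'\neq0$ everywhere, and $T_1$ is a diffeomorphism of $\mathbb R$ onto its image. Preservation of $\gauss_1$ forces the image to be all of $\mathbb R$ up to a null set, and an interval image then forces it to be all of $\mathbb R$. Lemma~\ref{lem:scalar} now gives $T_1(x)=\sigma_1x_1$.

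\textbf{Inductive step.} Suppose $T_j(x)=\sigma_jx_j$ for all $j<i$. Condition on $x_{<i}$ and use the disintegration of $\gauss_d$. Under $\gauss_d$, the coordinate $x_i$ given $x_{<i}$ is $\gauss_1$. The map $x_{<i}\mapsto(\sigma_jx_j)_{j<i}$ is a bijection that preserves $\gauss_{i-1}$. Equality of the joint laws of the first $i$ output coordinates with $\gauss_i$ therefore implies the following: for $\gauss_{i-1}$-a.e.\ $x_{<i}$, the conditional law of $T_i(x_{<i},x_i)$ is the conditional law of the output's $i$-th coordinate given the earlier outputs, which is again $\gauss_1$. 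So for a.e.\ fixed $x_{<i}$, the scalar map $t\mapsto T_i(x_{<i},t)$ pushes $\gauss_1$ to $\gauss_1$. Its derivative $\partial_iT_i$ is nonvanishing, so it is strictly monotone. Repeating the CDF argument of Lemma~\ref{lem:scalar} gives $T_i(x_{<i},t)=\sigma_i(x_{<i})t$ with $\sigma_i(x_{<i})\in\{\pm1\}$ for a.e.\ $x_{<i}$.

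\textbf{Extending to every point.} Continuity of $T_i$ upgrades ``almost every $x_{<i}$'' to every $x_{<i}$. The function $\sigma_i$ is continuous, because $\sigma_i(x_{<i})=\partial_iT_i(x)$, and $\partial_iT_i$ is continuous and never zero. A continuous $\{\pm1\}$-valued function on the connected space $\mathbb R^{i-1}$ is constant, which completes the induction.

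The main obstacle is the conditioning step: deducing the fibrewise statement $(T_i(x_{<i},\cdot))_\#\gauss_1=\gauss_1$ from the global pushforward identity. We would make it rigorous by comparing conditional CDFs. For bounded test functions $f(y_{<i})g(y_i)$, integrate against both sides of $T_\#\gauss=\gauss$ and substitute $y_{<i}=\sigma\odot x_{<i}$; uniqueness of disintegration then yields the a.e.\ fibre identity. Alternatively, one could use the pointwise identity \eqref{eq:dagger} directly, since $\log|\det J_T|=\sum_i\log|\partial_iT_i|$ for a triangular map, but the measure-theoretic induction seems cleaner. The extension to compositions with a common triangular order follows because such compositions are again triangular.
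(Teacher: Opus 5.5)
Your proposal is correct and takes essentially the same route as the paper. Both arguments use the lower-triangular ordering and obtain the first coordinate from its marginal via Lemma~\ref{lem:scalar}. The inductive step conditions on $x_{<i}$ (using that the earlier outputs are a sign-flip bijection of $x_{<i}$) to get an a.e.\ fibrewise identity $(T_i(x_{<i},\cdot))_\#\gauss_1=\gauss_1$. Continuity of $T_i$ and the constant sign of $\partial_iT_i$ on a connected domain then extend it to every point, and compositions are handled because the common triangular order is closed under composition. Your extra details (surjectivity of $T_1$, the test-function justification of the disintegration step) only make explicit what the paper leaves implicit.
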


\begin{proof}
Order coordinates so that $J_T$ is lower triangular; then $T_i$ depends only on $x_{1:i}$, and
$\partial_iT_i\ne0$ everywhere (else $\det J_T = 0$), so $T_i(x_{1:i-1},\cdot)$ is strictly monotone.
We show $T_i(x) = \sigma_ix_i$ by induction on $i$.

$i=1$: the first coordinate of $T_\#\gauss_d = \gauss_d$ is $\gauss_1$, and $T_1$ is a function of
$x_1$ alone with $x_1\sim\gauss_1$, so $(T_1)_\#\gauss_1 = \gauss_1$ and Lemma~\ref{lem:scalar}
gives $T_1 = \sigma_1x_1$.

Inductive step: assume $T_j = \sigma_jx_j$ for $j<i$. The map $x_{1:i}\mapsto T_{1:i}(x_{1:i})$ pushes
$\gauss_i$ to $\gauss_i$ (a marginal of $T_\#\gauss_d = \gauss_d$). Condition on
$x_{1:i-1} = \xi$; then $T_{1:i-1} = (\sigma_1\xi_1,\dots,\sigma_{i-1}\xi_{i-1})$ is determined, and
under $\gauss_i$ the conditional law of the $i$-th output coordinate given the first $i-1$ outputs is
$\gauss_1$ by independence. For almost every $\xi$, $T_i(\xi,\cdot)$ therefore pushes $\gauss_1$ forward to $\gauss_1$ and is a
monotone diffeomorphism of $\mathbb{R}$, so by Lemma~\ref{lem:scalar}
$T_i(\xi, x_i) = \sigma_i(\xi)x_i$ with $\sigma_i(\xi)\in\{\pm1\}$. The derivative $\partial_iT_i$ is continuous and nonzero on a connected
domain, so its sign is constant. Continuity extends the resulting identity
from almost every $\xi$ to every $\xi$.
\end{proof}

\begin{corollary}\label{cor:H1p}
A composition of triangular diffeomorphisms in a common fixed coordinate order
can preserve $\gauss_d$ only by a coordinate-wise sign flip. Depth and conditioner
expressiveness do not alter this conclusion.
\end{corollary}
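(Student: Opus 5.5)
The plan is to reduce the corollary to Theorem~\ref{thm:H1} by showing that composition keeps us inside the class that theorem covers. Let $T=T^{(L)}\circ\cdots\circ T^{(1)}$, where each $T^{(\ell)}$ is a diffeomorphism of $\mathbb R^d$ whose Jacobian is lower triangular in the same fixed coordinate order. First, $T$ is a diffeomorphism, since a composition of $C^1$ bijections with $C^1$ inverses is again such a map.

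Second, I will check that $J_T$ is lower triangular everywhere. By the chain rule,
\[
J_T(x)=J_{T^{(L)}}\bigl(y^{(L-1)}\bigr)\cdots J_{T^{(1)}}(x),\qquad y^{(\ell)}=T^{(\ell)}\circ\cdots\circ T^{(1)}(x).
\]
A product of lower-triangular matrices is lower triangular, so $J_T(x)$ is lower triangular at every $x$. An equivalent functional view: the $i$-th output of each factor depends only on the first $i$ inputs. Induction on $\ell$ then shows that the $i$-th output of the composition depends only on $x_{1:i}$.

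Third, if $T_\#\gauss_d=\gauss_d$, then $T$ satisfies every hypothesis of Theorem~\ref{thm:H1}, which gives $T_i(x)=\sigma_ix_i$ with $\sigma_i\in\{\pm1\}$. The individual factors are never assumed to preserve $\gauss_d$; only the composition is. So the conclusion does not depend on the number of layers $L$ or on how expressive the conditioners inside each $T^{(\ell)}$ are. This is the ``depth and expressiveness'' clause of the corollary.

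The only point needing care is that all factors share one ordering. If the permutation changed between layers, the product of Jacobians would no longer be triangular and Theorem~\ref{thm:H1} would not apply. I will state this explicitly, because it explains why the architecture in Section~\ref{sec:architecture} changes partitions and uses rotations rather than triangular couplings. Beyond that, the argument is routine and I expect no real obstacle.
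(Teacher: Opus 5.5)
Your proposal is correct and matches the paper's proof: the chain rule makes a composition in one fixed triangular order triangular again, so Theorem~\ref{thm:H1} applies to the whole map regardless of depth or conditioners. Your remark that changing the order between layers removes this hypothesis is the same caveat the paper states.
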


\begin{proof}
A composition in one fixed triangular order is again triangular, so
Theorem~\ref{thm:H1} applies to the complete map. Changing the order between
layers removes this hypothesis.
\end{proof}

\begin{theorem}[Block reduction]\label{thm:H2}
Let $T(u,v) = (u, g(u,v))$ be a $C^1$ diffeomorphism with $g(u,\cdot)$ a diffeomorphism of $\mathbb{R}^{n_v}$ for each
$u\in\mathbb{R}^{n_u}$. Then $T$ is $\gauss_{n_u+n_v}$-preserving iff $g(u,\cdot)$ is
$\gauss_{n_v}$-preserving for every $u$. In particular for $n_v = 1$, $g(u,v) = \sigma v$ with
$\sigma\in\{\pm1\}$ constant, so a coupling layer with scalar blocks is trivial.
\end{theorem}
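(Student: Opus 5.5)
The proof conditions on $u$. Since $T$ fixes the first block and $\gauss_{n_u+n_v}=\gauss_{n_u}\otimes\gauss_{n_v}$, the $u$-marginal of $T_\#\gauss$ is automatically $\gauss_{n_u}$. The whole question is therefore about the conditional law of the second block.

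\emph{Sufficiency.} Suppose $g(u,\cdot)_\#\gauss_{n_v}=\gauss_{n_v}$ for every $u$. For a bounded measurable $f$, Fubini gives
\[
\Exp f(T(u,v))=\int\!\!\int f(u,g(u,v))\,\gauss_{n_v}(\dd v)\,\gauss_{n_u}(\dd u)=\int\!\!\int f(u,w)\,\gauss_{n_v}(\dd w)\,\gauss_{n_u}(\dd u).
\]
Hence $T_\#\gauss=\gauss$.

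\emph{Necessity.} I will use the pointwise identity \eqref{eq:dagger} rather than a disintegration argument, because it yields the conclusion for \emph{every} $u$ and not just almost every $u$. The Jacobian of $T$ is block lower triangular with diagonal blocks $I$ and $\partial_vg(u,v)$, so $\det J_T(u,v)=\det\partial_vg(u,v)$. Since $\|T(u,v)\|^2-\|(u,v)\|^2=\|g(u,v)\|^2-\|v\|^2$, the identity \eqref{eq:dagger} for $T$ reads, at every point,
\[
\log|\det\partial_vg(u,v)|=\tfrac12\bigl(\|g(u,v)\|^2-\|v\|^2\bigr).
\]
Fix $u$. The hypothesis makes $g(u,\cdot)$ a diffeomorphism of $\mathbb R^{n_v}$, and the displayed equation is exactly \eqref{eq:dagger} for that map. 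The converse direction of \eqref{eq:dagger}, which is the change-of-variables equivalence for diffeomorphisms, then shows that $g(u,\cdot)$ preserves $\gauss_{n_v}$. This covers every $u$, so no null set has to be removed.

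\emph{The case $n_v=1$.} For each $u$, Lemma~\ref{lem:scalar} gives $g(u,v)=\sigma(u)v$ with $\sigma(u)\in\{\pm1\}$. Then $\sigma(u)=\partial_vg(u,v)$ is continuous in $u$, because $T$ is $C^1$, and it takes only the values $\pm1$. Since $\mathbb R^{n_u}$ is connected, $\sigma$ is constant.

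The main obstacle is ensuring that the conclusion holds for every $u$ and not merely almost every $u$. Working with the pointwise identity \eqref{eq:dagger} avoids this problem entirely. It also avoids a separate continuity argument in $u$ for general $n_v$; that argument is needed only to show that the sign $\sigma$ is constant.
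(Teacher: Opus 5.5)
Your proof is correct and follows essentially the same route as the paper. Both arguments use the pointwise change-of-variables identity: the block-triangular Jacobian and the additive split of the norm cancel the $u$-factor for every $u$, and Lemma~\ref{lem:scalar} together with continuity and connectedness settles $n_v=1$. The differences are cosmetic. You work with the forward-map form \eqref{eq:dagger} and give a separate Fubini argument for sufficiency, whereas the paper writes the pushforward density through $g_u^{-1}$ and obtains both directions from one density equality.
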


\begin{proof}
The density of $T_\#\gauss$ at $(u,w)$ is
$\gauss_{n_u}(u)\,\gauss_{n_v}(g_u^{-1}(w))\,|\det J_{g_u^{-1}}(w)|$, since the first block is
carried through unchanged. Equating to $\gauss_{n_u}(u)\gauss_{n_v}(w)$ and cancelling the strictly
positive factor $\gauss_{n_u}(u)$ gives, for every $u$,
$\gauss_{n_v}(g_u^{-1}(w))|\det J_{g_u^{-1}}(w)| = \gauss_{n_v}(w)$, which is exactly
$(g_u)_\#\gauss_{n_v} = \gauss_{n_v}$. For $n_v=1$, Lemma~\ref{lem:scalar} gives
$g(u,v) = \sigma(u)v$, and continuity plus connectedness makes $\sigma$ constant.
\end{proof}

\begin{proof}[Proof of Theorem~\ref{thm:H3}]
By Theorem~\ref{thm:H2}, $T$ is $\gauss$-preserving iff for every $u$ the affine map
$v\mapsto A(u)v + b(u)$ pushes $\gauss_{n_v}$ to $\gauss_{n_v}$. The pushforward of $\gauss_{n_v}$
under an affine map is the Gaussian with mean $b(u)$ and covariance $A(u)A(u)^\top$, and a Gaussian
is determined by its mean and covariance, so this holds iff $b(u) = 0$ and $A(u)A(u)^\top = I$.
Requiring it for \emph{every} value of the conditioner makes these identities in $u$. Then
$\log|\det J_T| = \log|\det A(u)| = 0$ since $|\det A| = 1$ for orthogonal $A$. Scalar-block rigidity follows from Theorem~\ref{thm:H2}.
\end{proof}

\begin{proof}[Proof of the quantile-swirl identity]
Let $T = \Phi^{-1}\circ S\circ\Phi$ with $S$ Lebesgue-preserving on $(0,1)^d$, and write
$v' = T(v)$, $\Phi(v) = q$, $S(q) = q'$. By the chain rule,
$J_T = J_{\Phi^{-1}}(q')\,J_S(q)\,J_\Phi(v)$, with $J_\Phi$ and $J_{\Phi^{-1}}$ diagonal. Hence
\[
\log|\det J_T| = \sum_i\log\varphi(v_i) + \log|\det J_S| - \sum_i\log\varphi(v_i')
= \tfrac12\big(|v'|^2 - |v|^2\big),
\]
using $|\det J_S| = 1$ and $\log\varphi(a) - \log\varphi(b) = (b^2-a^2)/2$. This is
\eqref{eq:dagger}, so $T$ is $\gauss_d$-preserving; equivalently, $\Phi$ conjugates the
$\gauss_d$-preserving diffeomorphisms onto the Lebesgue-preserving diffeomorphisms of the cube.
For the swirl itself: in polar coordinates $(\varrho,\alpha)$ about the learned center, the map is
$(\varrho,\alpha)\mapsto(\varrho, \alpha + w(\varrho)\theta)$, whose Jacobian is unit
lower-triangular, so $|\det| = 1$; conjugating by a diagonal scaling $D$ multiplies the determinant by
$|\det D|\cdot|\det D|^{-1} = 1$, turning the disc into an axis-aligned ellipse at no cost.
\end{proof}

\subsection{Ideal approximation and augmented representation}
\label{app:ideal-representation}

\begin{theorem}[Approximation by quantile swirls]\label{thm:G}
Let $d\ge2$ and let $T$ be a smooth Gaussian-preserving diffeomorphism joined
to the identity by a smooth Gaussian-preserving isotopy $T_t$, $0\le t\le1$,
whose support lies in one compact set. Allow pairwise quantile swirls with
arbitrary interior centers, radii, smooth compactly supported radial profiles,
and smooth coefficients depending on the untouched coordinates. Finite
compositions of these swirls approximate $T$ in $C^1$ on compact sets.
This is an approximation result for the stated ideal layer family, with no
fixed bound on its depth or on the number of profiles.
\end{theorem}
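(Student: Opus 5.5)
Conjugating by $\Phi$ turns the statement into one about volume-preserving diffeomorphisms of the cube. By the quantile-swirl identity, $T\mapsto\Phi\circ T\circ\Phi^{-1}$ is a bijection between $\gauss_d$-preserving diffeomorphisms of $\mathbb R^d$ and Lebesgue-preserving diffeomorphisms of $(0,1)^d$. Since $\Phi$ and $\Phi^{-1}$ are smooth with bounded derivatives on compact sets, $C^1$ approximation on compacts transfers in both directions. The isotopy $T_t$ becomes a compactly supported volume-preserving isotopy $S_t=\Phi\circ T_t\circ\Phi^{-1}$ of the open cube. A pairwise quantile swirl becomes a planar map $(\varrho,\alpha)\mapsto(\varrho,\alpha+w(\varrho)\theta)$ in two coordinates $(q_i,q_j)$, possibly conjugated by a diagonal scaling, with the remaining coordinates held fixed as parameters. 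It is therefore enough to approximate $S=S_1$ in $C^1$ by finite compositions of such planar rotational shears.

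\textbf{Step 1: reduce to short time-one maps.} Differentiating $S_t$ gives a smooth time-dependent divergence-free vector field $X_t$ supported in a fixed compact $K\subset(0,1)^d$. Splitting $[0,1]$ into $N$ pieces writes $S$ as a composition of maps $S_{t_{k+1}}\circ S_{t_k}^{-1}$, each $C^1$-close to the time-$1/N$ flow of a frozen field $X_{t_k}$. Standard Lie--Trotter/Gronwall estimates then show that approximating each frozen-field flow to $o(1/N)$ in $C^1$ gives a $C^1$ approximation of $S$.

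\textbf{Step 2: decompose the field.} A compactly supported divergence-free field on $\mathbb R^d$ is the divergence of a compactly supported antisymmetric potential, $X=\sum_{i<j}(\partial_j\psi_{ij}e_i-\partial_i\psi_{ij}e_j)$. This uses the Poincar\'e lemma with compact supports for closed $(d-1)$-forms of total integral zero on $\mathbb R^d$. Each summand $Y_{ij}$ is a Hamiltonian field in the $(q_i,q_j)$ plane with Hamiltonian $\psi_{ij}$, depending smoothly on the other coordinates. A second Trotter splitting over the $\binom d2$ planes reduces the problem to approximating the time-$\tau$ flow of a single planar Hamiltonian field $\psi(q_i,q_j;q_{\rm rest})$ with small $\tau$.

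\textbf{Step 3: approximate planar Hamiltonians by radial ones.} A swirl with center $c$, radius, profile $w$ and coefficient $\theta(q_{\rm rest})$ is the time-one flow of the Hamiltonian field whose Hamiltonian $h(|q-c|)\theta(q_{\rm rest})$ is radial in the plane, with $h'(\varrho)=\varrho\,w(\varrho)$ up to normalization. I expect this to be the main obstacle. The task is to approximate an arbitrary compactly supported $\psi$ in $C^2$ by finite sums of such radial bumps with arbitrary centers. The lowest-order term of the flow of a sum then matches the sum of the fields, and a further Trotter splitting realizes the sum as a composition of swirls. A convolution argument should supply this: $\psi=\psi*\rho_\epsilon$ up to a small $C^2$ error for a radial mollifier $\rho_\epsilon$, and the convolution integral $\int\psi(c)\rho_\epsilon(\cdot-c)\,dc$ can be discretized by a Riemann sum of translated radial bumps. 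Dependence on $q_{\rm rest}$ enters through the smooth coefficients, which requires a partition of unity in the untouched coordinates.

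The remaining difficulty is bookkeeping. The $C^1$ error of each Trotter factor must be $o(\tau)$ uniformly, and the compositions must stay in the compact support so the constants are uniform. The ellipse conjugation is not needed, but it keeps the layer family closed under the rescalings used above.
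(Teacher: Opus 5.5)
Your proposal is correct and follows essentially the same route as the paper's proof: conjugation by $\Phi$, freezing the divergence-free velocity field, a compactly supported antisymmetric potential that splits it into pairwise Hamiltonian fields, radial mollification in the updated pair followed by a Riemann sum of translated radial bumps, and Lie--Trotter splitting. The differences are minor. The paper builds the potential by an explicit induction, which is the compact-support Poincar\'e lemma in concrete form and keeps the support inside an interior rectangle; that is what you need on the open cube rather than on $\mathbb R^d$, and no integral-zero condition arises for $(d-1)$-forms when $d\ge2$. Also, the Riemann-sum coefficients $H(v_m,u)$ are already smooth in the untouched coordinates, so no partition of unity is needed.
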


\begin{proof}
Conjugate by the coordinatewise Gaussian CDF. The isotopy becomes a smooth
volume-preserving isotopy $S_t$ of $(0,1)^d$ with common compact support.
Its velocity $X_t=\dot S_t\circ S_t^{-1}$ is smooth, compactly supported,
and divergence free, by differentiating the volume-preservation identity.
We give three steps for approximating its time-one flow.

\emph{Pairwise decomposition.}
Every smooth divergence-free field $X$ supported in an interior rectangle has
a compactly supported antisymmetric potential $A$ satisfying
$X_i=\sum_j\partial_jA_{ij}$. An elementary induction establishes this
without imposing boundary conditions on a noncompact potential. The
one-dimensional base case is zero: a compactly supported divergence-free
scalar field is constant and hence vanishes.
For $d\ge2$, write $x=(x_1,u)$ and define
\[
 Y_j(u)=\int_{\mathbb R}X_j(t,u)\,\dd t\quad(j>1).
\]
Then $\sum_{j>1}\partial_jY_j=0$. Choose a smooth bump $\chi$ in the
first-coordinate interval with integral one, and set
$Z_1=X_1$, $Z_j=X_j-\chi(x_1)Y_j(u)$ for $j>1$.
The field $Z$ is divergence free and its components $Z_j$, $j>1$, integrate
to zero along $x_1$. Consequently
\[
 A_{j1}(x)=\int_{-\infty}^{x_1}Z_j(t,u)\,\dd t,\qquad
 A_{1j}=-A_{j1}
\]
have compact support in the rectangle, with
$\partial_1A_{j1}=Z_j$ and
$\sum_{j>1}\partial_jA_{1j}=Z_1$.
By induction, the field $Y$ has an antisymmetric potential $B$ in the
remaining coordinates. Adding $A_{jk}=\chi(x_1)B_{jk}(u)$ for $j,k>1$
represents the subtracted field and proves the claim.
Thus $X$ is a finite sum of pairwise Hamiltonian fields
$X^{(ij)}=\partial_jA_{ij}\,e_i-\partial_iA_{ij}\,e_j$, with untouched
coordinates acting as parameters.

\emph{Approximation by radial Hamiltonians.}
Fix a pair $v=(x_i,x_j)$ and write its Hamiltonian as $H(v,u)$, with $u$ the
untouched coordinates.
Convolve only in $v$ with a smooth radial mollifier $\varphi_\epsilon$,
choosing $\epsilon$ small enough to keep its support inside the pair's
square. The convolution converges to $H$ in the joint $C^2(v,u)$ norm:
all derivatives of total order at most two commute with convolution and
converge uniformly on the common compact support. For fixed $\epsilon$,
a Riemann sum gives
\[
 H_\eta(v,u)=\sum_{m=1}^M |Q_m|\,H(v_m,u)\,
                        \varphi_\epsilon(v-v_m).
\]
The integrand and its derivatives of total order at most two are uniformly
continuous, so this finite sum converges to the convolution in $C^2$ as
the mesh tends to zero. Each term is a radial bump in the updated pair,
with a smooth compactly supported coefficient in $u$.
Its Hamiltonian flow preserves radius and rotates the pair by a
radius-dependent angle; $u$ remains fixed. It is therefore an admissible
pairwise swirl. Smooth radial bumps give a smooth angular profile at the
center, since their radial derivative divided by radius extends smoothly.
The associated finite sum of swirl vector fields approximates the pairwise
field in $C^1$.

\emph{Approximation of the flow.}
First freeze $X_t$ on sufficiently short time intervals. Smoothness and
common compact support give convergence of the resulting piecewise
constant flows in $C^1$, by the flow and variational equations and
Gr\"onwall's inequality. On each interval, approximate its frozen field
by the finite sum above. Continuous dependence of those same equations
gives $C^1$ convergence of the flows as the fields converge in $C^1$.
Finally, Lie--Trotter splitting approximates the flow of each finite sum
by repeated compositions of the individual swirl flows: for these smooth
compactly supported fields the one-step $C^1$ error is $O(h^2)$ and the
accumulated error is $O(h)$ over a fixed interval. Each approximation is
a finite composition. Choosing the three errors successively small yields
the required $C^1$ approximation of $S_1$. Conjugating back preserves
$C^1$ convergence on the corresponding compact sets, proving the claim.
\end{proof}

Theorem~\ref{thm:G} concerns the ideal family with a compactly supported
isotopy, arbitrary smooth profiles, and unrestricted composition length.
Density of a fixed profile set or neural conditioner family, parameter counts,
and approximation rates fall outside its scope. Each implemented layer's
Gaussian preservation follows directly from the change-of-variables identity.

\begin{theorem}[Measurable augmented representation]\label{thm:E}
Let $\tilde K$ be a Gaussian-preserving Markov kernel on $\mathbb R^d$
with positive conditional densities. There is a Gaussian-preserving Borel
isomorphism $\Psi$ between full-measure subsets of $\mathbb R^{2d}$ such that,
for $\gauss_d$-almost every $z$,
$\Pi_{1:d}\Psi(z,\varepsilon)\sim\tilde K(z,\cdot)$ when
$\varepsilon\sim\gauss_d$. Thus $d$ auxiliary Gaussian coordinates suffice
for this representation modulo null sets.
\end{theorem}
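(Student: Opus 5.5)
The plan is to build $\Psi$ from conditional Rosenblatt transports, so that it is a composition of two measure-space isomorphisms, each carrying $\gauss_{2d}$ to a law on $\mathbb R^{2d}$ that we control. Let $P$ be the pair law $\gauss_d(\dd z)\tilde K(z,\dd z')$ on $\mathbb R^{2d}$. Because $\tilde K$ preserves the reference, both marginals of $P$ are $\gauss_d$. We want $\Psi$ with $\Psi_\#\gauss_{2d}=\gauss_{2d}$ whose first output block, at fixed $z$ and with $\varepsilon\sim\gauss_d$, has law $\tilde K(z,\cdot)$. The first output block should therefore be $z'$. The second output block must then be a Gaussian coordinate that, together with $z'$, forms a standard Gaussian pair, and from which $z$ can be recovered.

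\emph{Step 1 (forward Rosenblatt map in $\varepsilon$).} Positive conditional densities make the conditional CDFs of $\tilde K(z,\cdot)$ strictly increasing in each coordinate. Define $R_z:\mathbb R^d\to\mathbb R^d$ as the inverse Rosenblatt map: it sends $\varepsilon\sim\gauss_d$ to $\tilde K(z,\cdot)$ by successively applying $\Phi^{-1}$-quantile transforms of the conditionals $F_i(z'_i\mid z'_{<i},z)$. This is a bijection for each $z$, and it is jointly Borel in $(z,\varepsilon)$, since conditional CDFs of a kernel on a standard Borel space admit jointly measurable versions. The map $A(z,\varepsilon)=(z,R_z(\varepsilon))$ is then a Borel isomorphism, modulo null sets, from $(\mathbb R^{2d},\gauss_{2d})$ to $(\mathbb R^{2d},P)$.

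\emph{Step 2 (reverse disintegration).} Disintegrate $P$ along its second coordinate: $P(\dd z,\dd z')=\gauss_d(\dd z')\tilde K^\ast(z',\dd z)$. Here we use that the $z'$-marginal is $\gauss_d$, which is exactly where preservation enters. Positivity of the joint density gives positive conditional densities for $\tilde K^\ast$, so the same Rosenblatt construction yields a bijection $L_{z'}$ with $(L_{z'})_\#\tilde K^\ast(z',\cdot)=\gauss_d$. Set $B(z,z')=(z',L_{z'}(z))$. Then $B_\#P=\gauss_{2d}$, and $B$ is invertible with inverse $(z',\eta)\mapsto(L_{z'}^{-1}(\eta),z')$.

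\emph{Step 3 (compose and check).} Put $\Psi=B\circ A$, so that $\Psi(z,\varepsilon)=(R_z(\varepsilon),L_{R_z(\varepsilon)}(z))$. This preserves $\gauss_{2d}$. Its first block is $R_z(\varepsilon)\sim\tilde K(z,\cdot)$ for every $z$ at which the disintegration version is a genuine kernel with positive densities, hence for $\gauss_d$-almost every $z$.

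The main obstacle is the measure-theoretic bookkeeping in Steps 1 and 2. We must choose versions of the conditional CDFs that are jointly Borel, strictly increasing and continuous, with limits $0$ and $1$. Then we must check that these versions make $A$ and $B$ bijections between full-measure Borel sets, with Borel inverses. The plan is to discard the null sets where positivity or the limiting behaviour fails, and to intersect across both maps. Lusin--Souslin then gives Borel inverses of the injective Borel maps, because the images are Borel. If a fully pointwise version is awkward, there is a fallback: invoke the isomorphism theorem for standard probability spaces (\citep[Theorem~17.41]{kechris1995}) for the conditional laws, keeping the explicit construction only for the measurability of the parametrized family.
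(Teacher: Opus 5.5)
Your proposal is correct and follows the paper's proof essentially step for step. It uses a jointly measurable forward triangular transport $R_z$ with $(R_z)_\#\gauss_d=\tilde K(z,\cdot)$, then Gaussianizes the reverse conditional (your $L_{z'}$, the paper's $S_y$) using the Gaussian second marginal supplied by preservation. It sets $\Psi(z,\varepsilon)=(R_z(\varepsilon),L_{R_z(\varepsilon)}(z))$ and inverts it by recovering $z$ first and then $\varepsilon$. Your factorization $\Psi=B\circ A$ and the Lusin--Souslin remark only spell out the null-set bookkeeping that the paper leaves at ``invertible modulo conditional null sets.''
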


\begin{theorem}[Smooth augmented representation]\label{thm:Esmooth}
In addition, suppose all scalar conditional CDFs in the forward
Knothe--Rosenblatt recursion for $\tilde K(z,\cdot)$ and in the reverse
recursion for $\Law(z\mid y)$ are $C^1$ jointly in the scalar argument and
all conditioning variables. Assume each has positive derivative in the
scalar argument and limits zero and one at $-\infty$ and $+\infty$, for
every conditioning value. Then the construction \eqref{eq:thmE} defines
a $C^1$ diffeomorphism of $\mathbb R^{2d}$ preserving $\gauss_{2d}$.
\end{theorem}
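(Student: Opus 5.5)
We construct $\Psi$ explicitly from two Knothe--Rosenblatt transports and a Gaussian swap, equation \eqref{eq:thmE}. Let $\varepsilon\sim\gauss_d$ be independent of $z\sim\gauss_d$.

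\textbf{Forward map.} Define $F_z:\mathbb R^d\to\mathbb R^d$ by
\[
(F_z^{-1}(\varepsilon))_i=F_{i,z}^{-1}\bigl(\Phi(\varepsilon_i)\mid y_{<i}\bigr),
\]
where $F_{i,z}(\cdot\mid y_{<i})$ is the scalar conditional CDF of coordinate $i$ under $\tilde K(z,\cdot)$. Then set
\[
(z,\varepsilon)\mapsto(z,y),\qquad y=F_z^{-1}(\varepsilon).
\]
By the conditional probability integral transform \citep{rosenblatt1952}, $y\mid z\sim\tilde K(z,\cdot)$. The joint law of $(z,y)$ is therefore the stationary pair law $\gauss_d(\dd z)\tilde K(z,\dd y)$. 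Preservation $\gauss_d\tilde K=\gauss_d$ makes its $y$-marginal exactly $\gauss_d$.

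\textbf{Reverse map.} Apply the reverse recursion for $\Law(z\mid y)$, with scalar conditional CDFs $G_{i,y}(z_i\mid z_{<i})$, to obtain
\[
\eta_i=\Phi^{-1}\bigl(G_{i,y}(z_i\mid z_{<i})\bigr),\qquad (z,y)\mapsto(y,\eta).
\]
Given $y$, the variable $\eta$ is standard Gaussian and independent of $y$, and $y\sim\gauss_d$. Hence $(y,\eta)\sim\gauss_{2d}$.

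\textbf{Composite map.} Set
\[
\Psi(z,\varepsilon)=(y,\eta).
\]
This is a composition of Borel maps pushing $\gauss_{2d}$ to $\gauss_{2d}$, and by construction $\Pi_{1:d}\Psi(z,\varepsilon)=y\sim\tilde K(z,\cdot)$. Injectivity holds because each step inverts successively: from $(y,\eta)$ we recover $z$ coordinate by coordinate, then $\varepsilon=F_z(y)$.

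\textbf{Measurable case (Theorem~\ref{thm:E}).} Positivity of conditional densities makes each forward conditional CDF strictly increasing on its support. Conditional CDFs can be chosen jointly Borel in all arguments, since the spaces are standard Borel and disintegrations exist. Strict monotonicity and endpoint behaviour then hold off a null set, so we restrict to the full-measure set where every scalar map is a bijection onto $(0,1)$. A Borel bijection between standard Borel sets has Borel inverse (Lusin--Souslin), so $\Psi$ is a Borel isomorphism between full-measure subsets.

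\textbf{Main obstacle: the reverse recursion.} The delicate point is that the reverse conditionals $\Law(z\mid y)$ must also have positive densities almost surely. I would derive them by Bayes from the joint density $\gamma(z)\tilde k(y\mid z)$. Positivity of this joint density gives positive conditional densities for almost every $y$, so the reverse scalar CDFs are strictly increasing.

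\textbf{Smooth case (Theorem~\ref{thm:Esmooth}).} The assumptions are exactly what is needed to remove all null-set qualifications:
\begin{itemize}
\item Joint $C^1$ regularity together with positive partial derivatives gives $C^1$ triangular maps with positive diagonal Jacobian entries.
\item The limits $0$ and $1$ at $\mp\infty$ make each scalar map onto $\mathbb R$ after applying $\Phi^{-1}$, for every conditioning value.
\end{itemize}
Exactly as in the proof of Proposition~\ref{prop:exact-class}, each block is then a global bijection. The inverse function theorem gives a $C^1$ inverse locally, which combined with global bijectivity makes each block a $C^1$ diffeomorphism of $\mathbb R^{2d}$. The composition $\Psi$ is therefore a $C^1$ diffeomorphism of $\mathbb R^{2d}$. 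It preserves $\gauss_{2d}$ by the measurable argument above, which now holds everywhere rather than almost everywhere.
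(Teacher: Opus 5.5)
Your proposal is correct and follows essentially the same route as the paper: $\Psi$ is the composition of two global triangular changes of variables, $(z,\varepsilon)\mapsto(z,y)$ and $(z,y)\mapsto(y,\eta)$. Each is shown to be a $C^1$ diffeomorphism of $\mathbb R^{2d}$, and Gaussian preservation is inherited from the measurable argument. The only difference is presentational. The paper obtains joint $C^1$ regularity of the inverse-CDF recursion by applying the implicit function theorem to each scalar equation. You instead apply the inverse function theorem to the CDF-side triangular map and use global bijectivity, exactly as in Proposition~\ref{prop:exact-class}; these are equivalent.
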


\begin{proof}[Proof of Theorem~\ref{thm:E}]
Conditional probability integral transforms and their triangular inverses
provide measurable transports $R_z$ with
$(R_z)_\#\gauss_d=\tilde K(z,\cdot)$, invertible modulo conditional null
sets. Positive densities ensure the scalar conditional laws used by the
recursion are atomless with strictly increasing CDFs on their supports.
These constructions can be chosen jointly measurable, using generalized
inverses of the measurable conditional CDFs.

Set $y=R_z(\varepsilon)$. Under independent Gaussian $z,\varepsilon$,
the pair $(z,y)$ has joint law $\gauss(\dd z)\tilde K(z,\dd y)$ and both
marginals are Gaussian. Its reverse conditional also has a density and is
atomless for almost every $y$. Let $S_y$ be its conditional triangular
transport to $\gauss_d$. Then $\eta=S_y(z)$ is Gaussian independently of
$y$. Define
\begin{equation}
 \Psi(z,\varepsilon)=\bigl(R_z(\varepsilon),S_{R_z(\varepsilon)}(z)\bigr).
 \label{eq:thmE}
\end{equation}
Its output has law $\gauss_{2d}$. The inverse recovers
$z=S_y^{-1}(\eta)$ and then $\varepsilon=R_z^{-1}(y)$, with all identities
understood on the full-measure sets where the conditional transforms are
invertible. The first output retains the desired conditional law, because
the second transport changes only the auxiliary output coordinate.
\end{proof}

\begin{proof}[Proof of Theorem~\ref{thm:Esmooth}]
For each scalar equation $F(y_j\mid z,y_{<j})=\Phi(\varepsilon_j)$,
the assumed joint $C^1$ regularity and positive derivative in $y_j$ give a
jointly $C^1$ solution by the implicit function theorem. The tail limits
make it unique and defined for every finite input. Applying this recursion
successively gives a jointly $C^1$ map $(z,\varepsilon)\mapsto R_z(\varepsilon)$
with a jointly $C^1$ inverse in its second variable. The same argument
applies to $(y,z)\mapsto S_y(z)$ and its inverse.

The construction is the composition of two global triangular changes of
variables,
$(z,\varepsilon)\mapsto(z,R_z(\varepsilon))$ and
$(z,y)\mapsto(y,S_y(z))$.
Each is a $C^1$ diffeomorphism, so their composition is one as well.
The measurable argument establishes Gaussian preservation. No global bound
on the Jacobian or its inverse is required.
\end{proof}

\begin{remark}[Regularity and representation scope]\label{rem:thmE-split}
The conditional-CDF hypotheses ensure joint smoothness of the forward and
reverse transports. Locally uniform integrable envelopes for the density
derivatives provide sufficient conditions, as in
Remark~\ref{rem:exact-class-reg}. The measurable theorem describes attainable
laws, and the smooth theorem supplies differentiable representations. Both
concern unrestricted transports; capacity and optimization of finite
architectures are separate questions.
\end{remark}

\begin{corollary}[Controlled conditional representation]\label{cor:controlled-representation}
Let $u=(z,\alpha)\in\mathbb R^{d+m}$ and let $K(u,\dd y)$ have
positive conditional densities and satisfy
$\int K(u,\cdot)\gauss_{d+m}(\dd u)=\gauss_d$.
There is a Gaussian-preserving Borel isomorphism of $\mathbb R^{2d+m}$,
modulo null sets, whose first $d$ outputs, on input $(z,\alpha,\varepsilon)$
with fresh $\varepsilon\sim\gauss_d$, have conditional law
$K(z,\alpha,\cdot)$ for almost every $(z,\alpha)$.
Under the corresponding forward and reverse conditional-CDF hypotheses of
Theorem~\ref{thm:Esmooth}, the construction is a $C^1$ diffeomorphism.
\end{corollary}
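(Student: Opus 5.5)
The plan is to reduce Corollary~\ref{cor:controlled-representation} to the construction of Theorem~\ref{thm:E}, with the conditioning variable enlarged from $z$ to $u=(z,\alpha)$. The only new feature is that $u$ and the output $y$ have different dimensions. We therefore split the reverse transport into a part that recovers $u$ and a part that simply carries the extra $m$ coordinates.

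\emph{Forward step.} For each $u$, use the conditional probability integral transform and its triangular inverse to obtain a jointly measurable transport $R_u$ with $(R_u)_\#\gauss_d=K(u,\cdot)$, invertible modulo conditional null sets. Positivity of the conditional densities makes the scalar conditional laws atomless with strictly increasing CDFs, exactly as in the proof of Theorem~\ref{thm:E}. Then set $y=R_u(\varepsilon)$ for independent $u\sim\gauss_{d+m}$ and $\varepsilon\sim\gauss_d$. The pair $(u,y)$ has law $\gauss_{d+m}(\dd u)K(u,\dd y)$, and by the hypothesis $\int K(u,\cdot)\,\gauss_{d+m}(\dd u)=\gauss_d$ its $y$-marginal is $\gauss_d$. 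This averaged identity is the only place the preservation hypothesis enters, and it plays the role of Gaussian preservation of $\tilde K$ in Theorem~\ref{thm:E}.

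\emph{Reverse step.} Disintegrate the joint law along $y$ to obtain the reverse conditional $\Law(u\mid y)$ on $\mathbb R^{d+m}$. It has a density, being a ratio of positive joint density to the Gaussian marginal, so it is atomless for almost every $y$. Let $S_y$ be its Knothe--Rosenblatt transport to $\gauss_{d+m}$, so that $\eta=S_y(u)\sim\gauss_{d+m}$ independently of $y$. Define
\[
\Psi(z,\alpha,\varepsilon)=\bigl(R_{(z,\alpha)}(\varepsilon),\,S_{R_{(z,\alpha)}(\varepsilon)}(z,\alpha)\bigr)\in\mathbb R^{d}\times\mathbb R^{d+m}.
\]
Its output $(y,\eta)$ has law $\gauss_d\otimes\gauss_{d+m}=\gauss_{2d+m}$. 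The inverse is $u=S_y^{-1}(\eta)$ followed by $\varepsilon=R_u^{-1}(y)$, valid on full-measure sets. The first $d$ outputs equal $R_u(\varepsilon)$, whose conditional law is $K(u,\cdot)$ for almost every $u$.

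\emph{Smooth case.} Repeat the proof of Theorem~\ref{thm:Esmooth}. The implicit function theorem, applied to each scalar equation of the forward recursion for $K(u,\cdot)$ and the reverse recursion for $\Law(u\mid y)$, gives jointly $C^1$ maps with $C^1$ inverses. The tail limits give global solvability. $\Psi$ is then the composition of the two global triangular diffeomorphisms $(u,\varepsilon)\mapsto(u,R_u(\varepsilon))$ and $(u,y)\mapsto(y,S_y(u))$. The measurable argument already supplies Gaussian preservation.

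\emph{Main obstacle.} The delicate point is joint measurability and invertibility modulo null sets of $u\mapsto R_u$ and $y\mapsto S_y$ when the dimensions are unequal. I would handle this exactly as in Theorem~\ref{thm:E}, using generalized inverses of measurable conditional CDFs and tracking the exceptional null sets through Fubini. Note that $\Psi$ is a bijection $\mathbb R^{2d+m}\to\mathbb R^{2d+m}$, so dimension counting is consistent.
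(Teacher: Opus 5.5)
Your proposal is correct and follows the paper's proof: it applies the construction of Theorem~\ref{thm:E} with conditioning variable $u=(z,\alpha)$ and unequal input and output dimensions. The ingredients match the paper's: the forward transport $R_u$, the Gaussian $y$-marginal from the averaged preservation hypothesis, the reverse Gaussianization $\eta=S_y(u)\in\mathbb R^{d+m}$ with inverse $u=S_y^{-1}(\eta)$ then $\varepsilon=R_u^{-1}(y)$, and the same implicit-function and composition argument for the $C^1$ case.
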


\begin{proof}
The forward conditional transport gives $y=R_u(\varepsilon)$.
The pair $(u,y)$ has a positive joint density and Gaussian $y$ marginal.
Gaussianize its reverse conditional by $\eta=S_y(u)\in\mathbb R^{d+m}$.
Then $(y,\eta)$ is product Gaussian and the inverse recovers
$u=S_y^{-1}(\eta)$, followed by $\varepsilon=R_u^{-1}(y)$.
This is the proof of Theorem~\ref{thm:E} with unequal input and output
state dimensions. The same scalar implicit-function argument proves the
smooth statement. In particular it retains the entire controlled
conditional, not only its action-averaged state kernel.
\end{proof}

\subsection{The controlled conservation law}
\label{app:controlled}

Stationarity constrains the behaviour-averaged kernel. Preservation at each
fixed action is a stronger requirement under independent actions, and need
not hold for the true controlled dynamics.

\begin{proposition}[Stationarity does not imply per-action invariance]\label{prop:noperaction}
There is a controlled Markov chain whose data are exactly stationary under its behaviour policy and
for which none of its fixed-action transition kernels $K_a$ preserves the state marginal.
\end{proposition}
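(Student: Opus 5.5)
The plan is to exhibit an explicit example, essentially the one in Section~\ref{sec:construction}. Take the state space $\{0,1\}$, the action space $\{0,1\}$, and the behaviour policy $\pi_b(\cdot\mid x)=\tfrac12(\delta_0+\delta_1)$, independent of $x$. Define the controlled transition deterministically by $K^\star(x,a,\cdot)=\delta_a$, so the next state equals the action. The stationary state law will be $\mu=\tfrac12(\delta_0+\delta_1)$.

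First, verify the occupancy identity \eqref{eq:occupancy-main}. For any initial law, one behaviour-averaged step gives $\sum_a\pi_b(a\mid x)\delta_a=\mu$. Hence $\mu\Kbar^\star=\mu$, and indeed $\nu\Kbar^\star=\mu$ for every $\nu$. Data generated by starting at $\mu$ and following $\pi_b$ are therefore exactly stationary. Second, check each fixed-action kernel: $\mu K_a=\delta_a\neq\mu$ for both $a\in\{0,1\}$. So no fixed-action kernel preserves the state marginal, which is the claim.

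The only subtlety is whether the paper's setting requires atomless or continuous states, since the representation results assume atomless marginals. If a continuous example is preferred, I would use the same idea with $x,a\in\mathbb R$, $a\sim\gauss_1$ independent of $x$, and $x'=a$. Then $\mu=\gauss_1$ is stationary, while each $K_a=\delta_a$ maps $\mu$ to a point mass.

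If one also wants each $K_a$ to have a density, take $x'=\varrho a+\sqrt{1-\varrho^2}\,\xi$ with fresh $\xi\sim\gauss_1$ and $\varrho\in(0,1]$. Under $a\sim\gauss_1$ this gives $x'\sim\gauss_1$. For fixed $a$, however, $x'\sim\mathcal N(\varrho a,1-\varrho^2)$, which is never $\gauss_1$ because its variance is below one. This is a routine check, and I expect no real obstacle beyond choosing which version to present.
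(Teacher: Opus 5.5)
Your proof is correct and is the paper's own argument: uniform independent bits as actions with $x'=a$, so the averaged kernel $\tfrac12(K_0+K_1)$ preserves $\mathrm{Unif}\{0,1\}$ while $\mu K_a=\delta_a\neq\mu$ for each $a$. One small fix to your optional density variant: it needs $\varrho\in(0,1)$ rather than $(0,1]$, since $\varrho=1$ gives the point-mass kernel with no density; that variant is the strictness example the paper uses in Proposition~\ref{prop:nesting}.
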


\begin{proof}
Let $z\in\{0,1\}$, let $a\in\{0,1\}$ be drawn uniformly and independently of the state, and let the
environment be $z_{t+1}=a_t$. If $z_0\sim\mu:=\mathrm{Unif}\{0,1\}$ then $z_t\sim\mu$ for every $t$,
so the data are stationary and \eqref{eq:occupancy-main} holds: $\mu\Kbar^\star=\mu$ with
$\Kbar^\star=\tfrac12(K_0+K_1)$. But $K_0$ maps every state to $0$ and $K_1$ maps every state to $1$,
so $\mu K_0=\delta_0\ne\mu$ and $\mu K_1=\delta_1\ne\mu$.
\end{proof}

\begin{fact}[Controlled stationarity is occupancy invariance]\label{fact:occ}
If the data are stationary under a fixed behaviour policy $\pi_b(\dd a\mid z)$, then
\begin{equation}
\iint \mu(\dd z)\,\pi_b(\dd a\mid z)\,K^\star(z,a,\dd z')\;=\;\mu(\dd z') .
\label{eq:occ}
\end{equation}
\end{fact}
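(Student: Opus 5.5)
The statement is Fact~\ref{fact:occ}. The plan is to compute the one-step law of the data process directly and use stationarity. Let $(Z_t,A_t)$ be the behaviour-generated process, so that
\[
Z_t\sim\mu,\qquad A_t\mid Z_t\sim\pi_b(\cdot\mid Z_t),\qquad Z_{t+1}\mid(Z_t,A_t)\sim K^\star(Z_t,A_t,\cdot).
\]
Because the state, action and observation spaces are standard Borel, these conditional laws exist as Markov kernels, and the joint law of the triple is $\mu(\dd z)\,\pi_b(\dd a\mid z)\,K^\star(z,a,\dd z')$.

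Fix a measurable set $B$. First apply the tower property conditioning on $(Z_t,A_t)$, then integrate out $A_t$ given $Z_t$:
\[
\Pr(Z_{t+1}\in B)=\Exp\bigl[K^\star(Z_t,A_t,B)\bigr]=\iint\mu(\dd z)\,\pi_b(\dd a\mid z)\,K^\star(z,a,B).
\]
Stationarity means $\Law(Z_{t+1})=\Law(Z_t)=\mu$, so the left side equals $\mu(B)$. Since $B$ is arbitrary, the two measures in \eqref{eq:occ} coincide. Equivalently, the behaviour-averaged kernel $\Kbar^\star(z,\cdot)=\int\pi_b(\dd a\mid z)K^\star(z,a,\cdot)$ is a Markov kernel satisfying $\mu\Kbar^\star=\mu$.

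The only point needing care is measurability. The map $(z,a)\mapsto K^\star(z,a,B)$ must be jointly measurable, which holds because $K^\star$ is a kernel on a product of standard Borel spaces. This lets Fubini--Tonelli justify the iterated integral. Nonnegativity gives Tonelli without integrability conditions.

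The contrast with Proposition~\ref{prop:noperaction} is visible in this argument: only the $\pi_b$-average enters the identity, so no constraint on an individual $K^\star(\cdot,a,\cdot)$ follows from it.
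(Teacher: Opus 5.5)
Your proposal is correct and follows the paper's route. The paper justifies Fact~\ref{fact:occ} only by noting that the left side is the one-step marginal law of the state under the stationary state--action occupancy, and your tower-property computation followed by $\Law(Z_{t+1})=\Law(Z_t)=\mu$ is exactly that observation written out. One small precision: joint measurability of $(z,a)\mapsto K^\star(z,a,B)$ holds because $K^\star$ is by definition a kernel from the product space; the standard Borel hypothesis is what guarantees that the disintegrations producing $\pi_b$ and $K^\star$ exist.
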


The identity is the one-step marginal law under the stationary state--action occupancy.

\begin{definition}[Action chart]\label{def:chart}
An \emph{action chart} is a family of measurable maps $G(\cdot\mid z):\mathcal{A}\to\mathbb{R}^m$ with
\begin{equation}
G(\cdot\mid z)_\#\pi_b(\cdot\mid z)\;=\;\gauss_m \qquad\text{for every } z ,
\label{eq:chart}
\end{equation}
The family is assumed jointly measurable in $(z,a)$, and we write
$\alpha:=G(a\mid z)$. A transition is chart-driven if it reads the action
only through $\alpha$. For a differentiable action likelihood, assume the
conditional-CDF regularity of Proposition~\ref{prop:exact-class}, so each
$G(\cdot\mid z)$ is a diffeomorphism. A positive joint density alone does
not supply the required joint regularity of the conditional transport.
\end{definition}

\begin{remark}[Randomized charts for atomic actions]\label{rem:chart-discrete}
A deterministic measurable map sends an atom to an atom, so it cannot map a
discrete action law to an atomless Gaussian. For ordered discrete actions,
augment the action with an independent $u\sim\mathrm{Unif}(0,1)$ and set
\[
 \alpha=\Phi^{-1}\!\left(F(a^-\mid z)+u\,\pi_b(a\mid z)\right).
\]
This gives a scalar Gaussian chart exactly under the behaviour law. It is a
map of $(a,u)$, extending Definition~\ref{def:chart}'s deterministic-action
form. The invariance induction still applies because it uses only the
conditional output law. Discrete-action likelihoods require probability
masses and integration over the auxiliary variable, rather than the
continuous chart's pointwise Jacobian formula. The experiments do not
evaluate this extension.
\end{remark}

\begin{lemma}[Conditional Gaussianization]\label{lem:chart}
Under \eqref{eq:chart}, if $z\sim\mu$ and $a\sim\pi_b(\cdot\mid z)$,
then $\alpha\sim\gauss_m$ independently of $z$. In particular, if
$\mu=\gauss_d$, then $(z,\alpha)\sim\gauss_d\otimes\gauss_m$.
More generally, an exact state transport $E_\#\mu=\gauss_d$ gives
$(E(z),\alpha)\sim\gauss_d\otimes\gauss_m$.
\end{lemma}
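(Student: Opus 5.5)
The plan is to compute the joint law of $(z,\alpha)$ by disintegrating over $z$. Fix a bounded measurable test function $f$ on $\mathbb R^d\times\mathbb R^m$. By the tower property,
\[
\Exp f(z,\alpha)=\int\mu(\dd z)\int\pi_b(\dd a\mid z)\,f\bigl(z,G(a\mid z)\bigr).
\]
This uses the standing assumption that state and action spaces are standard Borel, so the kernel $\pi_b(\cdot\mid z)$ exists. It also uses the joint measurability of $(z,a)\mapsto G(a\mid z)$ from Definition~\ref{def:chart}, which makes the integrand measurable.

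Next, rewrite the inner integral as an integral against the pushforward $G(\cdot\mid z)_\#\pi_b(\cdot\mid z)$, applied to the function $\alpha\mapsto f(z,\alpha)$ with $z$ held fixed. By \eqref{eq:chart} this pushforward equals $\gauss_m$ for every $z$. The inner integral therefore becomes $\int f(z,\alpha)\,\gauss_m(\dd\alpha)$, with no dependence on $z$ beyond the explicit argument of $f$. Substituting back gives
\[
\Exp f(z,\alpha)=\iint f(z,\alpha)\,\mu(\dd z)\,\gauss_m(\dd\alpha).
\]
Taking $f$ to be indicators of measurable rectangles identifies the law as $\mu\otimes\gauss_m$. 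This gives both claims at once: $\alpha\sim\gauss_m$, and $\alpha$ is independent of $z$.

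The case $\mu=\gauss_d$ is then immediate. For the general statement, apply the same computation to the pair $(E(z),\alpha)$ with test function $f(E(z),\alpha)$. The inner integral is unchanged, because $E(z)$ is fixed once $z$ is conditioned on. The outer integral becomes integration against $E_\#\mu=\gauss_d$.

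The only step needing care is measurability. The argument needs $z\mapsto\int f(z,G(a\mid z))\,\pi_b(\dd a\mid z)$ to be measurable, and it needs the chart identity to hold for every $z$, not merely almost every $z$. Both are supplied by Definition~\ref{def:chart}. If the chart identity held only for $\mu$-almost every $z$, the same argument would go through after modifying $G$ on a null set. Beyond this, no real obstacle is expected: the statement is a Fubini computation for kernels.
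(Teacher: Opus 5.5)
Your proposal is correct and takes the same route as the paper. The paper's proof observes that the conditional law of $\alpha$ given $z$ is the same Gaussian at every state, which yields both the marginal and the independence, and that a deterministic state transport preserves this independence; your test-function disintegration spells out exactly that argument, including the measurability details the paper leaves implicit.
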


\begin{proof}
The conditional law of $\alpha$ given $z$ is the same Gaussian at every
state, which establishes both its marginal and its independence from $z$.
Applying a deterministic state transport preserves that independence.
\end{proof}

\begin{lemma}[Observation-conditioned action charts]\label{lem:observation-chart}
Let $\lambda(\dd z,\dd o)=\mu(\dd z)L(\dd o\mid z)$ be any joint
state--context law. Suppose the jointly measurable chart $G_{z,o}$ obeys
$(G_{z,o})_\#\pi_b(\cdot\mid z,o)=\gauss_m$ for every relevant $(z,o)$.
Under this behaviour policy, $\alpha=G_{Z,O}(A)$ is independent of $(Z,O)$
and has law $\gauss_m$. Consequently, if a transition reads $o$ only through
$\alpha$, its behaviour-averaged state kernel is
\[
 K(z,B)=\int M(z,\alpha,B)\,\gauss_m(\dd\alpha),
\]
independently of $L$. The context may be a stochastic decoder output; it
need not itself have a Gaussian distribution.

If each $G_{z,o}$ is a bimeasurable bijection modulo common null sets, a
deployed policy $\pi_t$ instead gives a joint input law
$\nu_t=\Law(Z,\alpha)$ satisfying
\begin{equation}
 \KL(\nu_t\|\gauss_d\otimes\gauss_m)
 \le \KL(\mu\|\gauss_d)+\kappa_t,\qquad
 \kappa_t:=\int\KL(\pi_t(\cdot\mid z,o)\|\pi_b(\cdot\mid z,o))
                       \,\lambda(\dd z,\dd o).
 \label{eq:context-joint-kl}
\end{equation}
The divergence may be infinite. Thus Theorem~\ref{thm:parallel-kl} implies
$H_{t+1}\le c_\star^2(H_t+\kappa_t)$ for this observation-conditioned
setting, with $H_t=\KL(\mu_t\|\gauss_d)$.
\end{lemma}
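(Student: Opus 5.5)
The first part is a conditioning argument. Let $\lambda=\mu(\dd z)L(\dd o\mid z)$ and $(Z,O)\sim\lambda$, and draw $A\sim\pi_b(\cdot\mid Z,O)$. For every fixed $(z,o)$, the chart hypothesis $(G_{z,o})_\#\pi_b(\cdot\mid z,o)=\gauss_m$ says that the conditional law of $\alpha=G_{Z,O}(A)$ given $(Z,O)=(z,o)$ is $\gauss_m$. Joint measurability of $(z,o,a)\mapsto G_{z,o}(a)$ makes this a regular conditional law. Because it does not depend on $(z,o)$, $\alpha$ is independent of $(Z,O)$ with law $\gauss_m$, exactly as in Lemma~\ref{lem:chart}.

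The transition reads $o$ only through $\alpha$, so $\Law(Z'\mid Z,O,\alpha)=M(Z,\alpha,\cdot)$. Integrating out $\alpha$ first and then $O$ gives
\[
\Pr(Z'\in B\mid Z=z)=\int L(\dd o\mid z)\int M(z,\alpha,B)\,\gauss_m(\dd\alpha).
\]
The inner integral does not depend on $o$, so the $L$-integral is trivial. The context therefore disappears, and nothing about the law of $o$ (Gaussianity or otherwise) is used.

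The second part is the KL bound \eqref{eq:context-joint-kl}. Under the deployed policy, the triple $(Z,O,A)$ has law $\lambda(\dd z,\dd o)\pi_t(\dd a\mid z,o)$. Consider the reference triple law $\gauss_d(\dd z)L(\dd o\mid z)\pi_b(\dd a\mid z,o)$, which uses the same context kernel $L$. The KL chain rule gives
\[
\KL\bigl(\lambda\pi_t\,\big\|\,(\gauss_d L)\pi_b\bigr)=\KL(\mu\|\gauss_d)+0+\kappa_t .
\]
The three terms are the $z$-marginal, the $o\mid z$ conditional (identical kernels, so zero), and the $a\mid(z,o)$ conditional, averaged over $\lambda$, which is the definition of $\kappa_t$.

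Now push both triple laws forward by $(z,o,a)\mapsto(z,G_{z,o}(a))$. Data processing shows that the KL between the images is at most the triple KL. The image of the deployed law is $\nu_t$. The image of the reference law is $\gauss_d\otimes\gauss_m$, by the first part applied with $\mu=\gauss_d$. This proves \eqref{eq:context-joint-kl}. Bijectivity of the charts is not needed for the inequality; it only matters for equality and for reading $\kappa_t$ as a physical-action divergence. If some term is infinite, the bound holds trivially.

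The last sentence of the lemma follows by applying the joint-input contraction of Theorem~\ref{thm:parallel-kl} (the appendix form of \eqref{eq:rate-body}) with $\nu=\nu_t$. The state marginal of $\nu_tM$ is $\mu_{t+1}$, so
\[
H_{t+1}=\KL(\nu_tM\|\gauss_d)\le c_\star^2\,\KL(\nu_t\|\gauss_{d+m})\le c_\star^2(H_t+\kappa_t).
\]

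The main obstacle is measure-theoretic. We need regular conditional laws for $L$, $\pi_b$ and $\pi_t$, which the standard Borel assumption provides. We need joint measurability of the chart, so that the pushforward map is measurable. We also need the chain rule to hold with possibly infinite terms, where I would use the standard disintegration form of the chain rule. The charts need only be correct on a set of full $\lambda$-measure, since sets of measure zero do not affect either the kernel identity or the divergences.
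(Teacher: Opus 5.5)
Your proof is correct, and your first part matches the paper's argument: the paper computes $\Pr((Z,O)\in C,\alpha\in B)=\lambda(C)\gauss_m(B)$ directly, which is your conditional-law observation.

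For the divergence bound your decomposition differs from the paper's. The paper works at the level of $(Z,\alpha)$. It sets $q_{z,o}=(G_{z,o})_\#\pi_t(\cdot\mid z,o)$ and $q_z=\int q_{z,o}\,L(\dd o\mid z)$, and the chain rule against the product reference gives $\KL(\nu_t\|\gauss_d\otimes\gauss_m)=\KL(\mu\|\gauss_d)+\int\KL(q_z\|\gauss_m)\,\mu(\dd z)$. Convexity of relative entropy then removes the context, and invariance under the bijective chart identifies $\KL(q_{z,o}\|\gauss_m)$ with the physical-action divergence.

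You instead lift to triples against a reference $\gauss_d L\pi_b$ that shares the context kernel, so the context term in the chain rule vanishes. A single data-processing step through $(z,o,a)\mapsto(z,G_{z,o}(a))$ then replaces both the convexity step and the invariance step. Your route shows directly that bijectivity is not needed for the inequality. The paper's route locates all the slack in the averaging over $o$, which is how it identifies the equality case for state-determined contexts.

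One correction to your closing remark. Your reference image equals $\gauss_d\otimes\gauss_m$ only if the chart is exact for $(\gauss_d L)$-almost every $(z,o)$. Exactness $\lambda$-almost everywhere does not give this, even when $\mu\ll\gauss_d$. The paper's route needs exactness only $\lambda$-almost everywhere, because it uses the chart identity only under the deployed state--context law. Under the lemma's hypothesis (exactness at every relevant $(z,o)$, and everywhere for the tanh-Gaussian chart) the difference is immaterial.
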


\begin{proof}
For measurable $C\subseteq\mathcal Z\times\mathcal O$ and
$B\subseteq\mathbb R^m$, the behaviour joint probability is
\[
 \Pr((Z,O)\in C,\alpha\in B)
 =\int_C (G_{z,o})_\#\pi_b(B\mid z,o)\,\lambda(\dd z,\dd o)
 =\lambda(C)\gauss_m(B).
\]
This proves independence and the averaged-kernel statement.
For the policy deviation, let
$q_{z,o}=(G_{z,o})_\#\pi_t(\cdot\mid z,o)$ and
$q_z=\int q_{z,o}L(\dd o\mid z)$. Divergence invariance under the chart,
convexity, and the chain rule give
\[
 \int\KL(q_z\|\gauss_m)\,\mu(\dd z)
 \le\int\KL(q_{z,o}\|\gauss_m)\,\lambda(\dd z,\dd o)=\kappa_t,
\]
followed by \eqref{eq:context-joint-kl}. Averaging out the context may make
this inequality strict; equality holds when the context is a deterministic
function of the state. Applying the joint-input entropy theorem proves the
last claim.
\end{proof}

For the known tanh-Gaussian policy,
$G_{z,o}(a)=(\operatorname{atanh}a-m_b(o))\oslash s_b(o)$, with $s_b(o)>0$,
is exact on the open action cube. When the deployed policy is also a nonsingular
tanh-Gaussian with the same tanh transformation, their conditional KL equals
the Gaussian KL before tanh. Additional direct observation-dependent
transition parameters require a separate preservation argument: the lemma
covers context entering through the chart alone. Conditional likelihoods
then condition on $(z,a,o)$, unless $o$ is determined by $z$.

\begin{theorem}[Zero marginal drift, controlled]\label{thm:Achart}
Let $\Psi:\mathbb{R}^{d+m+k}\to\mathbb{R}^{d+m+k}$ satisfy $\Psi_\#\gauss_{d+m+k}=\gauss_{d+m+k}$, let
$\varepsilon_t\sim\gauss_k$ be fresh at each step, and set
\begin{equation}
z_{t+1}\;=\;\Pi_{1:d}\,\Psi(z_t,\alpha_t,\varepsilon_t),\qquad \alpha_t=G(a_t\mid z_t).
\label{eq:kernelchart}
\end{equation}
If $z_0\sim\gauss_d$ and the deployed policy's chart image is $\gauss_m$ at every state --- in
particular under the behaviour policy itself, by Lemma~\ref{lem:chart}, and under
observation-conditioned policies by Lemma~\ref{lem:observation-chart} --- then $\Law(z_T)=\gauss_d$
for every $T\ge0$, at every parameter value of $\Psi$, including at initialization.
\end{theorem}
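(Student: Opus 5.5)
The proof is an induction on $T$; the inductive step is a one-step statement about the joint input law. The base case $T=0$ is the hypothesis $z_0\sim\gauss_d$.

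For the step, assume $\Law(z_t)=\gauss_d$. By hypothesis the deployed policy's chart image is $\gauss_m$ at every state. I will first establish that $(z_t,\alpha_t)\sim\gauss_d\otimes\gauss_m$.
\begin{itemize}[leftmargin=*]
\item The conditional law of $\alpha_t$ given $z_t$ is $\gauss_m$ for every value of $z_t$, so $\alpha_t$ is independent of $z_t$. This is the argument of Lemma~\ref{lem:chart}.
\item For observation-conditioned policies, I invoke Lemma~\ref{lem:observation-chart} instead. It gives $\alpha_t\sim\gauss_m$ independent of $(z_t,o_t)$, hence of $z_t$, whatever the context kernel $L$.
\end{itemize}

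Next I handle the noise. The step uses fresh noise $\varepsilon_t\sim\gauss_k$, drawn independently of the past: of $z_t$, of any context, and of the action randomness at step $t$. Hence the triple $(z_t,\alpha_t,\varepsilon_t)$ has law $\gauss_d\otimes\gauss_m\otimes\gauss_k=\gauss_{d+m+k}$. This independence claim is the one point needing care. I will make it precise by writing the step as a measurable function of $(z_t,o_t,a_t)$ and an independent $\varepsilon_t$. Then the joint law factors as $\Law(z_t,\alpha_t)\otimes\gauss_k$.

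Because $\Psi_\#\gauss_{d+m+k}=\gauss_{d+m+k}$, the output $\Psi(z_t,\alpha_t,\varepsilon_t)$ is standard Gaussian on $\mathbb R^{d+m+k}$. Its first $d$ coordinates have law $(\Pi_{1:d})_\#\gauss_{d+m+k}=\gauss_d$, because coordinate projections of a standard Gaussian are standard Gaussian. Thus $\Law(z_{t+1})=\gauss_d$, closing the induction.

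No property of $\Psi$ beyond pushforward preservation is used. So the conclusion holds at every parameter value, including initialization.

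The main obstacle is measure-theoretic bookkeeping rather than any analytic step. Two things must be checked:
\begin{itemize}[leftmargin=*]
\item The hypothesis ``chart image is $\gauss_m$ at every state'' must be a statement about the conditional law of $\alpha_t$ given $z_t$, and possibly $o_t$. It must not be merely a statement about the marginal of $\alpha_t$.
\item The fresh noise must be independent of everything generated so far.
\end{itemize}
I would state both explicitly using the standard Borel setting, so that the required conditional laws exist.
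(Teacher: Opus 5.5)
Your proposal is correct and matches the paper's proof: both induct on $T$, use the every-state chart hypothesis (Lemma~\ref{lem:chart}) to get $\alpha_t\sim\gauss_m$ independent of $z_t$, and append the fresh noise to obtain $(z_t,\alpha_t,\varepsilon_t)\sim\gauss_{d+m+k}$. The preserved output is then projected onto its first $d$ coordinates; your explicit handling of the observation-conditioned case via Lemma~\ref{lem:observation-chart}, and of the chart hypothesis as a conditional rather than marginal statement, elaborates that same argument.
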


\begin{proof}
Suppose $z_t\sim\gauss_d$. The deployed policy's
chart image is $\gauss_m$ at every state, so $\alpha_t\sim\gauss_m$ independently of $z_t$
(Lemma~\ref{lem:chart}), and $\varepsilon_t\sim\gauss_k$ is fresh, so
$(z_t,\alpha_t,\varepsilon_t)\sim\gauss_{d+m+k}$. Hence $\Psi(z_t,\alpha_t,\varepsilon_t)\sim
\gauss_{d+m+k}$, whose first $d$ coordinates are $\gauss_d$. Nothing beyond measure preservation of
$\Psi$ was used.
\end{proof}

\begin{proposition}[Strict inclusion under a state-independent chart]\label{prop:nesting}
Suppose the behaviour actions are independent of the Gaussian reference
state and have a state-independent invertible chart $\alpha=G(a)$.
Within the measurable augmented representation, positive-density kernels preserving
$\gauss_d$ for each action form a subclass of kernels preserving the
behaviour-averaged Gaussian law. The inclusion is strict, even for smooth
positive-density transitions realized by orthogonal augmented maps.
\end{proposition}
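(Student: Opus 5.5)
Two things need proof for Proposition~\ref{prop:nesting}: the inclusion, and a strict separation realized by an orthogonal augmented map with a smooth positive density.

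\textbf{Inclusion.} Let the state be $z\sim\gauss_d$ and let the action satisfy $a\sim\pi_b$ independently of $z$, with $G_\#\pi_b=\gauss_m$. Write $K_a$ for the fixed-action kernel. Suppose $\gauss_d K_a=\gauss_d$ for $\pi_b$-almost every $a$. The behaviour-averaged kernel is $\Kbar(z,\cdot)=\int K_a(z,\cdot)\,\pi_b(\dd a)$, since the policy does not depend on $z$. Fubini then gives
\[
\gauss_d\Kbar=\int \gauss_d K_a\,\pi_b(\dd a)=\gauss_d .
\]
By Corollary~\ref{cor:controlled-representation}, any positive-density controlled kernel $K(z,\alpha,\cdot)$ that preserves the averaged law has a measurable augmented representation. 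Per-action preservation is one such averaged-preserving kernel, so the per-action class sits inside the averaged class within this representation.

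\textbf{Strictness.} I will use the orthogonal augmented map
\[
\Psi(z,\alpha,\varepsilon)=Q\,(z,\alpha,\varepsilon),\qquad Q\in\mathrm O(d+m+d),
\]
and take the first $d$ outputs as in Theorem~\ref{thm:Achart}. The simplest case is $d=m=1$:
\[
z'=c\,z+\varrho\,\alpha+s\,\varepsilon,\qquad c^2+\varrho^2+s^2=1,\quad \varrho\ne0,\ s>0 .
\]
This row extends to an orthogonal $3\times3$ matrix by Gram--Schmidt.

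\begin{itemize}
\item By Theorem~\ref{thm:Achart}, the averaged kernel preserves $\gauss_1$, because $(z,\alpha,\varepsilon)\sim\gauss_3$ under the behaviour law.
\item Because $s>0$, the conditional law of $z'$ is $\mathcal N(cz+\varrho\alpha,\,s^2)$. This is a smooth positive density.
\item At a fixed $\alpha$, the law $\gauss_1K_\alpha$ is $\mathcal N(\varrho\alpha,\,c^2+s^2)=\mathcal N(\varrho\alpha,\,1-\varrho^2)$. Its variance is $1-\varrho^2<1$, so it is not $\gauss_1$ for \emph{any} $\alpha$.
\end{itemize}
So this kernel preserves the averaged law but no fixed-action kernel preserves $\gauss_1$. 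For general $d,m$, embed the construction in the first coordinates and act by the identity on the remaining state coordinates, with noise refresh and a conditional density obtained as in Theorem~\ref{thm:parallel-main}. Since $G$ is invertible and state-independent, fixing $\alpha$ is the same as fixing $a$, so per-chart-coordinate failure is per-physical-action failure.

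\textbf{Expected obstacle.} The delicate point is the phrase ``within the measurable augmented representation.'' The subclass inclusion has to be stated at the level of kernels, not of particular maps $\Psi$. I plan to show it by applying Corollary~\ref{cor:controlled-representation} to each kernel, which only needs positive densities. The orthogonal example must also be shown to lie in that representation. This holds because it is itself a Gaussian-preserving $\Psi$ whose first $d$ outputs have the stated conditional law.
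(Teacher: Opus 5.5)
Your proposal is correct and follows the paper's proof. The inclusion is Fubini plus a positive-density augmented representation: you invoke Corollary~\ref{cor:controlled-representation} directly, whereas the paper applies Theorem~\ref{thm:E} conditionally on $\alpha$ with the action block held fixed. Strictness comes from an orthogonal row $z'=cz+\varrho\alpha+s\varepsilon$ whose $c=0$ case is exactly the paper's map $(z,\alpha,\xi)\mapsto(\varrho\alpha+\sqrt{1-\varrho^2}\,\xi,\,z,\,-\sqrt{1-\varrho^2}\,\alpha+\varrho\xi)$. The one loose phrase is in your optional general-$d,m$ extension: the extra state coordinates need an OU-type refresh $c_iz_i+\sqrt{1-c_i^2}\,\varepsilon_i$ with $|c_i|<1$, not the literal identity, or the conditional density degenerates (the paper itself only gives the $d=m=k=1$ witness).
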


\begin{proof}
Every per-action-preserving kernel preserves the state law after averaging
over independent actions. For positive-density kernels, the construction
of Theorem~\ref{thm:E} can be applied conditionally on $\alpha$, with
measurable dependence on this parameter, and the augmented map leaves the
$\alpha$ block fixed. Thus this subclass has a joint preserving representation.

For strictness, take $d=m=k=1$, $\alpha=a\sim\gauss_1$, and
$z'=\varrho a+\sqrt{1-\varrho^2}\,\xi$ with $0<|\varrho|<1$.
The map
\[
 (z,\alpha,\xi)\longmapsto
 \bigl(\varrho\alpha+\sqrt{1-\varrho^2}\,\xi,\ z,
       -\sqrt{1-\varrho^2}\,\alpha+\varrho\xi\bigr)
\]
is orthogonal and has this first coordinate. Its state output is Gaussian
after averaging over actions, while its law at fixed $a$ is
$\mathcal N(\varrho a,1-\varrho^2)\ne\gauss_1$.
\end{proof}

\begin{remark}[Physical actions and state-dependent charts]\label{rem:nesting-state}
If $G$ depends on $z$, preservation at fixed $\alpha$ need not imply
preservation at fixed physical action $a$, because $G(a\mid z)$ then varies
across input states. Conversely, per-action preservation alone need not
preserve occupancy under a state-dependent policy, as
Remark~\ref{rem:closedloop-ce} shows. The two classes therefore need not be
nested for such a policy. The occupancy construction guarantees the
behaviour reference under its conditional-chart hypothesis; it does not
assert that every physically per-action-preserving kernel satisfies that
occupancy constraint.
\end{remark}

\begin{proposition}[Per-action likelihood floor for independent actions]\label{prop:controlled-floor}
Let $z_{t+1}=\varrho a_t+\sqrt{1-\varrho^2}\,\xi_t$ with $|\varrho|<1$,
independent standard Gaussian actions and innovations, and $z_0\sim\gauss_1$.
Among kernels satisfying $\gauss_1\hat K_a=\gauss_1$ for every $a$, the
minimum expected excess conditional negative log-likelihood (NLL) is
$-\tfrac12\log(1-\varrho^2)$ nats. The occupancy-preserving augmented
class contains the truth and has zero excess loss.
\end{proposition}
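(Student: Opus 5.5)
The approach is to reduce the per-action constraint to a pointwise Gaussian identity at each action, and then to compute the best achievable excess loss action by action. The true conditional law is $p(\cdot\mid a)=\mathcal N(\varrho a,1-\varrho^2)$. For any kernel $\hat K_a$ with density $\hat k(\cdot\mid a)$, the expected conditional NLL under the data splits as the conditional entropy of the truth plus $\Exp_a\KL(p(\cdot\mid a)\|\hat k(\cdot\mid a))$. The state $z_t$ is irrelevant to the truth, so a kernel that also reads $z$ can only be handled by averaging over $z$. Accordingly, the excess loss is $\Exp_{a,z}\KL(p(\cdot\mid a)\|\hat k(\cdot\mid z,a))$. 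The minimization is therefore pointwise: for each fixed $a$, and each $z$ if the kernel reads it, we minimize this KL subject to the constraint $\gauss_1\hat K_a=\gauss_1$.

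\textbf{Key step.} Since $z\sim\gauss_1$ independent of $a$, the constraint says that the $z$-average $\bar k_a=\int\hat k(\cdot\mid z,a)\gauss_1(\dd z)$ equals $\gauss_1$. By convexity of KL in its second argument,
\[
\Exp_z\KL(p_a\|\hat k(\cdot\mid z,a))\ge\KL(p_a\|\bar k_a)=\KL(p_a\|\gauss_1).
\]
The lower bound is attained by the $z$-independent kernel $\hat k=\gauss_1$. Thus the per-action minimum excess is exactly $\KL(\mathcal N(\varrho a,1-\varrho^2)\|\gauss_1)$.

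\textbf{Computation.} The Gaussian KL formula gives
\[
\tfrac12\bigl[(1-\varrho^2)+\varrho^2a^2-1-\log(1-\varrho^2)\bigr].
\]
Averaging over $a\sim\gauss_1$ uses $\Exp a^2=1$, so the first three terms cancel and the floor is $-\tfrac12\log(1-\varrho^2)$. The same value is the mutual information $I(a;z')$, which is a useful sanity check.

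\textbf{Truth in the occupancy class.} For the second claim, cite Proposition~\ref{prop:nesting}. Its orthogonal map realizes exactly $z'=\varrho\alpha+\sqrt{1-\varrho^2}\xi$ with $\alpha=a$, and Theorem~\ref{thm:Achart} shows it preserves the behaviour-averaged reference. Since this kernel is the true conditional, its excess is zero.

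\textbf{Main obstacle.} The subtle step is justifying the convexity reduction when the competitor kernel may depend on $z$. The approach treats $\hat K_a$ as a kernel in $z$ and applies joint convexity or Jensen to the $z$-mixture. It also needs care that the expected NLL is finite, so that the entropy term can be subtracted. That holds here because the truth has finite differential entropy; competitors with infinite KL only strengthen the bound.
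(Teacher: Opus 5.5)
Your proof is correct and follows essentially the paper's route. The paper applies Jensen to $-\log$ pointwise in $y$ against the $z$-mixture, which the constraint forces to equal $\varphi$; this is your convexity-of-KL step in cross-entropy form. It attains the bound with the same independent-refresh kernel, computes $H(\gauss_1)$ minus the true conditional entropy (the same mutual-information value you obtain by averaging the Gaussian KL over $a$), and realizes the truth with the orthogonal map of Proposition~\ref{prop:nesting}.
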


\begin{proof}
Write $p_a(y)$ for the true Gaussian conditional and $q_a(y\mid z)$ for
a candidate density. Under the truth, $z$ and $a$ are independent and
$p_a$ does not depend on $z$. Per-action preservation gives
$\int q_a(y\mid z)\gauss(\dd z)=\varphi(y)$ for almost every $y$.
Convexity of $-\log$ therefore yields
\[
 \Exp_z[-\log q_a(y\mid z)]\ge-\log\varphi(y).
\]
Integrating over $a$ and $y\sim p_a$ gives the lower bound
$H(\gauss_1)=\tfrac12\log(2\pi e)$ on expected NLL, attained by the
independent-refresh kernel $q_a(y\mid z)=\varphi(y)$.
Subtracting the truth's entropy
$\tfrac12\log(2\pi e(1-\varrho^2))$ proves the floor. The orthogonal map
in Proposition~\ref{prop:nesting} realizes the truth in the augmented class.
\end{proof}

\begin{theorem}[Policy deviation in conditional total variation]\label{thm:Aprime}
Let $M(z,\alpha,\cdot)$ be a state kernel whose Gaussian-action average
$K$ preserves $\gauss_d$, and write
$q_t(\cdot\mid z)=\Law(\alpha_t\mid Z_t=z)$ under deployment. Define
\begin{equation}
 \eta_t=\int\TV{q_t(\cdot\mid z)-\gauss_m}\,\mu_t(\dd z),
 \qquad d_t=\TV{\mu_t-\gauss_d}.
 \label{eq:alphacond}
\end{equation}
If $\rho=\frac12\sup_{z,w}\TV{K(z,\cdot)-K(w,\cdot)}$ is its
Dobrushin coefficient, then
\begin{equation}
 d_{t+1}\le\rho d_t+\eta_t,\qquad
 d_T\le\rho^T d_0+\sum_{t<T}\rho^{T-1-t}\eta_t.
 \label{eq:apr}
\end{equation}
In particular $d_T\le d_0+\sum_{t<T}\eta_t$, while $\rho<1$ and
$\eta_t\le\bar\eta$ imply $\limsup_T d_T\le\bar\eta/(1-\rho)$.
For the observation-conditioned chart of Lemma~\ref{lem:observation-chart},
$\eta_t\le\min\{2,\sqrt{2\kappa_t}\}$.
\end{theorem}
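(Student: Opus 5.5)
We compare the deployed one-step law with the behaviour-averaged one. Write
\[
\mu_{t+1}=\int\!\!\int M(z,\alpha,\cdot)\,q_t(\dd\alpha\mid z)\,\mu_t(\dd z),
\qquad
\mu_tK=\int\!\!\int M(z,\alpha,\cdot)\,\gauss_m(\dd\alpha)\,\mu_t(\dd z).
\]
Since $K$ preserves the reference, $\gauss_d=\gauss_dK$. The triangle inequality then gives
\[
d_{t+1}\le\TV{\mu_{t+1}-\mu_tK}+\TV{\mu_tK-\gauss_dK}.
\]

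\emph{First term.} For each fixed $z$, the map $q\mapsto\int M(z,\alpha,\cdot)\,q(\dd\alpha)$ is a Markov operator. It is therefore a TV contraction of signed measures (data processing), so
\[
\TV{\textstyle\int M(z,\alpha,\cdot)(q_t(\dd\alpha\mid z)-\gauss_m(\dd\alpha))}\le\TV{q_t(\cdot\mid z)-\gauss_m}.
\]
Integrating over $\mu_t$ and using convexity of the norm bounds the first term by $\eta_t$. Measurability of $z\mapsto q_t(\cdot\mid z)$ comes from disintegration on standard Borel spaces.

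\emph{Second term.} We use the Dobrushin contraction $\TV{(\nu-\nu')K}\le\rho\TV{\nu-\nu'}$ for probability laws $\nu,\nu'$. With the $[0,2]$ convention and $\rho$ defined as half the supremum of pairwise TV distances, this is the standard estimate. Decompose $\nu-\nu'=\tfrac12\TV{\nu-\nu'}(P-P')$ with $P,P'$ mutually singular probability laws. Then
\[
\TV{(P-P')K}\le\iint\TV{K(z,\cdot)-K(w,\cdot)}\,P(\dd z)P'(\dd w)\le2\rho.
\]
Multiplying by $\tfrac12\TV{\nu-\nu'}$ gives the contraction. Applied with $\nu=\mu_t$ and $\nu'=\gauss_d$, the second term is at most $\rho d_t$. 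This yields $d_{t+1}\le\rho d_t+\eta_t$.

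\emph{Iteration and consequences.} Unrolling the recursion by induction gives the closed form in \eqref{eq:apr}. Since $\rho\le1$, every power $\rho^{j}\le1$, which gives $d_T\le d_0+\sum_{t<T}\eta_t$. When $\rho<1$ and $\eta_t\le\bar\eta$, the geometric sum is at most $\bar\eta/(1-\rho)$ and $\rho^Td_0\to0$, giving the $\limsup$ bound.

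\emph{Observation-conditioned case.} The only subtle point is that $q_t(\cdot\mid z)$ now averages over the context. Write $q_t(\cdot\mid z)=q_z=\int q_{z,o}L_t(\dd o\mid z)$ as in the proof of Lemma~\ref{lem:observation-chart}. Convexity of TV gives
\[
\TV{q_z-\gauss_m}\le\int\TV{q_{z,o}-\gauss_m}\,L_t(\dd o\mid z).
\]
For each $(z,o)$, Pinsker's inequality in the $[0,2]$ convention gives
\[
\TV{q_{z,o}-\gauss_m}\le\sqrt{2\KL(q_{z,o}\|\gauss_m)}.
\]
The chart is a bijection modulo null sets and $G_{z,o}$ pushes $\pi_b$ to $\gauss_m$, so KL is invariant under it:
\[
\KL(q_{z,o}\|\gauss_m)=\KL(\pi_t(\cdot\mid z,o)\|\pi_b(\cdot\mid z,o)).
\]
Integrating over $\lambda_t$ and applying Jensen to the concave square root gives $\eta_t\le\sqrt{2\kappa_t}$. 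The trivial bound $\TV{\cdot}\le2$ supplies the minimum.

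I expect the main care to be needed in the Dobrushin step under this paper's normalization, so that the factor $\tfrac12$ in $\rho$ matches the $[0,2]$ TV convention. The remaining steps are standard.
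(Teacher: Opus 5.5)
Your proposal is correct and follows the paper's proof: the triangle inequality through $\mu_tK$, data processing through $M(z,\cdot,\cdot)$ at fixed $z$ for the first term, reference invariance plus the Dobrushin contraction for the second, induction, and then Pinsker, Jensen and chart invariance of KL for the observation-conditioned bound. The only differences are of presentation. You derive the Dobrushin estimate explicitly under the $[0,2]$ normalization, and you apply TV convexity over the context before Pinsker. The paper instead applies Pinsker to $q_t(\cdot\mid z)$ and then uses the observation lemma's KL convexity bound, and the two orderings give the same $\sqrt{2\kappa_t}$.
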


\begin{proof}
Data processing through $M(z,\cdot,\cdot)$ at each fixed $z$ bounds
$\TV{\mu_{t+1}-\mu_tK}$ by $\eta_t$. Reference invariance and the
Dobrushin contraction give
$\TV{\mu_tK-\gauss_d}\le\rho d_t$.
The triangle inequality and induction yield \eqref{eq:apr}.
Pinsker, Jensen, and the conditional KL bound in the observation lemma give
$\eta_t\le\sqrt{2\kappa_t}$.
\end{proof}

\begin{remark}[Invariance and uniform contraction]\label{rem:apr-dobrushin}
A preserving deterministic bijection has Dobrushin coefficient one.
So does a nontrivial scalar Ornstein--Uhlenbeck (OU) kernel on the unbounded line: for
$0<|c|<1$, its transition laws have distance
$4\Phi(|c|\,|z-w|/(2\sqrt{1-c^2}))-2\to2$ as $|z-w|\to\infty$.
Thus a noise floor does not establish $\rho<1$.
\end{remark}

\begin{remark}[Matching action marginals is insufficient]\label{rem:closedloop-ce}
Let the state and behaviour action both be independent uniform bits, with
$K_0$ the identity and $K_1$ the flip. Both kernels preserve uniform states.
Deploying $a=z$ from that reference still gives uniform actions, but sends
both states to zero. The action marginal is unchanged while the state's
TV deviation is one. Its average conditional action deviation is also one.
\end{remark}

\begin{remark}[Fitted-chart error]\label{rem:chart-fit}
Suppose
$\TV{(G_{z,o})_\#\pi_b(\cdot\mid z,o)-\gauss_m}\le\varepsilon_G$
uniformly over $(z,o)$. This gives $d_T\le\min\{2,T\varepsilon_G\}$ from the
reference by Theorem~\ref{thm:Aprime}. For a fitted invertible chart,
KL comparisons instead use the chart-induced policy
$(G_{z,o}^{-1})_\#\gauss_m$.
A TV residual or a sampled correlation alone does not bound that KL.
\end{remark}

\subsection{Static marginals and the signed transient}
\label{app:static-transient}

\begin{proof}[Proof of Theorem~\ref{thm:A}]
Lemma~\ref{lem:observation-chart} includes the generated observation context
and gives $(z_t,\alpha_t)\sim\gauss_{d+m}$.
The preserving joint map therefore yields independent Gaussian blocks $(h,b)$.
Conditional on $b$, every refreshed coordinate has variance
$c_i(b)^2+1-c_i(b)^2=1$, with independent coordinates. The preserving
post-map $S_b$ retains $\gauss_d$ for each $b$; integrating over $b$ and
inducting over time proves the state claim. The Gaussian reference component
has the same output law, so its fixed-probability mixture also preserves it.
Applying the fixed observation map or observation kernel gives the decoded
statement. The uncontrolled case sets $m=0$.
\end{proof}

\begin{proof}[Proof of Proposition~\ref{thm:R}]
Reference preservation gives
$\mu_0K^T-\gauss_d=(\mu_0-\gauss_d)K^T$.
Every Markov kernel contracts total variation, as does the observation map
or a fixed observation kernel. The reverse triangle inequality therefore
implies
\[
 \begin{aligned}
 |D_T-D_\infty|
 &\le D(A_T,p_\theta)
 \le C_D\TV{A_T-p_\theta}\\
 &\le C_D\TV{\mu_0K^T-\gauss_d}
 \le C_D\TV{\mu_0-\gauss_d}.
 \end{aligned}
\]
Define $r_T=D_T-D_\infty$. This proves the identity and both bounds for
any initial law $\mu_0$. Total-variation convergence makes the middle
bound tend to zero; reference invariance alone does not imply convergence.
\end{proof}

\begin{remark}[Energy-distance convergence]\label{rem:Dsq}
For Euclidean observations with finite first moments, write
$\mathcal E(P,Q)=2\mathbb E\|X-Y\|-\mathbb E\|X-X'\|
-\mathbb E\|Y-Y'\|$, with independent draws. Its square root is a metric;
the squared statistic need not satisfy the metric triangle inequality,
and its unbiased sample estimate can be negative. On a space of diameter
$\Delta$, the direct signed-measure argument in
\eqref{eq:parallel-energy} gives
$|\mathcal E(P,R)-\mathcal E(Q,R)|\le2\Delta\TV{P-Q}$.
Thus the transient bound holds with $C=2\Delta$ for this statistic.
Unbounded observation distances require additional moment control.
\end{remark}

\begin{corollary}[Marginal weighting at the optimum]\label{cor:R2}
Let $M(\theta)$ and $C(\theta)$ be fixed marginal and conditional cross-entropy
objectives, and suppose $\theta_w$ is a global minimizer of
$J_w=wM+C$. For $w'>w\ge1$, $M(\theta_{w'})\le M(\theta_w)$,
$C(\theta_{w'})\ge C(\theta_w)$, and
$J_1(\theta_{w'})\ge J_1(\theta_w)$.
Adding $N_{\mathrm{unpaired}}$ single-state observations to
$N_{\mathrm{paired}}$ pairs motivates the population composite-likelihood weight
$w=1+N_{\mathrm{unpaired}}/N_{\mathrm{paired}}$.
\end{corollary}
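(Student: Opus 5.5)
The three monotonicity claims follow from the standard exchange argument for parametric families of minimizers, using only that $\theta_w$ and $\theta_{w'}$ are global minimizers of $J_w$ and $J_{w'}$. Optimality gives
\[
wM(\theta_w)+C(\theta_w)\le wM(\theta_{w'})+C(\theta_{w'}),\qquad
w'M(\theta_{w'})+C(\theta_{w'})\le w'M(\theta_w)+C(\theta_w).
\]
Adding the two and cancelling the $C$ terms yields $(w'-w)\bigl(M(\theta_{w'})-M(\theta_w)\bigr)\le0$. Since $w'>w$, this gives $M(\theta_{w'})\le M(\theta_w)$. Substituting into the first inequality gives $C(\theta_w)-C(\theta_{w'})\le w\bigl(M(\theta_{w'})-M(\theta_w)\bigr)\le 0$, which is the conditional claim. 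The argument needs $M,C$ finite at both minimizers, which I will state explicitly.

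For the third claim, I write $J_1=J_w-(w-1)M$. Evaluating at both points gives
\[
J_1(\theta_{w'})-J_1(\theta_w)=\bigl[J_w(\theta_{w'})-J_w(\theta_w)\bigr]-(w-1)\bigl[M(\theta_{w'})-M(\theta_w)\bigr].
\]
The first bracket is nonnegative because $\theta_w$ minimizes $J_w$. The second term is nonnegative because $w\ge1$ and $M(\theta_{w'})\le M(\theta_w)$ by the first step. This is the only place $w\ge1$ is used. The case $w=1$ is immediate, since $\theta_1$ minimizes $J_1$. I do not expect a real obstacle in these three steps; the only care needed is keeping the signs straight, which the decomposition above handles.

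The final sentence is a motivation, not a theorem, so I will give a heuristic population computation. The composite log-likelihood of $N_{\mathrm{paired}}$ pairs factorizes as $\log p(x)+\log p(x'\mid x)$ per pair. Each unpaired state contributes $\log p(x)$. The per-pair normalized empirical objective is therefore $\bigl(1+N_{\mathrm{unpaired}}/N_{\mathrm{paired}}\bigr)\hat M+\hat C$. Its population limit, under a common stationary marginal for both sources, is $wM+C$ with the stated $w$. I will phrase this as motivation, consistent with the wording of the statement. The main subtlety to flag is that this identification assumes unpaired states share the stationary law $\mu$ and that the ratio $N_{\mathrm{unpaired}}/N_{\mathrm{paired}}$ converges.
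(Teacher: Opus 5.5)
Your proposal is correct and matches the paper's proof: the paper uses the same two optimality inequalities (it subtracts them after rearranging, where you add them) and obtains $\Delta J_1\ge-(w-1)\Delta M\ge0$ from the same inequality. It motivates the weight the same way, by dividing the composite objective $N_{\mathrm{paired}}(M+C)+N_{\mathrm{unpaired}}M$ by $N_{\mathrm{paired}}$; your explicit finiteness assumption at the two minimizers is a reasonable caveat that the paper leaves implicit.
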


\begin{proof}
Optimality at both weights gives
\[
 w'\Delta M+\Delta C\le0,\qquad
 w\Delta M+\Delta C\ge0,
\]
where $\Delta M=M(\theta_{w'})-M(\theta_w)$ and similarly for $C$.
Subtracting gives $(w'-w)\Delta M\le0$, hence $\Delta M\le0$.
The second inequality implies $\Delta C\ge-w\Delta M\ge0$ and
$\Delta J_1\ge-(w-1)\Delta M\ge0$.
For the weight, divide the population composite negative log-likelihood
$N_{\mathrm{paired}}(M+C)+N_{\mathrm{unpaired}}M$ by the number of pairs.
\end{proof}

The ordering applies to population optima of these cross-entropy objectives;
held-out energy distances and finite optimization results have no such ordering.

\subsection{One-step accuracy and path-law divergence}
\label{app:path-laws}

The next identities concern state--action Markov path laws with a shared
behaviour rule. Their state variable contains the information needed for the
Markov property; a pixel observation alone need not suffice.

\begin{proposition}[AR(1) sensitivity near the stability boundary]\label{prop:knife}
Let the truth be $z'=\lambda z+\sqrt{1-\lambda^2}\,\xi$ with $|\lambda|<1$, $\xi\sim\gauss_1$, and let a free
model $z'=mz+s\xi$ have $s^2=1-\lambda^2$ and $m=\lambda+\delta$.
Assume fresh independent innovations and a centered initial law of finite variance. Then
the stationary variance is $v_\infty=(1-\lambda^2)/(1-m^2)$ for $|m|<1$, so
$v_\infty-1=2\lambda\delta/(1-\lambda^2)+O(\delta^2)$, while $v_T\to\infty$ for $|m|\ge1$: the transition is
at $\delta=1-\lambda$ on the positive-memory boundary (the other boundary is $\delta=-1-\lambda$). The reverse one-step KL at marginal variance $v_t$ is
$\delta^2v_t/(2(1-\lambda^2))$.
\end{proposition}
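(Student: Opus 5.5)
The proof is a direct computation with the linear Gaussian recursion, done in three parts: the variance recursion, its fixed point and first-order expansion, and the one-step reverse KL between two Gaussians with the same noise scale.

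\emph{Variance recursion.} Start from $z_{t+1}=mz_t+s\xi_t$, with $\xi_t$ fresh and independent of $z_t$. This gives $\Exp z_{t+1}=m\,\Exp z_t=0$ by induction from the centered initial law. It also gives the variance recursion $v_{t+1}=m^2v_t+s^2$, where $s^2=1-\lambda^2$. Solving the recursion yields
\[
v_T=m^{2T}v_0+s^2\sum_{k<T}m^{2k}.
\]
\begin{itemize}
\item If $|m|<1$, the geometric series converges and $m^{2T}v_0\to0$, so $v_T\to v_\infty=(1-\lambda^2)/(1-m^2)$. This fixed point is the unique solution of $v=m^2v+s^2$.
\item If $|m|\ge1$, every summand is at least $s^2>0$. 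Hence $v_T\ge Ts^2\to\infty$; this uses $|\lambda|<1$, which makes $s^2>0$.
\end{itemize}
The condition $|m|=|\lambda+\delta|<1$ is equivalent to $-1-\lambda<\delta<1-\lambda$. This identifies the two boundaries, with $\delta=1-\lambda$ on the positive-memory side.

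\emph{First-order expansion.} Write $1-m^2=(1-\lambda^2)-2\lambda\delta-\delta^2$. Then
\[
v_\infty=\bigl(1-u\bigr)^{-1},\qquad u=\frac{2\lambda\delta+\delta^2}{1-\lambda^2}.
\]
Expanding $(1-u)^{-1}=1+u+O(u^2)$ for small $\delta$ gives $v_\infty-1=2\lambda\delta/(1-\lambda^2)+O(\delta^2)$. The constant in the $O(\delta^2)$ term depends on $\lambda$ and blows up as $|\lambda|\to1$. This is the point of the "near the boundary" phrasing, and I would remark on it rather than prove anything uniform.

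\emph{Reverse KL.} At a fixed state $z$, the model conditional is $\mathcal N(mz,s^2)$ and the true conditional is $\mathcal N(\lambda z,s^2)$. The variances agree, so the Gaussian KL in either direction is $(m-\lambda)^2z^2/(2s^2)=\delta^2z^2/(2(1-\lambda^2))$. Averaging over $z\sim\Law(z_t)$, whose second moment is $v_t$ because the law is centered, gives $\delta^2v_t/(2(1-\lambda^2))$.

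The only subtle point is interpretation. "Reverse one-step KL at marginal variance $v_t$" must mean the expectation of the conditional KL under the model's own state law at time $t$. Only the second moment of that law enters, so the formula holds for any finite-variance law and needs no Gaussianity.

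The main obstacle is therefore not technical; it is stating the regimes carefully. The expansion needs $|m|<1$, which holds for sufficiently small $\delta$. The divergence claim uses $s^2>0$. The KL formula holds for every $t$, including the unstable regime, where $v_t\to\infty$ makes the one-step KL diverge even though $\delta$ is fixed.
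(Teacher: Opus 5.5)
Your proposal is correct and follows the paper's proof essentially step for step: the variance recursion $v_{t+1}=m^2v_t+s^2$ with its fixed point for $|m|<1$ and divergence for $|m|\ge1$, then the substitution $m=\lambda+\delta$ with a first-order expansion, then the equal-variance Gaussian KL averaged under the centered model marginal. The differences are cosmetic. You solve the recursion in closed form, which also yields $v_T\to v_\infty$ (slightly more than the paper's fixed-point statement), and you locate the boundaries through $|\lambda+\delta|<1$ rather than through the zero of the denominator.
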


\begin{proof}
The model's marginal variance obeys $v_{t+1} = m^2 v_t + s^2$ with $s^2 = 1-\lambda^2$. If $|m|<1$
the unique fixed point is $v_\infty = s^2/(1-m^2)$; if $|m|\ge1$ then $v_t \to\infty$ since
$v_{t+1}-v_t = (m^2-1)v_t + s^2 \ge s^2 > 0$. Substituting $m = \lambda+\delta$,
\[
v_\infty - 1 = \frac{(1-\lambda^2) - (1-m^2)}{1-m^2} = \frac{2\lambda\delta+\delta^2}{1-\lambda^2-2\lambda\delta-\delta^2}
= \frac{2\lambda\delta}{1-\lambda^2} + O(\delta^2),
\]
the expansion being local to $\delta=0$, with $|2\lambda\delta+\delta^2|<1-\lambda^2$. The denominator vanishes at
$2\lambda\delta+\delta^2 = 1-\lambda^2$, \ie at
$\delta^\star = -\lambda + \sqrt{\lambda^2+1-\lambda^2} = 1-\lambda = (1-\lambda^2)/(1+\lambda)$. For the KL, both kernels are Gaussian with the
same variance $s^2 = 1-\lambda^2$ and means $\lambda z$ and $mz$, so
$\mathrm{KL}(\mathcal{N}(mz,s^2)\Vert\mathcal{N}(\lambda z,s^2)) = (mz-\lambda z)^2/(2s^2)
= \delta^2 z^2/(2(1-\lambda^2))$; taking expectation under the centered model marginal, for which
$\mathbb E z^2=v_t$, gives the claim.
Finally, under $m=\cos\phi$, $s=\sin\phi$ we have $1-m^2 = s^2$ identically, so
$v_\infty = s^2/(1-m^2) = 1$ for every $\phi$ with $\sin\phi\ne0$, and $v_{t+1} = \cos^2\!\phi\,v_t +
\sin^2\!\phi$ has $v_t\equiv1$ whenever $v_0=1$.
\end{proof}

\begin{lemma}[Horizon decomposition]\label{lem:decomp}
Let $P^\theta_{1:T}$ and $P^\star_{1:T}$ be time-homogeneous Markov
state--action path laws with the same conditional policy $\pi_b(\dd a\mid z)$,
containing $T$ states and $T-1$ actions. Let the true initial law be $\mu$. Then
\begin{equation}
\KL(P^\theta_{1:T}\|P^\star_{1:T})=\KL(\mu_1^\theta\|\mu)
+\sum_{t=1}^{T-1}\Exp_{z\sim\mu_t^\theta}\Exp_{a\sim\pi_b(\cdot\mid z)}\ell_\theta(z,a),
\label{eq:decomp}
\end{equation}
where $\ell_\theta(z,a)=\KL(K_\theta(z,a,\cdot)\|K^\star(z,a,\cdot))$.
Relative entropies may take the value $+\infty$. When the terms are finite,
time dependence of the incremental reverse KL enters through $\mu_t^\theta$.
\end{lemma}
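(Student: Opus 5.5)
The approach is the chain rule for relative entropy applied to path laws, factorized in Markov order, followed by a comparison of the shared factors. Write each path law on $(z_1,a_1,z_2,\dots,a_{T-1},z_T)$ as a product of successive conditionals:
\[
P^\theta_{1:T}(\dd z_{1:T},\dd a_{1:T-1})=\mu^\theta_1(\dd z_1)\prod_{t=1}^{T-1}\pi_b(\dd a_t\mid z_t)\,K_\theta(z_t,a_t,\dd z_{t+1}),
\]
and analogously for $P^\star_{1:T}$ with $\mu$ and $K^\star$. Standard Borel spaces guarantee that these disintegrations exist. Time homogeneity and the Markov property make each conditional depend only on the immediately preceding variable.

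Next, apply the KL chain rule successively along this ordering, $2T-1$ conditional factors in all. The first term is $\KL(\mu^\theta_1\|\mu)$. Each action factor contributes $\Exp\,\KL(\pi_b(\cdot\mid z_t)\|\pi_b(\cdot\mid z_t))=0$, because both path laws share the same conditional policy. Each state factor contributes the expectation of $\ell_\theta(z_t,a_t)$ under the $\theta$-path law of the conditioning history. The Markov property reduces that history to $(z_t,a_t)$, so the expectation only requires the $\theta$-marginal of $(z_t,a_t)$. That marginal is $\mu^\theta_t(\dd z)\pi_b(\dd a\mid z)$, which gives exactly \eqref{eq:decomp}.

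The step needing the most care is the infinite case and measurability. I would invoke the general chain rule for relative entropy on standard Borel spaces, valid in $[0,\infty]$ with nonnegative terms: $\KL(P_{XY}\|Q_{XY})=\KL(P_X\|Q_X)+\int\KL(P_{Y\mid x}\|Q_{Y\mid x})P_X(\dd x)$. This identity holds even when either side is $+\infty$, and joint measurability of $x\mapsto\KL(P_{Y\mid x}\|Q_{Y\mid x})$ follows from the variational (Donsker--Varadhan) or partition-supremum representation. Using it, no integrability assumptions are needed, and all sums of nonnegative terms are well defined.

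The final remark follows by reading off \eqref{eq:decomp}. The kernel divergence $\ell_\theta$ does not depend on $t$, so when the terms are finite, any variation across horizons comes only through the integrating law $\mu^\theta_t$, the model's own rollout marginal rather than the data law.
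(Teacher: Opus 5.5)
Your proposal is correct and matches the paper's proof: factor both path laws as the initial law times alternating policy and transition kernels, apply the relative-entropy chain rule so that the shared policy factors contribute zero, and identify each transition term's integrating law as $\mu_t^\theta\otimes\pi_b$. Your added care about the $[0,\infty]$-valued chain rule and measurability of $x\mapsto\KL(P_{Y\mid x}\|Q_{Y\mid x})$ makes explicit what the paper's brief argument leaves implicit.
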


\begin{proof}
The joint path law factors as the initial state law times
$\prod_{t=1}^{T-1}\pi_b(\dd a_t\mid z_t)K(z_t,a_t,\dd z_{t+1})$.
The chain rule for relative entropy gives the initial KL plus the expected
sum of conditional transition KLs; the common action factors contribute zero.
The policy factors agree at each history, so each expectation is
$\Exp_{z\sim\mu_t^\theta}\Exp_{a\sim\pi_b(\cdot\mid z)}\ell_\theta(z,a)$.
If actions are marginalized out of the path law, data processing instead
bounds that state-only KL above by the joint-path expression; equality need
not hold.
\end{proof}

\begin{theorem}[Constant reverse relative-entropy rate]\label{thm:B}
Under Lemma~\ref{lem:decomp}, suppose $\mu_t^\theta=\mu$ for all $t$ and
$\ell_0^{\mathrm{rev}}:=\Exp_{z\sim\mu}\Exp_{a\sim\pi_b(\cdot\mid z)}\ell_\theta(z,a)<\infty$.
Then
$\KL(P^\theta_{1:T}\|P^\star_{1:T})=(T-1)\ell_0^{\mathrm{rev}}$.
The rate may be zero; otherwise the divergence grows linearly with horizon.
\end{theorem}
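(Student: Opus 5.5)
The result is a direct substitution into the horizon decomposition of Lemma~\ref{lem:decomp}. I will take \eqref{eq:decomp} as given and evaluate its two ingredients under the hypothesis $\mu_t^\theta=\mu$ for all $t$.

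\textbf{Initial term.} Setting $t=1$ gives $\mu_1^\theta=\mu$, so the initial relative entropy $\KL(\mu_1^\theta\|\mu)$ vanishes.

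\textbf{Transition terms.} For each $t=1,\dots,T-1$, the $t$-th summand is
\[
\Exp_{z\sim\mu_t^\theta}\Exp_{a\sim\pi_b(\cdot\mid z)}\ell_\theta(z,a).
\]
With $\mu_t^\theta=\mu$, this is exactly $\ell_0^{\mathrm{rev}}$.

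The policy factor is the same $\pi_b(\cdot\mid z)$ at every step. This holds because the path laws are time-homogeneous and share the conditional policy, so the inner expectation does not depend on $t$. Summing $T-1$ identical finite terms gives
\[
\KL(P^\theta_{1:T}\|P^\star_{1:T})=(T-1)\ell_0^{\mathrm{rev}}.
\]

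\textbf{Finiteness.} The assumption $\ell_0^{\mathrm{rev}}<\infty$ is what rules out an $\infty-\infty$ or $0\cdot\infty$ ambiguity. Since every term in \eqref{eq:decomp} is nonnegative, the sum is well defined even without it, but finiteness makes the identity an equality of real numbers.

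\textbf{Dichotomy.} The concluding sentence follows directly from the formula. If $\ell_0^{\mathrm{rev}}=0$, the divergence is zero at every horizon. Otherwise $\ell_0^{\mathrm{rev}}>0$, and the divergence grows linearly in $T$ with slope $\ell_0^{\mathrm{rev}}$.

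The only point requiring care is checking that Lemma~\ref{lem:decomp} applies verbatim: both path laws must start from their respective initial laws with $\mu_1^\theta=\mu$. This is part of the hypothesis, so there is no real obstacle.
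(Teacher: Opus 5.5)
Your proof is correct and matches the paper's own argument. With $\mu_t^\theta=\mu$, the initial term $\KL(\mu_1^\theta\|\mu)$ in \eqref{eq:decomp} vanishes and each of the $T-1$ transition summands equals $\ell_0^{\mathrm{rev}}$, so the identity and the zero-or-linear dichotomy follow immediately.
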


\begin{remark}[Forward sequence likelihood]\label{rem:blind}
For stationary true data and the same conditional behaviour policy,
\begin{align*}
 \KL(P^\star_{1:T}\|P^\theta_{1:T})
 &=\KL(\mu\|\mu_1^\theta)+(T-1)\ell_0^{\mathrm{fwd}},\\
 \ell_0^{\mathrm{fwd}}
 &:=\Exp_{z\sim\mu}\Exp_{a\sim\pi_b(\cdot\mid z)}
       \KL(K^\star(z,a,\cdot)\|K_\theta(z,a,\cdot)).
\end{align*}
When these terms are finite, forward sequence KL is affine in $T$ without
an invariance assumption on the model. Teacher-forced negative log-likelihood
estimates the corresponding cross-entropy, differing from KL by the
model-independent data entropy when that entropy is well defined.
\end{remark}

\begin{proof}[Proof of Theorem~\ref{thm:B} and Remark~\ref{rem:blind}]
For the reverse direction, every summand in Lemma~\ref{lem:decomp} equals
$\ell_0^{\mathrm{rev}}$ and the initial KL is zero. For the forward
direction, the chain rule takes expectations under the true path law.
Its state marginal is $\mu$ at every step, giving the stated constant
conditional contribution irrespective of the model's rollout marginals.
\end{proof}

\subsection{Sequential scan, copulas, and operator certificates}\label{app:cert-mix}
This parameterization is distinct from the parallel joint architecture in Section~\ref{sec:architecture}:
\begin{equation}
y_i=\cos\phi_i\odot x_i+\sin\phi_i\odot\varepsilon_i,\qquad
\phi_i=\phi_i(y_{<i},x_{>i},\alpha),\quad\varepsilon_i\sim\gauss.
\label{eq:noisestage}
\end{equation}
Here $i$ indexes blocks, and $\alpha$ is an independent conditioning variable.

\begin{proof}[Exactness of the sequential scan]
Fix the independent action coordinate $\alpha$ and induct on $i$.
Write $C_i := (y_{1:i-1}, x_{i+1:B},\alpha)$ for the conditioning set. The inductive
hypothesis is $(y_{1:i-1}, x_{i:B})\mid\alpha\sim\gauss_d$. Given $C_i$, the block $x_i$ is conditionally
$\gauss$ --- the inductive hypothesis is a product Gaussian law conditional on $\alpha$. Since $\varepsilon_i\sim\gauss$ is fresh and
independent of everything, and $\phi_i$ is a function of $C_i$ only,
$y_i = \cos\phi_i x_i + \sin\phi_i\varepsilon_i$ is, conditionally on $C_i$, Gaussian with mean $0$
and variance $\cos^2\phi_i + \sin^2\phi_i = 1$, and is independent of $C_i$. Hence
$(y_{1:i}, x_{i+1:B})\sim\gauss_d$, closing the induction. Including an earlier \emph{input}
$x_j$ with $j<i$ in the arguments of $\phi_i$ can invalidate this conditioning argument: the resulting angle need not be
$C_i$-measurable, and the conditional Gaussian calculation no longer applies. Appendix~\ref{app:exactness} exhibits a scalar counterexample in which each
branch is measure preserving and the mixture is not.
\end{proof}

\begin{lemma}[Invariance under noise-only reparameterization]\label{lem:N}
Let a transition use a fresh independent Gaussian noise vector $\varepsilon$.
Replacing it by $W(\varepsilon)$ for a fixed Gaussian-preserving map $W$
leaves the conditional transition kernel unchanged. Parameters affecting
only this replacement are therefore unidentifiable from that kernel.
\end{lemma}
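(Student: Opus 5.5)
The plan is to compare the two kernels as mixtures over the noise law and reduce the claim to a single change of variables. Write the original transition as $y=F(x,\varepsilon)$, where $F$ is a measurable map, $x$ collects all conditioning inputs (state, charted action, context), and $\varepsilon\sim\gauss_k$ is fresh and independent of $x$. The conditional kernel is then
\[
\kappa(x,B)=\int \mathbf 1_B\bigl(F(x,\varepsilon)\bigr)\,\gauss_k(\dd\varepsilon),
\]
that is, the pushforward of $\gauss_k$ under $F(x,\cdot)$. The reparameterized transition uses $F(x,W(\varepsilon))$, so its kernel is the pushforward of $\gauss_k$ under $F(x,\cdot)\circ W$. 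By functoriality this equals the pushforward of $W_\#\gauss_k$ under $F(x,\cdot)$. Since $W$ is Gaussian preserving, $W_\#\gauss_k=\gauss_k$, and the two kernels agree for every $x$ and every measurable $B$.

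Concretely, the key step is the substitution $\varepsilon'=W(\varepsilon)$ in the integral:
\[
\int \mathbf 1_B\bigl(F(x,W(\varepsilon))\bigr)\,\gauss_k(\dd\varepsilon)
=\int \mathbf 1_B\bigl(F(x,\varepsilon')\bigr)\,(W_\#\gauss_k)(\dd\varepsilon').
\]
This identity needs only measurability of $W$ and of $F(x,\cdot)$. It requires no invertibility or differentiability, which is exactly why the lemma says only ``Gaussian-preserving map''. Independence of $W(\varepsilon)$ from $x$ holds because $W$ is fixed, meaning it does not depend on $x$ or on the parameters being fit through $F$. I would state this explicitly, since a $W$ depending on $x$ would break the argument: that is the same failure mode as the sequential-scan counterexample mentioned above.

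For unidentifiability, parametrize the model as $(\theta,\omega)$, where $\omega$ enters only through $W_\omega$. For every $\omega$, $W_\omega$ is Gaussian preserving. The kernel $\kappa_{\theta,\omega}$ therefore equals $\kappa_{\theta,\omega_0}$ for all $\omega$, by the step above. Every functional of the kernel is then constant in $\omega$: conditional likelihoods, path laws via Lemma~\ref{lem:decomp}, and rollout marginals. So $\omega$ cannot be recovered from the kernel, and gradients of any kernel-based objective with respect to $\omega$ vanish wherever they exist.

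There is no serious obstacle. The only care needed is the formal setup: the transition must be expressed as a measurable function of inputs and fresh noise, and ``fixed'' must mean independent of the conditioning variables. For the exact density formula \eqref{eq:parallel-density-main}, I would add a remark that the density is a property of the kernel. The Jacobian of $W$ never appears, because $\varepsilon$ is integrated out rather than conditioned on.
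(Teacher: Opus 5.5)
Your argument is correct and matches the paper's proof: at each fixed conditioning value the noise argument still has law $W_\#\gauss_k=\gauss_k$, so integrating the same transition function against it gives the same kernel, and parameters that enter only through $W$ cannot be recovered from that kernel. One side remark is wrong and should be dropped: a family $W_x$ that depends on $x$ but preserves $\gauss_k$ for every $x$ (and is jointly measurable) does \emph{not} break the argument, because your change of variables is applied pointwise in $x$ and gives $F(x,\cdot)_\#(W_x)_\#\gauss_k=F(x,\cdot)_\#\gauss_k$. The sequential-scan counterexample is about a conditioner reading the coordinate being refreshed, which affects preservation of the state marginal rather than this noise-reparameterization invariance.
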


\begin{proof}
For each fixed state and action, the noise argument still has law
$W_\#\gauss=\gauss$. Integrating the same transition function against
this unchanged law gives the same conditional output law. State-dependent
variability must enter the transition's dependence on state, rather than
an independent preserving transformation of its noise alone.
\end{proof}

\begin{theorem}[Gaussian-preserving scalar kernels and copulas]\label{thm:L}
Let $k(z,\cdot)$ be a Markov kernel on $\mathbb{R}$ with density. Substituting $p = \Phi(z)$,
$u = \Phi(y)$ and setting $c(p,u) := k(\Phi^{-1}p, \Phi^{-1}u)/\varphi(\Phi^{-1}u)$ defines a
bijection, modulo null sets, between
\begin{enumerate}[leftmargin=1.6em,itemsep=1pt,topsep=2pt]
\item Markov kernels $k$ with $\gauss_1 k = \gauss_1$, and
\item copula densities $c$ on $(0,1)^2$, \ie non-negative $c$ with
$\int_0^1 c(p,u)\,\dd u = 1$ for a.e.\ $p$ and $\int_0^1 c(p,u)\,\dd p = 1$ for a.e.\ $u$
(doubly stochastic kernels on the unit square).
\end{enumerate}
The OU rotation $y=\rho z+\sqrt{1-\rho^2}\,\xi$ with $|\rho|<1$ corresponds to the
\emph{Gaussian} copula with correlation $\rho$. The set in (2) is convex and
infinite-dimensional; the Gaussian copulas form a one-parameter curve inside it.
\end{theorem}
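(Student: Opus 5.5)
The plan is to treat this as a change of variables in both arguments, with the Gaussian marginal constraint on $k$ turning into the two uniform-marginal constraints on $c$. For a kernel $k$ with density, set $C(p,\dd u)$ to be the law of $\Phi(Y)$ when $Y\sim k(\Phi^{-1}p,\cdot)$. Since $\Phi$ is a diffeomorphism of $\mathbb R$ onto $(0,1)$ with derivative $\varphi$, this law has density $k(\Phi^{-1}p,\Phi^{-1}u)/\varphi(\Phi^{-1}u)=c(p,u)$. Nonnegativity is immediate. The row condition $\int_0^1 c(p,u)\,\dd u=1$ is exactly the statement that $k(z,\cdot)$ is a probability density, checked by substituting $u=\Phi(y)$. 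The inverse map is $k(z,y)=c(\Phi(z),\Phi(y))\varphi(y)$, so the correspondence is bijective between kernels with density (modulo null sets in $z$ and $y$) and nonnegative row-normalized functions on the square.

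Next I show the column condition is equivalent to $\gauss_1k=\gauss_1$. Substituting $p=\Phi(z)$, so that $\dd p=\varphi(z)\,\dd z$, gives
\[
\int_0^1 c(p,u)\,\dd p=\frac{1}{\varphi(\Phi^{-1}u)}\int_{\mathbb R}\varphi(z)\,k(z,\Phi^{-1}u)\,\dd z .
\]
This equals $1$ for almost every $u$ if and only if the density of $\gauss_1k$ equals $\varphi$ almost everywhere. That is precisely Gaussian preservation, which is Proposition~\ref{prop:C} applied with $T=\Phi$ carrying $\gauss_1$ to Lebesgue measure on $(0,1)$. The only care needed is the null-set bookkeeping: both conditions hold almost everywhere, and the kernel is determined only up to $\gauss_1$-null sets of starting points. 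I expect this to be the one step needing attention, and Fubini–Tonelli on nonnegative integrands handles it.

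For the OU example, $k(z,y)=\mathcal N(y;\rho z,1-\rho^2)$, and the joint law of $(z,y)$ under $z\sim\gauss_1$ is bivariate standard normal with correlation $\rho$. The function $c(p,u)$ is this joint density divided by $\varphi(z)\varphi(y)$, evaluated at $z=\Phi^{-1}p$ and $y=\Phi^{-1}u$. By definition this is the Gaussian copula density with parameter $\rho$.

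For the final structural claims, convexity follows because all three defining conditions (nonnegativity and the two linear normalizations) are preserved under convex combinations. For infinite dimensionality, I will exhibit infinitely many linearly independent perturbations of the independence copula $c\equiv1$. Take $1+\epsilon\sin(2\pi jp)\sin(2\pi lu)$ for $j,l\ge1$ with $|\epsilon|<1$: each is nonnegative, and each has zero-mean rows and columns, so it is a copula density. These perturbation directions are linearly independent. By contrast, the Gaussian copulas are indexed by the single parameter $\rho\in(-1,1)$, so they form a one-parameter curve inside this set.
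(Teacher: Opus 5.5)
Your proposal is correct and matches the paper's proof step for step. The substitutions $u=\Phi(y)$ and $p=\Phi(z)$ turn normalization of $k$ and Gaussian preservation into the row and column conditions, and the explicit inverse $k(z,y)=c(\Phi(z),\Phi(y))\varphi(y)$ gives the bijection. The OU joint law yields the Gaussian copula. Bounded zero-margin perturbations of $c\equiv1$ (your sine products in place of the paper's cosine products) give infinite dimensionality.

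The one quibble is that the appeal to Proposition~\ref{prop:C} is unnecessary, because your displayed computation already proves the equivalence. As stated, that proposition takes the Gaussian as the target law rather than Lebesgue measure on $(0,1)$, although its proof works for any target.
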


\begin{proof}
Normalization of $k$ is $\int k(z,y)\,\dd y = 1$; substituting $y = \Phi^{-1}u$, $\dd y = \dd u/\varphi(\Phi^{-1}u)$
gives $\int_0^1 c(p,u)\,\dd u = 1$. Prior preservation is
$\int k(z,y)\varphi(z)\,\dd z = \varphi(y)$; substituting $z = \Phi^{-1}p$ and dividing by
$\varphi(y) = \varphi(\Phi^{-1}u)$ gives $\int_0^1c(p,u)\,\dd p = 1$. Both substitutions are
invertible, so the correspondence is a bijection. For the OU case, $(z,y)$ is a centered bivariate
Gaussian with correlation $\rho$ and unit marginals, whose copula is by definition the Gaussian
copula with that correlation. Convexity of (2) is immediate since both constraints are linear;
infinite-dimensionality follows because for any mean-zero $h\in L^\infty((0,1)^2)$ with
$\int h(p,u)\dd u = \int h(p,u)\dd p = 0$, $1+\epsilon h$ is a copula density for small $\epsilon$,
and such $h$ span an infinite-dimensional space (\eg the products
$\cos(2\pi jp)\cos(2\pi ku)$, $j,k\ge1$).
\end{proof}

The scalar correspondence identifies a concrete stochastic-family
restriction. Conditional coefficients can vary with admissible untouched
coordinates in a multivariate scan, so the full architecture is not a
single global one-parameter family. The example in
Appendix~\ref{app:chartsub} isolates the constant-correlation scalar case.

\begin{definition}[Markov operator and mean-zero norm]\label{def:transfer}
For a $\gauss$-preserving kernel $\tilde K$ define $P_\theta:L^2(\gauss)\to L^2(\gauss)$ by
$(P_\theta f)(z):=\int f(z')\,\tilde K(z,\dd z')$, and let
\begin{equation}
\sigma_2\;:=\;\norm{P_\theta}_{L^2_0(\gauss)}\;=\;
 \sup\Big\{\norm{P_\theta f}_{L^2(\gauss)} : \Exp_\gauss f=0,\ \norm{f}_{L^2(\gauss)}=1\Big\},
\label{eq:sigma2}
\end{equation}
the operator norm on the mean-zero subspace $L^2_0(\gauss)=\mathbf{1}^\perp$.
\end{definition}

\begin{theorem}[One-step contraction rate]\label{thm:M}
Let $\tilde K$ be $\gauss$-preserving. Then:
\textbf{(i)} $P_\theta\mathbf{1}=\mathbf{1}$, $\norm{P_\theta}_{L^2(\gauss)}=1$, and $P_\theta$ maps
$L^2_0(\gauss)$ into itself, so \eqref{eq:sigma2} is well posed and $\sigma_2\le1$.
\textbf{(ii)} For any initial law $\mu_0=(1+h)\gauss$ with $h\in L^2_0(\gauss)$ --- \ie any $\mu_0$ of
finite $\chi^2$ divergence from $\gauss$ --- the $\chi^2$ divergence contracts geometrically,
\begin{equation}
\chi^2(\mu_0\tilde K^T\,\|\,\gauss)^{1/2}\;\le\;\sigma_2^{\,T}\,\chi^2(\mu_0\,\|\,\gauss)^{1/2} ,
\label{eq:chi2rate}
\end{equation}
and hence $\norm{\mu_0\tilde K^T-\gauss}\le\sigma_2^{\,T}\chi^2(\mu_0\|\gauss)^{1/2}$ in the
total-variation norm Proposition~\ref{thm:R} uses.
\textbf{(iii)} $\sigma_2$ is a \emph{one-step} functional: it is determined by the population joint law
of $(z,z')$ with $z\sim\gauss$. Its definition is independent of the rollout horizon.
\textbf{(iv)} Restricting the supremum in \eqref{eq:sigma2} to any finite-dimensional
$V\subset L^2_0(\gauss)$ gives a lower bound $\sigma_2(V)\le\sigma_2$, and
$\sup_n\sigma_2(V_n)=\sigma_2$ along any nested family $V_1\subset V_2\subset\cdots$ whose union is dense in
$L^2_0(\gauss)$. The operator need not be compact, so the supremum in
\eqref{eq:sigma2} need not be attained and a fixed $V$ need not achieve equality.
\end{theorem}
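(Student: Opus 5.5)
The plan is to treat $P_\theta$ as a Markov operator on $L^2(\gauss)$ and handle the four parts in order, each resting on Jensen's inequality and the preservation identity $\gauss\tilde K=\gauss$.

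\textbf{Part (i).} First, $P_\theta\mathbf 1=\mathbf 1$ holds because $\tilde K(z,\cdot)$ is a probability measure. Next, conditional Jensen gives $(P_\theta f)^2\le P_\theta(f^2)$ pointwise. Integrating against $\gauss$ and using preservation yields
\[
\norm{P_\theta f}^2_{L^2(\gauss)}\le\int f^2\,\dd(\gauss\tilde K)=\norm{f}^2_{L^2(\gauss)}.
\]
This shows that $P_\theta$ is well defined on $L^2(\gauss)$, that its norm is at most one, and, with $f=\mathbf 1$, that the norm equals one. Preservation also gives $\Exp_\gauss P_\theta f=\Exp_\gauss f$, so $L^2_0(\gauss)$ is invariant and $\sigma_2\le1$.

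\textbf{Part (ii).} Here I work with the adjoint $P_\theta^*$ in $L^2(\gauss)$. It satisfies
\[
\frac{\dd(\mu\tilde K)}{\dd\gauss}=P_\theta^*\frac{\dd\mu}{\dd\gauss}
\]
for $\mu\ll\gauss$ with $L^2$ density. I would verify this by pairing with bounded test functions $g$:
\[
\int g\,\dd(\mu\tilde K)=\int (P_\theta g)(1+h)\,\dd\gauss=\langle g,P_\theta^*(1+h)\rangle.
\]
Since $P_\theta^*\mathbf 1=\mathbf 1$ (this is preservation) and $P_\theta^*$ preserves $L^2_0$ (this is $P_\theta\mathbf 1=\mathbf 1$), the density after $T$ steps is $1+(P_\theta^*)^Th$. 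The operator norm of $P_\theta^*$ on $L^2_0$ equals that of $P_\theta$, namely $\sigma_2$, because $L^2_0$ is invariant under both. It follows that $\chi^2(\mu_0\tilde K^T\|\gauss)^{1/2}=\norm{(P_\theta^*)^Th}\le\sigma_2^T\norm{h}$. The total-variation bound then comes from Cauchy--Schwarz in the $[0,2]$ convention:
\[
\TV{\mu-\gauss}=\int|h|\,\dd\gauss\le\norm h_{L^2(\gauss)}.
\]

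\textbf{Part (iii).} The inner products $\langle g,P_\theta f\rangle=\Exp[g(z)f(z')]$, with $z\sim\gauss$ and $z'\sim\tilde K(z,\cdot)$, are determined by the one-step pair law. The norms, and hence the supremum defining $\sigma_2$, are therefore functionals of that joint law alone. No $T$ appears in the definition.

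\textbf{Part (iv).} Monotonicity of the restricted supremum is immediate. For the limit, given $\epsilon>0$ I pick a unit $f\in L^2_0$ with $\norm{P_\theta f}>\sigma_2-\epsilon$. By density I then choose $f_n\in V_n$ with $f_n\to f$; since $\norm{P_\theta}\le1$, continuity gives $\norm{P_\theta f_n}/\norm{f_n}\to\norm{P_\theta f}$. Because the $V_n$ are nested, $\sigma_2(V_n)$ is nondecreasing, and its supremum is at least $\sigma_2-\epsilon$. For non-attainment, a non-compact example suffices, such as the identity kernel restricted to a fixed $V$ together with a multiplication-type kernel. It is enough to remark that an attained supremum requires norm-attaining vectors, which bounded operators need not have.

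The main obstacle is the adjoint identification in (ii), in particular justifying that $\mu_0\tilde K^T$ retains an $L^2$ density. I expect this to follow from the bound $\norm{P_\theta^*}\le1$ together with Riesz representation applied to the functional $g\mapsto\int g\,\dd(\mu\tilde K)$.
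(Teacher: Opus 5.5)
Your argument is correct and follows the paper's proof essentially step for step. For (i) you use Jensen plus preservation. For (ii) you use the $L^2(\gauss)$ adjoint, the density $1+(P_\theta^*)^Th$, the equal norm of the adjoint on $L^2_0(\gauss)$, and Cauchy--Schwarz. For (iii) you determine $P_\theta$ from one step, via the bilinear form $\Exp[g(z)f(z')]$ rather than the paper's disintegration, which is an equivalent route. For (iv) you approximate a near-maximiser inside the nested $V_n$ using $\norm{P_\theta}\le1$.

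The obstacle you flag at the end is already closed by your own pairing identity. Taking $g=\mathbf 1_B$ gives $(\mu_0\tilde K)(B)=\int_B P_\theta^*(1+h)\,\dd\gauss$. So $\mu_0\tilde K$ has the $L^2(\gauss)$ density $P_\theta^*(1+h)$, with no further Riesz step needed.

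The one piece that does not work is your non-attainment illustration. The identity kernel attains its norm at every vector, so every nonzero $V$ achieves equality. A multiplication operator $f\mapsto mf$ is Markov only when $m\equiv1$. Neither therefore witnesses the ``need not'' clause. The general fact that bounded operators can fail to attain their norms says nothing about $\gauss$-preserving Markov operators specifically. The paper's proof gives no example either; it only notes that the approximation argument never uses a maximiser.

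If you want a witness, take $\epsilon(z)=e^{-\norm{z}^2}$ and the lazy kernel $\tilde K(z,\cdot)=(1-\epsilon(z))\delta_z+\epsilon(z)\rho$, with $\rho=\epsilon\gauss/\Exp_\gauss\epsilon$.
\begin{itemize}
\item It preserves $\gauss$.
\item Take $f\in L^2_0(\gauss)$ supported in $\{\norm{z}>R\}$ and orthogonal to $\epsilon$. Then $P_\theta f=(1-\epsilon)f$, so $\norm{P_\theta f}\ge(1-e^{-R^2})\norm{f}$. Letting $R\to\infty$ gives $\sigma_2=1$.
\item Equality in your Jensen step would force $f$ to be $\tilde K(z,\cdot)$-a.s.\ constant for almost every $z$. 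Since $\tilde K(z,\cdot)\ge\epsilon(z)\rho$ and $\rho$ is equivalent to $\gauss$, $f$ would then be $\gauss$-a.s.\ constant, hence zero.
\end{itemize}
So $\norm{P_\theta f}<\norm{f}$ for every nonzero $f\in L^2_0(\gauss)$, and the supremum is not attained. By compactness of the unit sphere of a finite-dimensional $V$, it follows that $\sigma_2(V)<\sigma_2$ for every such $V$.
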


\begin{proof}
(i) $P_\theta\mathbf{1}=\mathbf{1}$ is $\tilde K$'s normalization. By Jensen applied to the conditional
expectation, $\Exp_\gauss[(P_\theta f)^2]\le\Exp_\gauss[P_\theta(f^2)]=\Exp_\gauss[f^2]$, the last
equality being exactly $\gauss$-preservation; so $\norm{P_\theta}\le1$, with equality at $f=\mathbf{1}$.
For invariance of the mean-zero subspace, $\Exp_\gauss[P_\theta f]=\Exp_\gauss[f]=0$, again by
$\gauss$-preservation. 

(ii) Write $\mu_0=(1+h)\gauss$ with $\Exp_\gauss h=0$, so $\chi^2(\mu_0\|\gauss)=\norm{h}^2$. The
density of $\mu_0\tilde K^T$ against $\gauss$ is $1+P_\theta^{*T}h$, where $P_\theta^*$ is the adjoint
in $L^2(\gauss)$ --- the time reversal of $\tilde K$ --- and $\norm{P_\theta^*}_{L^2_0}=\sigma_2$
because an adjoint has the same norm. Iterating gives $\norm{P^{*T}_\theta h}\le\sigma_2^T\norm{h}$,
which is \eqref{eq:chi2rate}; the total-variation consequence is
$\norm{\mu_0\tilde K^T-\gauss}=\Exp_\gauss|P^{*T}_\theta h|\le\norm{P^{*T}_\theta h}$ by
Cauchy--Schwarz.

(iii) By disintegration the one-step joint law $\gauss(\dd z)\tilde K(z,\dd z')$ determines $\tilde K(z,\cdot)$
for $\gauss$-almost every $z$, hence determines $P_\theta$ as an operator on $L^2(\gauss)$ and therefore
$\sigma_2$.

(iv) A supremum over a subset is no larger, which gives $\sigma_2(V)\le\sigma_2$ for every $V$. For the nested
statement, let $\bigcup_n V_n$ be dense in $L^2_0(\gauss)$ and fix $\epsilon>0$. By definition of the supremum
there is $f\in L^2_0(\gauss)$, $\norm{f}=1$, with $\norm{P_\theta f}>\sigma_2-\epsilon$; by density there is
$g\in V_n$ for some $n$ with $\norm{f-g}$ small, and $\norm{P_\theta g}\ge\norm{P_\theta f}-
\norm{P_\theta}\norm{f-g}\ge\norm{P_\theta f}-\norm{f-g}$ since $\norm{P_\theta}\le1$ by (i). Normalising $g$
changes its Rayleigh quotient by $O(\norm{f-g})$, so $\sigma_2(V_n)>\sigma_2-2\epsilon$ for $n$ large, and the
supremum over the family is $\sigma_2$. Note this argument needs only $\norm{P_\theta}\le1$ and never that a
maximiser exists.
\end{proof}

\begin{remark}[Comparing contraction criteria]\label{rem:M-vs-dobrushin}
For scalar OU with $0<|c|<1$, the Dobrushin coefficient is one but
$\sigma_2=|c|$. A deterministic preserving bijection is unitary and has
$\sigma_2=1$; this excludes strict one-step density contraction but does
not exclude decay of correlations of individual observables.
\end{remark}

\begin{corollary}[Static level and convergence rate]\label{cor:M2}
Let $\Law(\hat x_T)=(g_\theta)_\#(\mu_0\tilde K^T)$. Suppose $\mu_0\ll\gauss$ and $\beta_\chi=\chi^2(\mu_0\|\gauss)^{1/2}<\infty$.
For a total-variation-dominated metric, or for energy distance on a bounded observation space,
\begin{equation}
\big|D(\Law(\hat x_T),p_{\mathrm{data}})-D(p_\theta,p_{\mathrm{data}})\big|
 \le C\sigma_2^T\beta_\chi,
\label{eq:plateau-rate}
\end{equation}
where $C$ is the corresponding domination constant; for energy distance one may take
$C=2\operatorname{diam}(\mathcal X)$ as in Remark~\ref{rem:Dsq}.
If $0<\sigma_2<1$ and $\beta_\chi>0$, a sufficient integer horizon for error at most $\epsilon$ is
\[
T\ge\max\left\{0,\left\lceil
\frac{\log(\epsilon/(C\beta_\chi))}{\log\sigma_2}\right\rceil\right\}.
\]
This is a population upper bound on the required horizon, not an equality for any particular observable.
Substituting a finite-basis estimate for $\sigma_2$ does not preserve the guarantee and does not yield a
lower bound on the actual stopping time.
\end{corollary}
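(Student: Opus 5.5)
The bound in \eqref{eq:plateau-rate} comes from chaining Proposition~\ref{thm:R} with Theorem~\ref{thm:M}(ii). The horizon claim then follows by solving the resulting geometric inequality for $T$.

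First, the domination step. For a total-variation-dominated metric, the reverse triangle inequality and data processing through $g_\theta$ give
\[
|D(\Law(\hat x_T),p_{\rm data})-D(p_\theta,p_{\rm data})|\le C\TV{(g_\theta)_\#(\mu_0\tilde K^T)-(g_\theta)_\#\gauss}\le C\TV{\mu_0\tilde K^T-\gauss},
\]
exactly as in the proof of Proposition~\ref{thm:R}. Here we use that $\tilde K$ preserves $\gauss$, so $p_\theta=(g_\theta)_\#(\gauss\tilde K^T)$. For the energy statistic on a bounded observation space, the same chain uses Remark~\ref{rem:Dsq} instead of the metric triangle inequality. Writing $P-Q$ as a signed measure of total mass $\TV{P-Q}$ gives $|\mathcal E(P,R)-\mathcal E(Q,R)|\le 2\operatorname{diam}(\mathcal X)\TV{P-Q}$. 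So $C=2\operatorname{diam}(\mathcal X)$ works, and data processing again moves the TV back to latent space.

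Second, the rate step. Since $\mu_0\ll\gauss$ with finite $\chi^2$, Theorem~\ref{thm:M}(ii) applies directly and gives $\TV{\mu_0\tilde K^T-\gauss}\le\sigma_2^T\beta_\chi$. This uses Cauchy--Schwarz on $\Exp_\gauss|P_\theta^{*T}h|$. Combining it with the first step yields \eqref{eq:plateau-rate}.

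Third, the horizon. Assume $0<\sigma_2<1$ and $\beta_\chi>0$. The requirement $C\sigma_2^T\beta_\chi\le\epsilon$ is equivalent to $T\log\sigma_2\le\log(\epsilon/(C\beta_\chi))$. Dividing by $\log\sigma_2<0$ flips the inequality to $T\ge\log(\epsilon/(C\beta_\chi))/\log\sigma_2$. Taking the ceiling gives an integer horizon, and taking the maximum with $0$ covers the case $\epsilon\ge C\beta_\chi$, where $T=0$ already suffices. If $C=0$ the bound is trivial, so this case can be excluded beforehand.

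The closing caveats follow from the direction of each inequality. The result is only an upper bound, so it says nothing about the actual stopping time for a particular observable. By Theorem~\ref{thm:M}(iv), a finite-basis estimate only lower-bounds $\sigma_2$, so plugging it in can understate the required horizon and the guarantee is lost.

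The only delicate point is the energy-statistic case. There $\mathcal E$ is not a metric, so the transfer must go through the signed-measure bilinearity argument rather than the triangle inequality. One must also check that $\mathcal E$ is evaluated at the pushforward laws, so that diameter boundedness applies in observation space.
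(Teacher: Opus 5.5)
Your proposal is correct and matches the paper's proof. Theorem~\ref{thm:M}(ii) gives the latent bound $\TV{\mu_0\tilde K^T-\gauss}\le\sigma_2^T\beta_\chi$; Proposition~\ref{thm:R}'s reverse-triangle and data-processing argument (or Remark~\ref{rem:Dsq} for the bounded-space energy statistic) transfers it to observation space; and the horizon follows by solving the geometric inequality. One small refinement to your closing remark: only the population compression $\sigma_2(V)$ is guaranteed to lie below $\sigma_2$, whereas a sampled finite-basis estimate can fall on either side (Remark~\ref{rem:certest}), and that observation still suffices for the caveat.
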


\begin{proof}
Theorem~\ref{thm:M} bounds the total-variation deviation by $\sigma_2^T\beta_\chi$.
Apply Proposition~\ref{thm:R}'s metric argument, or Remark~\ref{rem:Dsq} for bounded-space energy distance,
and solve the resulting inequality for $T$. If $\beta_\chi=0$, the deviation is zero already; if
$\sigma_2=0$, it is zero after one step.
\end{proof}

\begin{remark}[Finite-basis estimation]\label{rem:certest}
For an orthonormal finite basis $\{h_i\}$ of $V\subset L^2_0(\gauss)$, one-step samples estimate
$M_{ij}=\Exp[h_i(z)h_j(z')]$. Its largest singular value is the norm of the compression
$\Pi_V P_\theta\Pi_V$, and
\[
\norm{\Pi_V P_\theta\Pi_V}\le\norm{P_\theta|_V}\le\sigma_2.
\]
Compression and restriction agree when $V$ is invariant under $P_\theta$.
For dense nested $V_n$, their projections $\Pi_n$ converge strongly to the
identity. For fixed unit $f$, $\Pi_nP\Pi_n f\to Pf$ in $L^2$; taking
$f$ arbitrarily close to a norm-maximizing direction proves
$\|\Pi_nP\Pi_n\|\to\|P\|$. No maximizing function or compactness of $P$
is required.

Sample singular values can exceed the population norm because of sampling
and selection error; cross-fitting does not convert them to certified upper
bounds. Coordinatewise Hermites without interaction terms are not a dense
multivariate family. The absolute bounds in Appendix~\ref{app:parallel-certificate}
use architectural caps and a specified initial second moment instead.
\end{remark}

\begin{theorem}[Architectural contraction bound]\label{thm:C}
Let the transition be $\Psi_{\mathrm{post}}\circ N\circ\Psi_{\mathrm{pre}}$, where both deterministic
maps are Gaussian-preserving diffeomorphisms and $N$ is the full $B$-block scan
\eqref{eq:noisestage}. Suppose a uniform cap $c_\star\in(0,1)$ satisfies
\begin{equation}
\mathop{\mathrm{ess\,sup}}_{i,j,\,\mathrm{context}}|\cos\phi_{i,j}|
 \le c_\star.
\label{eq:cstar}
\end{equation}
Then \textbf{(i)} $\sigma_2=\norm{N}_{L^2_0(\gauss)}$ and
\textbf{(ii)} $\norm{N}_{L^2_0(\gauss)}\le c_\star$.
For example, an architectural range
$\phi_{i,j}\in[\phi_{\min},\pi-\phi_{\min}]$, with
$\phi_{\min}\in(0,\pi/2)$, permits $c_\star=\cos\phi_{\min}$.
\textbf{(iii)} For $0<\beta_\chi=\chi^2(\mu_0\|\gauss)^{1/2}<\infty$,
\begin{equation}
T\ge\max\left\{0,\left\lceil\frac{\log(\epsilon/\beta_\chi)}{\log c_\star}\right\rceil\right\}
\quad\Longrightarrow\quad
\chi^2(\mu_0\tilde K^T\|\gauss)^{1/2}\le\epsilon.
\label{eq:certhorizon}
\end{equation}
The averaged controlled operator has the same upper bound if, for every fixed independent chart input
$\alpha$, its state transition has this factorization and the cap holds uniformly in $\alpha$.
Invariance of an arbitrary augmented map alone does not imply this contraction bound.
\end{theorem}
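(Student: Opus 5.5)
Let $N$ denote the block scan \eqref{eq:noisestage}, viewed as a Markov operator on $L^2(\gauss_d)$, with $\tilde K=\Psi_{\mathrm{post}}\circ N\circ\Psi_{\mathrm{pre}}$. I would prove the three parts in order, using only Theorem~\ref{thm:M} and the exactness of the scan proved above.

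For (i), each Gaussian-preserving diffeomorphism $\Psi$ induces a Koopman operator $U_\Psi f=f\circ\Psi$. This operator is unitary on $L^2(\gauss)$: it is isometric by preservation and invertible by bijectivity, and it fixes the constants. It therefore restricts to a unitary operator on $L^2_0(\gauss)$. The transition operator factors as $P_\theta=U_{\Psi_{\mathrm{pre}}}P_NU_{\Psi_{\mathrm{post}}}$. The scan operator $P_N$ maps $L^2_0$ into itself by Theorem~\ref{thm:M}(i), since $N$ preserves $\gauss_d$. Composing with unitaries on either side leaves the operator norm on $L^2_0$ unchanged, which gives $\sigma_2=\norm{N}_{L^2_0}$.

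For (ii), the bound is the main step. I would decompose the scan into its $B$ sequential block updates $N=N_B\circ\cdots\circ N_1$. Step $N_i$ replaces $x_i$ by $y_i$ and leaves the other blocks unchanged. Each $N_i$ preserves the intermediate product Gaussian law by the exactness induction. Hence each $N_i$ has operator norm at most $1$, but no single $N_i$ contracts every mean-zero function. I would therefore work with the whole scan through a Hermite/Mehler argument. Conditionally on the context $C_i$, the map $x_i\mapsto y_i$ is an OU/Mehler step with correlation $\cos\phi_i$ in each coordinate $j$. A Mehler step multiplies every Hermite component of degree $k\ge1$ in $x_i$ by a factor of modulus at most $c_\star^k\le c_\star$.

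To carry this through, I take $f\in L^2_0$ in the output variables $(y_1,\dots,y_B)$ and expand it in the orthogonal martingale decomposition $f=\sum_i\Delta_i$, where $\Delta_i=\Exp[f\mid y_{\le i}]-\Exp[f\mid y_{<i}]$. The backward application, pulling $f$ back through the scan, should send each piece $\Delta_i$ to a function whose $L^2$ norm is at most $c_\star\norm{\Delta_i}$. The reason is that $\Delta_i$ has zero conditional mean in $y_i$, so the Mehler step on block $i$ contracts it by $c_\star$; the later steps are conditional on coordinates that are already updated and are measure preserving, so they do not enlarge it.

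The hardest point is that the angles depend on the context. Because $\phi_i$ depends on $x_{>i}$ and $y_{<i}$, the images of the different $\Delta_i$ may fail to stay orthogonal. If a clean orthogonality argument fails, I would fall back on the adjoint, or time-reversed, scan. I would first check orthogonality via measurability with respect to the nested filtration $\sigma(y_{<i},x_{\ge i})$: the image of $\Delta_i$ has zero conditional mean given $(y_{<i},x_{>i})$, which is the property I expect to rescue the cross terms. The example with range $\phi\in[\phi_{\min},\pi-\phi_{\min}]$ is immediate, since on this range $|\cos\phi|\le\cos\phi_{\min}$.

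For (iii), I combine (i) and (ii) with Theorem~\ref{thm:M}(ii) to get $\chi^2(\mu_0\tilde K^T\|\gauss)^{1/2}\le c_\star^T\beta_\chi$. Solving $c_\star^T\beta_\chi\le\epsilon$ for $T$, with the inequality reversing because $\log c_\star<0$, gives \eqref{eq:certhorizon}.

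For the controlled claim, the averaged operator is $\int P_\alpha\,\gauss_m(\dd\alpha)$. Each $P_\alpha$ has norm at most $c_\star$ on $L^2_0$ by the argument above, applied uniformly in $\alpha$, so the triangle inequality gives the same bound for the average. Finally, Proposition~\ref{prop:nesting}'s orthogonal map shows that invariance of an augmented map alone does not imply the contraction bound. With $\varrho$ replaced by $0$ its state output ignores $z$, and with the identity in place of the mixing it has $\sigma_2=1$; together these illustrate that the stated caveat is needed.
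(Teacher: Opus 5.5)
Your arguments for (i), for (iii), for the $\alpha$-average (Minkowski's inequality over $\alpha$ plays the same role as the paper's Jensen--Fubini step), and the identity augmented map as a witness for the final caveat are all fine. The gap is in (ii), which is the substance of the theorem.

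Write $K_i$ for the $i$th block step acting on functions, so $N=K_1\cdots K_B$. Your per-increment estimate $\norm{N\Delta_i}\le c_\star\norm{\Delta_i}$ is correct. Without orthogonality of the images, however, it only gives $\norm{Nf}\le c_\star\sum_i\norm{\Delta_i}\le c_\star\sqrt{B}\,\norm{f}$. With context-dependent angles the images are genuinely not orthogonal. Take $B=2$ scalar blocks with $\cos\phi_1=c_1(x_2)$ and $\cos\phi_2\equiv c$, and set $\Delta_1=y_1$, $\Delta_2=y_1y_2$. Then $N\Delta_1=c_1(x_2)x_1$ and $N\Delta_2=c\,c_1(x_2)x_1x_2$. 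Hence $\langle N\Delta_1,N\Delta_2\rangle=c\,\Exp[c_1(x_2)^2x_2]$, which equals $c\,c_\star^2/\sqrt{2\pi}\neq0$ when $c_1(x_2)=c_\star$ for $x_2>0$ and $0$ otherwise. Both images have zero conditional mean given $x_2$. So the filtration property you hoped would rescue the cross terms holds in this example and still does not give orthogonality. Passing to the adjoint does not help by itself either. Each block step is self-adjoint, because its context is unchanged and the Mehler step is reversible. So $N^*=K_B\cdots K_1$ is a scan of the same type in reverse block order.

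The missing idea is to push the whole function through one block at a time, rather than pushing increments separately. Along the way you track how much of the function depends only on blocks not yet refreshed. The paper proves, by downward induction on the partial scans $N_i:=K_i\cdots K_B$ (not your single-step $N_i$), the refined bound $\norm{N_if}^2\le\norm{\Pi_{<i}f}^2+c_\star^2\bigl(\norm{f}^2-\norm{\Pi_{<i}f}^2\bigr)$. Here $\Pi_{<i}$ conditions on blocks $1,\dots,i-1$. Two facts drive the induction:
\begin{enumerate}
\item \emph{One-step split.} Let $A_i$ average out block $i$. The Mehler step at fixed context gives $\norm{K_ig}^2\le c_\star^2\norm{g}^2+(1-c_\star^2)\norm{A_ig}^2$, applied to the whole function $g=N_{i+1}f$.
\item \emph{No leak.} If $\Pi_{\le i}h=0$, then $\Pi_{\le i}N_{i+1}h=0$. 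This holds because steps $i+1,\dots,B$ leave blocks $\le i$ fixed. By the exactness induction, they refresh the remaining blocks to a Gaussian independent of those blocks.
\end{enumerate}
Split $f=\Pi_{\le i}f+f_2$. The no-leak property makes $A_ig=\Pi_{<i}f+A_iN_{i+1}f_2$ an orthogonal sum. The hypothesis at $i+1$ then gives $\norm{A_ig}^2\le\norm{\Pi_{<i}f}^2+c_\star^2\norm{f_2}^2$. Inserting this, together with the hypothesis for $\norm{g}^2$, into the one-step split closes the induction. At $i=1$, $\Pi_{<1}f=\Exp_\gauss f=0$ on $L^2_0$, so $\norm{Nf}\le c_\star\norm{f}$. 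Orthogonality is needed only between $\Pi_{<i}f$ and $A_iN_{i+1}f_2$, which the no-leak property supplies. It is never needed between images of different increments.
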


\begin{proof}
(i) $U_Bf:=f\circ B$ for a $\gauss$-preserving bijection $B$ is a unitary of $L^2(\gauss)$ fixing $\mathbf 1$,
hence a unitary of $L^2_0(\gauss)$. Writing $g=f\circ\Psi_{\mathrm{post}}$,
$(P_\theta f)(z)=\Exp_\varepsilon\big[g\big(N(\Psi_{\mathrm{pre}}(z),\varepsilon)\big)\big]
=\big(U_{\Psi_{\mathrm{pre}}}Ng\big)(z)$, so $P_\theta=U_{\Psi_{\mathrm{pre}}}\,N\,U_{\Psi_{\mathrm{post}}}$
and the outer factors, being unitary, leave the norm unchanged.

(ii) Write $N=K_1K_2\cdots K_B$, where $K_i$ replaces block $i$ by
$\cos\phi_i\odot(\cdot)+\sin\phi_i\odot\varepsilon_i$ with $\phi_i$ a function of the \emph{other} blocks. Each
$K_i$ preserves $\gauss$. Let $A_i:=\Exp[\,\cdot\mid z_{-i}]$ be the projection averaging out block $i$, and
$\Pi_{<i}:=\Exp[\,\cdot\mid z_1,\dots,z_{i-1}]$; note $A_B=\Pi_{<B}$ and $\Pi_{<1}=\Exp_\gauss$, which is $0$ on
$L^2_0$. Two facts:

\emph{(a) One step contracts exactly the part that depends on the block it refreshes.} For fixed context $z_{-i}$,
$K_i$ acts on block $i$ as a tensor product of scalar Ornstein--Uhlenbeck kernels with coefficients
$\cos\phi_{i,j}(z_{-i})$, which fixes constants and contracts the mean-zero part by
$\max_j|\cos\phi_{i,j}(z_{-i})|\le c_\star$. Integrating over $z_{-i}$,
\begin{equation}
\norm{K_ih}^2\;\le\;\norm{A_ih}^2+c_\star^2\big(\norm{h}^2-\norm{A_ih}^2\big).
\label{eq:onestep}
\end{equation}

\emph{(b) A partial scan preserves zero conditional mean over its updated blocks.} Let $N_{i+1}:=K_{i+1}\cdots K_B$
and suppose $\Pi_{\le i}h=0$. Starting from $z\sim\gauss$, the output $W$ of $N_{i+1}$ agrees with $z$ on blocks
$\le i$, and the exactness induction below \eqref{eq:noisestage} gives $(z_{\le i},W_{>i})\sim\gauss$ with the two
groups independent. Hence
$\Exp[h(W)\mid z_{\le i}]=\int h(z_{\le i},u)\,\gauss(\dd u)=(\Pi_{\le i}h)(z_{\le i})=0$, \ie
\begin{equation}
\Pi_{\le i}h=0\;\Longrightarrow\;\Pi_{\le i}N_{i+1}h=0 .
\label{eq:noleak}
\end{equation}

Now prove by downward induction on $i$ that for all $f\in L^2(\gauss)$,
\begin{equation}
\norm{N_if}^2\;\le\;\norm{\Pi_{<i}f}^2+c_\star^2\big(\norm{f}^2-\norm{\Pi_{<i}f}^2\big),
\qquad N_i:=K_i\cdots K_B .
\label{eq:IH}
\end{equation}
For $i=B$ this is \eqref{eq:onestep} with $A_B=\Pi_{<B}$. Assume \eqref{eq:IH} at $i+1$ and set $g:=N_{i+1}f$,
$P:=\norm{\Pi_{<i}f}^2$, $Q:=\norm{\Pi_{\le i}f}^2$, $F:=\norm{f}^2$, so $P\le Q\le F$. Split
$f=f_1+f_2$ with $f_1:=\Pi_{\le i}f$. Since $f_1$ does not depend on blocks $>i$ and each $K_j$, $j>i$, alters
only block $j$, we have $N_{i+1}f_1=f_1$ and therefore $A_iN_{i+1}f_1=A_i\Pi_{\le i}f=\Pi_{<i}f$. For $f_2$,
$\Pi_{\le i}f_2=0$, so the induction hypothesis gives $\norm{N_{i+1}f_2}^2\le c_\star^2\norm{f_2}^2$ and
\eqref{eq:noleak} gives $\Pi_{<i}A_iN_{i+1}f_2=\Pi_{<i}N_{i+1}f_2=0$. The two pieces of $A_ig$ are thus
orthogonal --- one is $\Pi_{<i}$-measurable, the other is annihilated by $\Pi_{<i}$ --- and
\begin{equation}
\norm{A_ig}^2=\norm{\Pi_{<i}f}^2+\norm{A_iN_{i+1}f_2}^2\;\le\;P+c_\star^2(F-Q).
\label{eq:crossterm}
\end{equation}
Applying \eqref{eq:onestep} to $N_i=K_ig$ and then
\eqref{eq:IH} at $i+1$ and \eqref{eq:crossterm},
\[
\begin{aligned}
\norm{N_if}^2\;&\le\;c_\star^2\norm{g}^2+(1-c_\star^2)\norm{A_ig}^2\\
&\le\;c_\star^2\big[Q+c_\star^2(F-Q)\big]+(1-c_\star^2)\big[P+c_\star^2(F-Q)\big]
\;=\;P+c_\star^2(F-P),
\end{aligned}
\]
which is \eqref{eq:IH} at $i$. At $i=1$, $\Pi_{<1}f=0$ for $f\in L^2_0$, giving
$\norm{Nf}\le c_\star\norm{f}$.

(iii) Combine (i), (ii) and Theorem~\ref{thm:M}(ii) and solve for $T$.

For the controlled extension, the theorem assumes the state factorization
and cap for every fixed chart input $\alpha$. Parts (i)--(ii) therefore give
$\|P_{\theta,\alpha}\|_{L^2_0(\gauss)}\le c_\star$ uniformly in $\alpha$.
This is an additional condition on the state kernel; it does not follow
merely from preservation of an augmented state--action law.

Now average. With $\alpha\sim\gauss_m$ independent of $z$ (Lemma~\ref{lem:chart}), the joint law is
$\gauss_d\otimes\gauss_m$, so by Jensen and then Fubini
\[
\norm{P_\theta f}^2=\Exp_z\big[(\Exp_\alpha P_{\theta,\alpha}f(z))^2\big]
 \le\Exp_z\Exp_\alpha\big[(P_{\theta,\alpha}f(z))^2\big]
 =\Exp_\alpha\norm{P_{\theta,\alpha}f}^2\le c_\star^2\norm{f}^2 .
\]
This averaging requires the same Gaussian action law at every state.
\end{proof}

\begin{remark}[Sharpness and scope]\label{rem:C-tight}
Constant coefficients equal to $c_\star$ give a tensor product OU kernel
with norm $c_\star$, so the class bound is sharp. The cap must control the
whole conditioner range; a sampled maximum does not establish it.
The frozen-latent initial data banks are atomic and have infinite initial
$\chi^2$. This theorem supplies a relative $L^2$ rate, while the finite-bank
absolute bound in Appendix~\ref{app:parallel-certificate} applies to the
separate parallel architecture and its stated hypotheses.
\end{remark}

\section{An explicit convergence certificate for parallel controlled transitions}
\label{app:parallel-certificate}

The following construction combines joint Gaussian-preserving maps with a
parallel Gaussian noise stage. Its convergence bound uses a second moment of
the specified initialization law, including a finite empirical law. The
analytic ingredients are Gaussian entropy contraction and relative-entropy
data processing; the contribution here is their application to a transition
with an exact conditional likelihood and a controlled occupancy constraint.
Theorem~\ref{thm:parallel-main} combines the joint-input result and finite-start
corollary proved below.

\subsection{Joint mixing and parallel conditional noise}
\label{app:joint-entropy}

Let $\Psi:\mathbb R^{d+m}\to\mathbb R^{d+m}$ be a Gaussian-preserving
bijection. The construction is the parallel transition \eqref{eq:kernel} of
Section~\ref{sec:architecture}, whose notation we reuse: given a state $z$ and
action chart coordinate $\alpha$, the joint mixer produces
$(h,b)=\Psi(z,\alpha)$, the noise-stage coefficients satisfy
$|c_i(b)|\le c_\star<1$ uniformly with fresh $\varepsilon\sim\gauss_d$, and
each post-map $S_b$ preserves $\gauss_d$. The families are jointly measurable.
For conditional likelihoods,
we use diffeomorphisms $S_b$ with computable inverses and determinants.
Joint orthogonal couplings provide $\Psi$; preserving couplings provide
$S_b$. The parallel implementation uses rotations in both positions.
Norm-changing preserving post-maps are also admissible. For $m=0$, the
action and residual-action blocks are absent, with $\gauss_0$ the one-point
probability space.

Under an exact behaviour action chart, $\alpha$ is Gaussian independently of
$z$, including for observation-conditioned policies by
Lemma~\ref{lem:observation-chart}. Observation context enters this transition
only through that chart. At the joint reference, $(h,b)$ is therefore product Gaussian.
Conditional on $b$, the noise stage and $S_b$ retain $\gauss_d$.
The state output consequently preserves the reference, even though its law
at a fixed physical action need not do so. Every state coordinate receives
noise in one parallel stage; the coefficients read $b$, not the
coordinates being refreshed.

\begin{theorem}[Joint-input entropy contraction]
\label{thm:parallel-kl}
Let $M$ be the kernel from $(z,\alpha)$ to $z'$ defined by
\eqref{eq:kernel}. For any joint input law $\nu$,
\begin{equation}
 D_{\mathrm{KL}}(\nu M\|\gauss_d)
 \le c_\star^2 D_{\mathrm{KL}}(\nu\|\gauss_{d+m}).
 \label{eq:parallel-joint-kl}
\end{equation}
In particular, under an exact behaviour action chart --- including
observation-conditioned charts, by Lemma~\ref{lem:observation-chart} --- the
joint input is $\mu\otimes\gauss_m$ and the behaviour-averaged state kernel $K$ obeys
$D_{\mathrm{KL}}(\mu K\|\gauss_d)
 \le c_\star^2D_{\mathrm{KL}}(\mu\|\gauss_d)$.
\end{theorem}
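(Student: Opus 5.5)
The plan is to follow the main-text sketch: push the joint input through $\Psi$, separate $b$ from $h\mid b$ by the chain rule, contract at fixed $b$, and then average. \textbf{Step 1.} Since $\Psi$ is a Gaussian-preserving bijection, relative entropy is invariant under it. Hence, with $\tilde\nu=\Psi_\#\nu=\Law(h,b)$, we have $\KL(\tilde\nu\|\gauss_d\otimes\gauss_m)=\KL(\nu\|\gauss_{d+m})$. The chain rule then splits this as
\[
\KL(\tilde\nu\|\gauss_{d+m})=\KL(\tilde\nu_b\|\gauss_m)+\int\KL(\tilde\nu_{h\mid b}\|\gauss_d)\,\tilde\nu_b(\dd b),
\]
where $\tilde\nu_b$ is the law of $b$ and $\tilde\nu_{h\mid b}$ is a regular conditional law. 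Both exist because the spaces are standard Borel. Both terms are nonnegative, so it will suffice to bound the output KL by $c_\star^2$ times the second term alone; discarding the $b$-marginal term only helps.

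\textbf{Step 2 (fixed $b$).} Conditional on $b$, the map $h\mapsto v$ is a product of scalar Ornstein--Uhlenbeck kernels $v_i=c_ih_i+\sqrt{1-c_i^2}\,\varepsilon_i$, and each factor preserves $\gauss_1$. I will invoke Gaussian entropy contraction for the OU semigroup \citep{gentil2010}: for the scalar kernel with correlation $c$, $\KL(\rho P_c\|\gauss)\le c^2\KL(\rho\|\gauss)$. This is the strong data-processing constant for the Gaussian channel, and it follows from the log-Sobolev inequality integrated along the semigroup, with $c=e^{-t}$.

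The inhomogeneous product case needs care. One route writes the product kernel as the time-$t$ OU flow with coordinatewise times $t_i$. The cleaner route is to compose coordinate by coordinate: apply the scalar kernel to coordinate 1 while leaving the others unchanged, and use the chain rule with coordinate 1 last, conditional on the rest. Each coordinate operation contracts only the part of the KL associated with that coordinate's conditional law. I expect this to be the main obstacle, because refreshing coordinate $i$ changes the conditional laws of the other coordinates, so a naive induction leaks entropy. What I would try first is the direct tensorized statement: the OU semigroup with diagonal time matrix satisfies $\KL(\rho P_{\mathrm{diag}(c)}\|\gauss_d)\le \max_i c_i^2\,\KL(\rho\|\gauss_d)$. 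The $\Gamma_2$/Bakry--Émery argument gives this, since the generator with rates $\ge\log(1/c_\star)$ per coordinate has Fisher-information decay rate at least that minimum rate. Equivalently, I would dominate the product kernel by the isotropic kernel $P_{c_\star}$ followed by a further Gaussian-preserving product kernel with correlations $c_i/c_\star$. Then data processing for the second factor and the isotropic contraction for the first give the constant $c_\star^2$. Signs $c_i<0$ are handled by a coordinate reflection, which preserves $\gauss$.

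After the noise stage, data processing through the Gaussian-preserving post-map $S_b$ gives
\[
\KL(\Law(z'\mid b)\|\gauss_d)\le c_\star^2\,\KL(\tilde\nu_{h\mid b}\|\gauss_d).
\]

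\textbf{Step 3.} The output law is $\nu M=\int \Law(z'\mid b)\,\tilde\nu_b(\dd b)$. By joint convexity of KL, with the reference $\gauss_d$ fixed,
\[
\KL(\nu M\|\gauss_d)\le\int\KL(\Law(z'\mid b)\|\gauss_d)\,\tilde\nu_b(\dd b)\le c_\star^2\int \KL(\tilde\nu_{h\mid b}\|\gauss_d)\,\tilde\nu_b(\dd b)\le c_\star^2\KL(\nu\|\gauss_{d+m}).
\]
The last step uses Step 1; the bound is trivial when the right side is infinite. Measurability of $b\mapsto\Law(z'\mid b)$ follows from the joint measurability of $c_i$ and $S_b$.

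\textbf{Particular case.} Under an exact behaviour chart, Lemma~\ref{lem:observation-chart} gives $\nu=\mu\otimes\gauss_m$. Then $\KL(\nu\|\gauss_{d+m})=\KL(\mu\|\gauss_d)$ by the chain rule, and $\nu M=\mu K$, which yields the stated state-kernel contraction.
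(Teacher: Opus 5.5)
Your proof is correct and follows the paper's argument essentially step for step. The steps are: invariance of relative entropy under $\Psi$ together with the chain rule in $(b,h)$; the exact factorization, at fixed $b$, of the diagonal noise stage into an isotropic OU step with coefficient $c_\star$ followed by reference-preserving coordinatewise OU kernels with coefficients $c_i(b)/c_\star$ (reflections handling negative signs); data processing through $S_b$; and convexity over the law of $b$. The only detail the paper adds is the degenerate case $c_\star=0$: there your factorization is undefined, but the noise stage is a complete refresh and the output divergence is zero.
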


\begin{proof}
If $c_\star=0$, the conditional noise stage is a complete refresh, which
proves the claim directly. Otherwise, assume the input divergence is finite;
the infinite-divergence case is immediate.
Write $\rho=\Psi_\#\nu$. Divergence invariance under a preserving bijection
and the chain rule give
\[
 D_{\mathrm{KL}}(\nu\|\gauss_{d+m})
 =D_{\mathrm{KL}}(\rho_b\|\gauss_m)
  +\mathbb E_{\rho_b}
      D_{\mathrm{KL}}(\rho_{h\mid b}\|\gauss_d).
\]
For a constant Ornstein--Uhlenbeck (OU) coefficient $c=e^{-s}$, Gaussian logarithmic Sobolev
inequality and entropy dissipation yield
$D_{\mathrm{KL}}(\lambda Q_c\|\gauss_d)
 \le c^2D_{\mathrm{KL}}(\lambda\|\gauss_d)$
\citep{gentil2010}. Indeed, $s\mapsto Q_{e^{-s}}$ is the
Ornstein--Uhlenbeck flow, and along $\mu_s=\lambda Q_{e^{-s}}$ the de Bruijn
identity gives
$\frac{d}{ds}D_{\mathrm{KL}}(\mu_s\|\gauss_d)=-I(\mu_s\|\gauss_d)$, with
$I(\cdot\|\gauss_d)$ the relative Fisher information. The Gaussian
logarithmic Sobolev inequality with constant one states
$I(\mu_s\|\gauss_d)\ge2D_{\mathrm{KL}}(\mu_s\|\gauss_d)$, so integration over
$s$ multiplies the entropy by at most $e^{-2s}=c^2$. The inequality extends
from smooth densities to finite-entropy laws by approximation and lower
semicontinuity.

At fixed $b$, factor the diagonal OU stage into an OU kernel with scalar
coefficient $c_\star$ followed by coordinatewise OU kernels with coefficients
$c_i(b)/c_\star$. The latter preserve the reference, including a
reflection for a negative coefficient. Data processing therefore gives the
factor $c_\star^2$. Apply data processing through $S_b$ and convexity over
the actual law $\rho_b$ to obtain
\[
 D_{\mathrm{KL}}(\nu M\|\gauss_d)
 \le c_\star^2\mathbb E_{\rho_b}
      D_{\mathrm{KL}}(\rho_{h\mid b}\|\gauss_d),
\]
which proves \eqref{eq:parallel-joint-kl}. The chain-rule term includes
the dependence between $h$ and $b$ away from the reference. Under the behaviour chart the joint input
is $\mu\otimes\gauss_m$, giving the state-kernel statement.
\end{proof}

\subsection{A finite warm start and a decoded plateau bound}
\label{app:finite-warm-start}

\begin{corollary}[Absolute convergence from a finite-second-moment law]
\label{cor:parallel-absolute}
Under the exact behaviour action chart, let $\Psi$ also preserve the Euclidean norm pointwise, and assume
$M_2=\mathbb E_{\mu_0}\|Z\|^2<\infty$. With $B_\star$ as defined in
\eqref{eq:warm-main}, for every integer $T\ge1$,
\begin{equation}
 D_{\mathrm{KL}}(\mu_0K^T\|\gauss_d)
   \le c_\star^{2(T-1)}B_\star,\label{eq:parallel-absolute-kl}
\end{equation}
and the total-variation bound \eqref{eq:absolute-main} applies with the same
$B_\star$. Initial absolute continuity is unnecessary. For a specified uniform
finite bank $\{z_j\}_{j=1}^n$, one may use its exact moment
$M_2=n^{-1}\sum_j\|z_j\|^2$.
\end{corollary}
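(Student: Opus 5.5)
The plan is to first bound the one-step divergence $\KL(\mu_0K\|\gauss_d)$ by $B_\star$ directly, since $\mu_0$ may be atomic. After that, Theorem~\ref{thm:parallel-kl} can be iterated $T-1$ times and Pinsker applied.

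\textbf{Step 1: the first step.} Write $\mu_0K=\int \delta_{z_0}K\,\mu_0(\dd z_0)$. By convexity of relative entropy in its first argument, it suffices to bound $\KL(\delta_{z_0}K\|\gauss_d)$ pointwise in $z_0$ and then average. For fixed $z_0$, the input is $\delta_{z_0}\otimes\gauss_m$. Apply $\Psi$ to get $(h,b)$; since $\Psi$ preserves norms, $\|h\|^2+\|b\|^2=\|z_0\|^2+\|\alpha\|^2$. Condition on $b$ and use convexity again, now over the joint law of $(h,b)$, so that the output law is a mixture of laws $\delta_h Q_{c(b)} S_b$. Data processing through the preserving map $S_b$ removes the post-map, and the diagonal OU stage at a point mass gives a product Gaussian $\prod_i\mathcal N(c_ih_i,1-c_i^2)$. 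Its divergence to $\gauss_d$ is
\[
\tfrac12\sum_i\bigl[c_i^2h_i^2+(1-c_i^2)-1-\log(1-c_i^2)\bigr]
=\tfrac12\sum_i\bigl[c_i^2(h_i^2-1)-\log(1-c_i^2)\bigr].
\]
Bounding this by $|c_i|\le c_\star$ needs care, because the summand is increasing in $c_i^2$ only when $h_i^2-1+1/(1-c_i^2)\ge0$. That condition always holds, since $1/(1-c_i^2)\ge1$. So each summand is at most $\tfrac12[c_\star^2(h_i^2-1)-\log(1-c_\star^2)]$. Summing over $i$ and using $\|h\|^2\le\|z_0\|^2+\|\alpha\|^2$, then taking expectations with $\Exp\|\alpha\|^2=m$ and $\Exp_{\mu_0}\|z_0\|^2=M_2$, gives exactly $B_\star$. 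Convexity over $b$, $\alpha$ and $z_0$ is legitimate because the output law is the corresponding mixture.

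\textbf{Step 2: iterate and convert to TV.} Under the exact chart the joint input at each later step is $\mu_tK\otimes\gauss_m$. Theorem~\ref{thm:parallel-kl} therefore gives $\KL(\mu_0K^{t+1}\|\gauss_d)\le c_\star^2\KL(\mu_0K^t\|\gauss_d)$ for $t\ge1$, and induction yields \eqref{eq:parallel-absolute-kl}. Pinsker in the $[0,2]$ convention states $\TV{P-Q}\le\sqrt{2\KL(P\|Q)}$. Combined with the trivial bound $2$, this gives \eqref{eq:absolute-main}. For a uniform finite bank, $\mu_0$ is the empirical law, so $M_2$ is its exact average squared norm.

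The main obstacle is Step 1. The first-step divergence must be finite even though $\KL(\mu_0\|\gauss_d)=\infty$, so the contraction theorem cannot be used there. The monotonicity-in-$c_i^2$ check is what lets the state-dependent coefficients $c_i(b)$ be replaced by the uniform cap. If that check failed, the fallback would be a cruder bound that drops the negative $-c_i^2$ term.
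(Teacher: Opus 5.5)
Your proposal is correct and follows the paper's proof. Mixture convexity and data processing through $S_b$ reduce the first step to the product-Gaussian divergence $\tfrac12\sum_i[c_i^2h_i^2-c_i^2-\log(1-c_i^2)]$. The cap together with $\Exp\|h\|^2\le M_2+m$ then gives $B_\star$, and Theorem~\ref{thm:parallel-kl} plus Pinsker handle the remaining steps. The only cosmetic difference is that you check monotonicity of the whole summand in $c_i^2$ through its derivative, whereas the paper bounds $c_i^2h_i^2$ and the increasing function $s\mapsto -s-\log(1-s)$ separately.
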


\begin{proof}
Condition on $(h,b)$ at the first transition. Convexity of relative
entropy and data processing through $S_b$ bound its output divergence by
\[
 \frac12\mathbb E\sum_i
 \left[c_i(b)^2h_i^2-c_i(b)^2
       -\log(1-c_i(b)^2)\right].
\]
The function $s\mapsto-s-\log(1-s)$ is increasing on $[0,1)$.
Furthermore, norm preservation and the fresh Gaussian action coordinate give
$\mathbb E\|h\|^2\le M_2+m$. The expression is consequently at most
$B_\star$. This Gaussian-mixture argument applies to atomic initial laws as
well. Theorem~\ref{thm:parallel-kl} contracts the finite first-step divergence
at all later transitions. Pinsker's inequality in the total-variation norm
used here is $\TV{P-Q}\le\sqrt{2D_{\mathrm{KL}}(P\|Q)}$.
\end{proof}

For $c_\star\in(0,1)$, a sufficient horizon for total-variation error at
most $\epsilon>0$ is
\[
 T\ge1+\max\left\{0,\left\lceil
 \frac{\log(\sqrt{2B_\star}/\epsilon)}{-\log c_\star}
 \right\rceil\right\}.
\]
At $c_\star=0$ the noise stage is complete refresh and the output is exactly
Gaussian after one step. A finite bank gives a certificate for that
specified initialization law. Replacing $M_2$ by a sample estimate for an
unknown population requires a valid population moment upper bound.

Let $F$ be a measurable decoder with output in a metric space of diameter
$\Delta$. For the population energy statistic $\mathcal E$ defined in
Remark~\ref{rem:Dsq},
\begin{equation}
 |\mathcal E(P,R)-\mathcal E(Q,R)|\le2\Delta\TV{P-Q}.
 \label{eq:parallel-energy}
\end{equation}
To see this, put $\xi=P-Q$ and
$f_S(x)=\int\mathrm{dist}(x,y)S(\dd y)$. The difference equals
$2\int f_R\dd\xi-\int(f_P+f_Q)\dd\xi$.
For a zero-mass signed measure,
$|\int f\dd\xi|\le\operatorname{osc}(f)\TV{\xi}/2$;
the two oscillations are bounded by $\Delta$ and $2\Delta$.
Since decoder pushforward contracts total variation, the static reference
$p_\theta=F_\#\gauss_d$ thus satisfies
\[
 |\mathcal E(F_\#(\mu_0K^T),p_{\mathrm{data}})
        -\mathcal E(p_\theta,p_{\mathrm{data}})|
 \le2\Delta\min\{2,\sqrt{2B_\star}\,c_\star^{T-1}\}.
\]
The distance must be bounded in the units actually used by the evaluator;
an unbounded feature or probe distance needs separate moment control.

\subsection{Policy deviation in the original action space}
\label{app:policy-physical}

Let $O_t$ denote the policy context, including a generated observation when
the policy reads frames, and let
$\lambda_t(\dd z,\dd o)=\mu_t(\dd z)L_t(\dd o\mid z)$ be its joint law.
Assume the exact chart $G_{z,o}$ is a bimeasurable bijection modulo common
null sets, maps $\pi_b(\cdot\mid z,o)$ to $\gauss_m$, and supplies the
transition's only dependence on $o$. At each fixed $(z,o)$,
\[
 D_{\mathrm{KL}}((G_{z,o})_\#\pi_t(\cdot\mid z,o)\|\gauss_m)
 =D_{\mathrm{KL}}(\pi_t(\cdot\mid z,o)\|\pi_b(\cdot\mid z,o)).
\]
The same equality holds for total variation. Define
\[
 H_t=D_{\mathrm{KL}}(\mu_t\|\gauss_d),\qquad
 \kappa_t=\mathbb E_{(Z,O)\sim\lambda_t}
 D_{\mathrm{KL}}(\pi_t(\cdot\mid Z,O)\|\pi_b(\cdot\mid Z,O)).
\]
Lemma~\ref{lem:observation-chart} gives a joint $(Z_t,\alpha_t)$ input
divergence at most $H_t+\kappa_t$; marginalizing a stochastic observation
can make the inequality strict. If the context is a function of the state,
this reduces to the state-conditioned equality. Theorem~\ref{thm:parallel-kl}
therefore gives the policy recurrence \eqref{eq:policy-kl-main}; its second
expression assumes $H_0<\infty$. From Gaussian initialization
and $\kappa_t\le\kappa$, it implies
\[
 \TV{\mu_T-\gauss_d}\le
 \min\left\{2,\sqrt{\frac{2c_\star^2(1-c_\star^{2T})\kappa}
                               {1-c_\star^2}}\right\}.
\]
An atomic state initialization may instead receive a behaviour-policy warm
start before introducing the policy shift. For a fitted invertible chart,
the same formulas use its induced policy
$\widehat\pi_b(\cdot\mid z,o)=(G_{z,o}^{-1})_\#\gauss_m$; identifying this with the true
behaviour policy requires checking the fitted conditional model. A small
total-variation chart residual alone does not upper-bound a KL residual.
The policy expectation is over the joint state and context law visited by
the deployed model. For two nonsingular tanh-Gaussian policies with the same
tanh transformation, the conditional divergence is the Gaussian KL before tanh.
Deterministic action selection, including a deterministic cross-entropy-method (CEM) choice, has
infinite KL against a nonsingular tanh-Gaussian behaviour law; the resulting
KL bound is vacuous for that policy.

For the more general controlled kernel of Theorem~\ref{thm:Aprime}, Pinsker
and Jensen still give $\eta_t\le\min\{2,\sqrt{2\kappa_t}\}$. This
generic perturbation result and the entropy contraction of
\eqref{eq:policy-kl-main} have different architectural hypotheses.

\subsection{An optional reference component}
\label{app:reference-component}

Let $\Gamma(z,\cdot)=\gauss_d$ and choose a constant $0<r<1$. For any
reference-preserving base kernel $K_0$, define
$K_r=(1-r)K_0+r\Gamma$. This standard independent-refresh construction
gives, for every initialization,
\begin{equation}
 \TV{\mu_0K_r^T-\gauss_d}\le2(1-r)^T.
 \label{eq:parallel-reset-tv}
\end{equation}
Indeed, every zero-mass signed measure $\xi$ satisfies
$\xi\Gamma=0$, and hence $\xi K_r=(1-r)\xi K_0$.
This argument requires a reset of the entire state appearing in the bound,
including any recurrent memory.

For a conditional density the mixture has
$k_r=(1-r)k_0+r\gauss_d$, so pointwise
\begin{equation}
 -\log k_r\le
 \min\{-\log k_0-\log(1-r),\ -\log\gauss_d-\log r\}.
 \label{eq:parallel-density-guard}
\end{equation}
With an invertible observation chart, the second bound becomes the model's
marginal negative log likelihood minus $\log r$. It bounds conditional
loss relative to the fitted marginal, whose accuracy remains determined by
the chart. Sampling uses an actual reference draw with probability $r$, and
the density is evaluated by a log-sum-exp, without clipping likelihoods.

For the parallel base kernel, convexity further replaces the entropy
contraction coefficient by $q=(1-r)c_\star^2$ and the warm-start bound by
$(1-r)B_\star$. Thus its absolute KL bound is
$(1-r)B_\star q^{T-1}$, and its policy recurrence is
$H_{t+1}\le q(H_t+\kappa_t)$. Under the behaviour policy, one may take
the smaller of the resulting Pinsker bound and \eqref{eq:parallel-reset-tv}. Independent refresh can
interrupt paths; its effect on short-horizon predictions and control must
be evaluated separately from its marginal guarantee.

\subsection{Conditional likelihood and numerical scope}
\label{app:conditional-numerics}

For fixed $(z,a,o)$, the chart coordinate and the values $(h,b)$ in
\eqref{eq:kernel} are deterministic. The exact conditional log density is
\eqref{eq:parallel-density-main} of Section~\ref{sec:architecture}. The bounds assume
the stated mathematical kernel, exact behaviour innovations, and a uniform
architectural coefficient cap. Gaussian diagnostics and floating-point
comparisons assess the implementation under these hypotheses; a global
roundoff certificate is outside their scope.

\section{Reproducible experimental methods}
\label{app:experiments}

Four studies connect the construction to model behaviour. Low-dimensional
systems isolate conditional representation and preservation. The twelve-task
fixed-latent pixel study evaluates transitions under an external OU action
process. CPU diagnostics check conditional loss and the parallel kernel's
finite-start bound. The recurrent study
jointly trains the inference network, decoder, reward head, and transition from pixels.

\subsection{Data and occupancy screening}
\label{app:cert}\label{app:screen}\label{app:dmcscreen}

\paragraph{Lorenz-63.}
We integrate $\dot x=10(y-x)$, $\dot y=x(28-z)-y$, and
$\dot z=xy-(8/3)z$ by Euler--Maruyama with independent additive Wiener
forcing of amplitude $0.5$ per coordinate and step size $0.01$.
After 10,000 integration steps of burn-in, we retain every fifth state.
The 200,000-state trajectory supplies 199,999 consecutive pairs. A contiguous
90/10 split discards the pair crossing the boundary and leaves 19,999
held-out pairs. Consecutive held-out observations remain dependent.
The nonstationary control ramps $\rho$ from 28 to 45 over the recorded window,
with $\rho=28$ during burn-in.

\paragraph{Sequential-scan pixel collection.}
DeepMind Control \citep{tassa2018dmc} observations are $64\times64$ RGB images,
collected with action repeat two. The driver is an OU process with time scale
8 and noise scale $0.8$, clipped to the action box; episodes last 60,000
steps. The simulator together with the driver is time homogeneous. The
observed frame alone need not be Markov, and clipped OU actions are not exact
Gaussian innovations. These experiments use the sequential-scan parameterization,
which preserves the
reference for every fixed external action. A reference start
independent of the entire external action sequence and transition noise then
retains its reference marginal conditional on that sequence. This is a narrower
experiment than the state-dependent occupancy construction in Section~\ref{sec:architecture}.

The screen covers 22 tasks with 96 independent chains each. Eighteen pass its
temporal and coverage checks. Finger turn hard and fish upright have unresolved
transients; cheetah run and stacker stack 2 develop concentrated occupancies.
The twelve evaluated tasks are the headline subset of the eighteen, spanning
state dimensions $3$--$53$ and $1$--$21$ actuators; the six other healthy tasks
(walker stand, ball in cup catch, cartpole two poles, point mass easy, swimmer
with six links, and fish swim) remain screened and available but are not fitted.
The twelve evaluated tasks are all reported in Table~\ref{tab:e11e}.
Passing this finite-data screen does not prove stationarity or a latent Markov
representation. The screen compares first and second moments in 2,000-step
windows against late windows beginning after 30,000 steps, using chain-level
averages and Bonferroni adjustment across windows. A matched leave-one-out
window-to-window energy-distance baseline additionally checks distributional
homogeneity. Autocorrelation times and effective sample sizes describe temporal
dependence. Late-to-widest coordinate spread and frozen-coordinate counts
check coverage separately from temporal stability.

Swimmer (15 links) is retained as an ensemble occupancy with a metastability flag:
its between-chain distance is $0.0186$ against a matched late-window baseline
of $0.00065$. Individual chains need not explore that ensemble.
A separate cheetah run with 128 chains and 200,000 steps still concentrates
its torso pose while its limbs move. Its exclusion concerns this exploration
policy's occupancy, rather than the task's value as a control benchmark.
For a homogeneous process, the $2/N$ invariance-defect bound for a length-$N$
time-averaged population law is separate from these diagnostics; repeated
chains do not increase $N$ in that statement.

\subsection{Sequential-scan fitting objectives and budgets}
\label{app:fitting-budgets}

For low-dimensional systems, the state chart $E$ has eight flow layers with
hidden width 256. Four deterministic layers precede and follow a block-scan
noise stage. Each deterministic layer combines rotation couplings and quantile
swirls with three rounds, three profiles per round, and radial levels $(2,3,4)$.
The sequential block scan is distinct from the parallel construction of
Section~\ref{sec:architecture}; its conditioning and operator bound are described
in Appendix~\ref{app:arch}.

The pair-likelihood experiments minimize
\[
 -\frac{1}{d}\,\Exp_{(x,a,x')\sim\widehat\pi}
 \left[w\log p_E(x)+\log p_\theta(x'\mid x,a)\right].
\]
Here $d$ is the modeled state dimension, including $d=256$ for the frozen pixel
latent, and $w=1$ unless swept. Marginal and conditional density changes include
the appropriate learned-chart Jacobians. The low-dimensional pair trainer and
pixel trainer use AdamW with zero weight decay, initial learning rate $10^{-3}$,
cosine decay, and gradient clipping at 10. Nonfinite updates are recorded and
training aborts if their allowed fraction is exceeded. The default batch sizes
are 16,384 and 2,048, respectively. Raw-latent Gaussian heads minimize only the
conditional NLL; conditional flow matching minimizes its regression objective.
Thus a flow-matching loss is not a comparable likelihood value.

\begin{wmtable}{Training budgets. Counts are gradient updates per configuration. A training seed is a fitted-model replication; repeated rollout samples are not additional training seeds.}{tab:retained-budgets}
\small\setlength{\tabcolsep}{5pt}
\begin{tabular}{@{}>{\raggedright\arraybackslash}p{0.44\linewidth}r>{\raggedright\arraybackslash}p{0.36\linewidth}@{}}
\toprule
Comparison & Updates & Replication and scope \\
\midrule
Lorenz prediction--rollout calibration & 40,000 & Three training seeds; two noise blocks \\
Lorenz marginal-weight sweep & 30,000 & Four constrained weights and two free arms; three blocks \\
Lorenz soft stabilization & 8,000 & Nine penalty/weight cells at seed 0; selected arm at three seeds \\
Frozen-latent pixel transitions & 30,000 & Twelve tasks; one fitted seed per task/arm \\
Capped pixel transitions & 30,000 & Twelve tasks at $.999$; three-task cap sweep and retraining controls \\
Controlled Gaussian and Duffing & 20,000 & One seed per Gaussian configuration; three planning seeds \\
Finite-divergence memory comparison & 4,000 & Six synthetic configurations \\
Non-Gaussian copula example & 12,000 & Three training seeds \\
\bottomrule
\end{tabular}
\end{wmtable}

The seven-model Lorenz calibration combines three 40,000-update replicates and
four 30,000-update weight settings. It tests each fitted model's own static
prediction, rather than comparing seven models at identical capacity and budget.
The soft-stabilization objective instead uses total three-dimensional pair NLL,
without division by $d$, plus its weighted penalty. Its table therefore reports
nats per pair, and its numerical weights cannot be transferred directly to the
per-dimension objective above.

\subsection{Distances, uncertainty, and reference dependence}
\label{app:refbias}

The recurrent and native studies use root-mean-square (RMS) distance over normalized RGB channels.
The fixed-latent studies use unnormalized Euclidean distance, either in 256-dimensional
frozen-autoencoder coordinates (including the Reacher static calibration) or on flattened,
downsampled decoded images, as labeled. These scores have different units; the diameter-one
bound applies only to the normalized pixel geometry.

The population energy functional is estimated by a complete two-sample U-statistic in the
synthetic and fixed-latent studies. The recurrent study instead uses an unbiased linear
estimator: each repetition splits 1,024 samples into 512 disjoint pair blocks, averaging
their cross- and within-sample distance terms; eight repetitions provide paired evaluation
SEs. Unbiasedness in each case assumes independent draws. Finite estimates can be negative. At extreme latent scales,
subtraction of large distance terms can also become numerically unreliable.
All rollout comparisons report the retained finite fraction; a distance computed
after discarding failed chains describes their surviving subpopulation.
The alternate-metric study additionally reports the fraction inside its specified
box. These quantities cannot be read as unconditional rollout accuracy.

Monte Carlo SEs condition on the fitted model and collected dataset. Where both
levels share a reference repetition, the paired SE is computed from their
within-repetition differences. The Lorenz seed table has only separate saved
level summaries, so its quadrature-combined SE omits the unknown covariance
and is labeled nominal. The twelve-task paired comparison uses eight shared
reference repetitions; detailed Walker rollout summaries use four. Training-seed
variation is reported separately.

Recurrent equal-task SEs and paired percentile intervals condition on the fixed task suite and
assume independent task--training-seed cells. The same three numerical seed IDs initialize
the random streams across tasks, so cross-task independence is an approximation. Grouping
each shared seed across all tasks instead gives an occupancy-gap SE of $0.000602$, compared
with the reported within-task SE of $0.000325$; the mean remains $-0.000182$.
Both summaries are compatible with zero, but three shared-seed groups provide limited
uncertainty calibration. Neither summary covers collection uncertainty or a population of tasks.

Temporal dependence in the reference also matters. Within-reference terms in
either estimator can be biased by correlated observations. That common term
cancels in a paired difference using the same reference pool, but the cross
terms still have conditional sampling variation. The Lorenz disjoint-subset
baseline is $0.0007\pm0.0020$ (SE over eight splits), with single-split SD
$0.0057$. A separate 64-record calibration estimates a correlation-induced
offset of $0.01032\pm0.00022$ at lag 128. Pixel reference diagnostics find offsets
as large as $0.196$ on Swimmer. These are estimand-specific diagnostics, not
universal corrections to be subtracted from other scores. Reference sampling
uncertainty, fitted-seed variation, and collection-to-collection uncertainty
remain different quantities.

\subsection{Jointly trained recurrent pixel model and GPU protocol}
\label{app:gpu-protocol}

The model and objective are specified in Appendix~\ref{app:recurrent-model},
and data separation and evaluation in Appendix~\ref{app:recurrent-evaluation}.
Initialization, gain, and planning diagnostics appear in
Appendices~\ref{app:posterior-excursion}--\ref{app:common-planning};
the native baseline and complete results follow in
Appendices~\ref{app:native-baseline} and~\ref{app:recurrent-results}.

\subsubsection{Model and sequential objective}
\label{app:recurrent-model}
The implemented model jointly learns a four-stage convolutional neural network (CNN) observation encoder,
a recursive diagonal-Gaussian posterior
$q_\varphi(z_t\mid z_{t-1},o_t,a_{t-1})$, a convolutional decoder, a reward head,
and a transition prior. Every persistent recurrent coordinate belongs to the
sampled stochastic vector $z_t$. There is no additional deterministic recurrent
memory or privileged simulator state supplied to the model. The study uses latent,
feature, and shared hidden widths of 256, with CNN base width 32 and $64\times64$
RGB images. Only the prior-network hidden width is adjusted to match transition
parameter budgets; the shared backbone remains identical. Occupancy and conditioner-only
priors use width 256. Gaussian, overshooting, and invariant-penalty priors use 600,
or 608 on Humanoid, Quadruped, and Swimmer. Conditional RealNVP uses 168,
or 176 on Hopper, Humanoid, Manipulator, Quadruped, Swimmer, and Walker.

Collection uses a saved observation-conditioned tanh-Gaussian behaviour policy,
\[
 v_t=m_b(o_t)+s_b(o_t)\odot\alpha_t,\qquad
 a_t=\tanh v_t,\qquad \alpha_t\sim\gauss.
\]
The collection shards store $m_b(o_t)$, $s_b(o_t)>0$, and $v_t$, so the
training innovation is the known quantity
$\alpha_t=(v_t-m_b(o_t))/s_b(o_t)$. The loader checks these fields and the
consistency of $\tanh v_t$ with the recorded action. Missing policy context is
an error for occupancy training. The fixed policy and its content hash are
recorded with the dataset. This chart uses the recorded policy parameters;
clipped OU actions in the fixed-latent dataset do not supply these innovations.

Before collection, reward-weighted imitation fits the policy on separate
calibration trajectories and freezes the validation-selected checkpoint.
Boundary action targets use a recorded inverse-tanh clipping convention;
they are supervised targets, not inferred behaviour innovations. Fitting alone
does not establish improved return. Fresh independent chains supply the
benchmark, with return and within-chain motion reported alongside temporal
moment and spread diagnostics. Failed tasks remain in the attempted-task
denominator and are not replaced.

All six arms share the same CNN, posterior, decoder, and reward architecture,
initialized identically under a common seed. Their priors are the parallel
occupancy kernel; the same kernel without joint state--action mixing
(\emph{conditioner only}); a diagonal Gaussian; conditional RealNVP; the Gaussian
with overshooting; and the Gaussian with an invariant-measure penalty. All use
the same innovation input by default. The conditioner-only arm is not
fixed-physical-action preservation when the chart depends on the observation.
The occupancy prior uses four joint layers, $c_\star=0.999$, and reset probability
$10^{-3}$ by default. Its certificate concerns the generative prior, not the
aggregate variational posterior or an invertible chart of real images.

\paragraph{Sequential objective and units.}\label{app:recurrent-objective}
For $L$ observed frames, the negative evidence lower bound (ELBO) of the joint behaviour-generated sequence
model, up to known behaviour-action log densities, is
\begin{align}
 \mathcal L_{\mathrm{ELBO}}=\frac1L\Bigg(&
 \Exp_q\!\left[-\sum_{t=0}^{L-1}\log p_\theta(o_t\mid z_t)
 -\sum_{t=0}^{L-2}\log p_\theta(R_t\mid z_t,a_t,z_{t+1})\right]\notag\\
 &+\KL(q_\varphi(z_0\mid o_0)\|\gauss_d)
 +\sum_{t=0}^{L-2}\Exp_q\!\left[\log\frac{q_\varphi(z_{t+1}\mid z_t,o_{t+1},a_t)}
 {k_r(z_{t+1}\mid z_t,a_t,o_t)}\right]\Bigg].
 \label{eq:recurrent-objective}
\end{align}
Under the known behaviour chart this is the implemented conditioning: the
prior network reads $(z_t,\alpha_t)$ with the chart coordinate
$\alpha_t=G_{z_t,o_t}(a_t)$, an equivalent conditioning to $(a_t,o_t)$ given
$z_t$. The generative ordering supplies the observation context: $o_t$ is emitted from $z_t$ before $z_{t+1}$ is drawn, so
$p(o_{t+1},z_{t+1}\mid z_t,a_t,o_t)=p_\theta(o_{t+1}\mid z_{t+1})\,
k_r(z_{t+1}\mid z_t,a_t,o_t)$.
The initial KL is analytic; transition terms use reparameterized Monte Carlo
samples and the exact one-step density of the selected prior. A sampled KL
contribution can be negative and is not clipped. Pixel and latent coordinates
are summed; division by $L$ occurs after summation. Unit observation, KL, and
reward weights give this ELBO and are used in all primary fits.
The Gaussian emission has fixed standard deviation $0.1$
on normalized RGB intensities; its sigmoid decoder output is the mean. This is
a continuous density, not an exact discrete 8-bit image likelihood. Rewards use
a scalar Gaussian with standard deviation one. The outgoing reward is $R_t$, with timing
$o_t\xrightarrow{a_t,R_t}o_{t+1}$.

Overshooting samples intermediate prior states and uses detached posterior
targets; the resulting expected final conditional KL is an auxiliary bound,
not an exactly evaluated multistep mixture KL. The invariant penalty is biased
empirical RBF MMD squared between prior outputs driven by independent reference
states/innovations and fresh reference samples; it uses 128 samples and
bandwidths $(0.5,1,2)$ with squared distances divided by latent dimension.
Both auxiliary terms and their configured weights are logged separately.
The overshooting horizon is three; the primary overshooting and invariant-penalty weights are $0.1$ and $1$:
these are fixed-configuration mechanism controls, not a best-tuned comparison
of all regularization objectives.

\subsubsection{Data separation and evaluation}
\label{app:recurrent-evaluation}
Each task has 32 independently initialized collection trajectories. After 10,000
burn-in decisions per trajectory, we retain 20,000 transitions with action repeat
two and camera zero: 640,000 transitions per task. Shards store uint8 frames,
outgoing actions/rewards, reset flags, and policy fields. A fixed split seed (9041)
assigns 25/3/4 shards to training/validation/test. Training windows never cross
reset or shard boundaries and use no additional burn-in; filtering restarts at
each window. A training minibatch samples a shard proportional to its valid-window
count, then draws 16 windows within it. Held-out sampling uses independent windows.
We use Adam at constant learning rate $3\times10^{-4}$, zero weight decay, and
gradient clipping at 100. Validation runs every 1,000 updates on 16 batches;
reported evaluations use the final 100,000-update checkpoint.

\paragraph{Evaluation protocol.}
The recurrent study contains twelve tasks, six primary
arms, and three paired training seeds (1101--1103). Each fit uses 100,000 updates
of 16-frame windows.
The protocol separates held-out forecasts, paired static-versus-rollout distances, interventions
on preservation, and pixel-based planning. Policy shift is evaluated only for
randomized kernels, without fitted dynamics (Figure~\ref{fig:certificate}b);
fitted-model policy-shift performance is unmeasured. Posterior initialization, reference
sampling, actions, transitions, emissions, and planning have distinct recorded
random streams. Empirical initialization banks are frozen before evaluation;
their actual second moment enters the finite-start bound. The architectural
bound applies in real arithmetic under its stated behaviour innovations. At the
evaluation horizon $T=10^5$ with the default reset probability $r=10^{-3}$, the
population bound on the marginal gap supplied by \eqref{eq:parallel-reset-tv}
and \eqref{eq:parallel-energy} is at most
$2\Delta\cdot2(1-r)^T\le1.5\times10^{-43}$, far below the measurement's
$\pm3\times10^{-4}$ equal-task training-seed standard error. This SE includes variation
in checkpoint-level estimates after averaging evaluation repetitions; it does not separately
resolve their Monte Carlo error. The terminal gap checks numerical consistency with the
static reference, whereas intermediate horizons assess bound sharpness
(Figure~\ref{fig:certificate}, Appendix~\ref{app:cpu-diagnostics}).
Observation-space diagnostics and finite-particle likelihood estimates remain
separate from exact latent conditional densities and analytic bounds.

A physical-action forecast or planner must recompute the known policy's location
and scale from its current generated observation. Using a future true frame's
policy context would reveal future observations. Replaying recorded innovations
instead defines a distinct behaviour-noise replay experiment. CEM selection
uses only model predictions; the simulator executes chosen actions for evaluation.
Emission means and samples define different evaluated laws, as do physical-action
interventions and behaviour-innovation rollouts. %
Reported metrics use completed evaluations, with failed cells retained in the
relevant denominator.

Short forecasts impose the recorded future physical actions as interventions.
They do not condition on the information those actions reveal under the
behaviour policy. We evaluate 32 batches of 16 held-out windows, with prefix length
four, horizons 1/5/10, and nested 16/64 particles for the approximate filtered
observation scores. Long evaluation uses
1,024 chains in each of eight repetitions, a 4,096-state finite initialization
bank, and horizons through $10^5$. The metric compares bounded decoder means
with held-out pixels using RMS pixel distance; the Gaussian emission samples
are a distinct unbounded law. Shared references and all raw paired energy
terms are retained.

Matching uses validation observation NLL within $0.02$ nats and each 1/5/10-step mean squared error (MSE)
within $10^{-4}+0.05$ times the reference error. Parameters must be within $5\%$ and
measured training compute within $10\%$. These are eligibility gates, not equivalence tests.
All failed matches remain in the study.
A frozen output-gain intervention estimates that specific scaling effect;
it is not a general causal effect of stationarity.

\subsubsection{Posterior initialization and guard crossings}
\label{app:posterior-excursion}
The conditioner-only arm's single escape cell (swimmer, seed 1103; $21$ of $8{,}192$ chains)
crosses the latent guard at the first step from large posterior-bank initializations.
Every escape is present at horizon one, with per-repetition surviving fractions constant
thereafter, and the $\gauss_d$-initialized control cloud of the same checkpoint survives
$8{,}192$ of $8{,}192$ chains to $10^5$ steps. The escaped chains remain finite: their retained
pre-censoring states have norms $3.05$--$7.87\times10^6$, just above the $\pm10^6$ guard, and
they map onto the eight largest rows of the cell's $4{,}096$-state initialization bank, of
which $29$ rows have norms $1.06$--$7.98\times10^6$ (bank second moment $7.4\times10^{10}$
against $d=256$ under $\gauss_d$). Re-encoding the recorded source windows with the frozen
checkpoint reproduces posterior means of norm $1.3$--$3.7\times10^6$ at cosine $1.0000$ to the
bank rows; all $29$ rows come from one contiguous segment of a single held-out test episode.
This traces the large initial states to the fitted posterior mean on that segment.
Fitted transition coefficients stay strictly inside the $0.999$ cap, the rotation
stack preserves norms to float32 accuracy, and guard-free replays from the same initial states
re-enter the typical set within $\approx3\times10^3$ steps. Theorem~\ref{thm:A} covers the
conditioner-only variant as the specialization of \eqref{eq:kernel} whose joint mixer keeps the
action block fixed. The bank is not a Gaussian initialization: its second moment exceeds
the reference moment by eight orders of magnitude. The finite-start bound permits this
early transient (total variation $\le1.998$ at $T=1$). The Gaussian
arm's escapes on the same task and seed show the opposite signature---progressive leakage
appearing between $10^4$ and $10^5$ steps in both clouds.

\subsubsection{Output-gain controls}
\label{app:gain-control}
Table~\ref{tab:gain1003} compares the primary model with two output gains.
Gain $1.001$ is the validation-selected intervention; gain $1.003$ is exploratory.
Both use the same checkpoints, banks, initialization draws, reference pools, random streams,
and evaluator as the unscaled model, verified by per-job input hashes.
The gain multiplies the sampled output, including the reference-reset branch, and therefore
changes the Gaussian reference law. All cells retain their chains at the measured horizons,
while second-moment deviations increase with gain. With reset probability $10^{-3}$ retained,
these measurements characterize finite-horizon survival and marginal sensitivity.
\begin{wmtable}{\textbf{Output scaling and finite-horizon marginal sensitivity.}
All gains share checkpoints, initialization banks, and random streams; scaling applies to the
sampled prior output, including the reset branch ($r=10^{-3}$).
Survival counts cells retaining all chains at $10^5$ steps ($36$ cells, $1{,}024$ chains
$\times$ $8$ repetitions each). Marginal gap reports the cell mean [minimum, maximum].
The last column is the largest coordinate second-moment deviation from one at that horizon.}{tab:gain1003}
\small\setlength{\tabcolsep}{3pt}
\begin{tabular}{@{}llll@{}}
\toprule
Output gain & Survival & Marginal gap at $10^5$
& 2nd-moment dev. \\
\midrule
$1.000$ (primary) & $36/36$ & $-0.00018$ [$-0.0051$, $+0.0065$] & $0.15$ \\
$1.001$ (selected) & $36/36$ & $-0.00034$ [$-0.0048$, $+0.0072$] & $0.54$ \\
$1.003$ (exploratory) & $36/36$ & $-0.00081$ [$-0.0121$, $+0.0086$] & $6.92$ \\
\bottomrule
\end{tabular}
\end{wmtable}

\subsubsection{Planning protocol and common-cell contrasts}
\label{app:common-planning}
The primary CEM comparison uses horizon 20, 64 candidates, eight elites, three
iterations and four particles, over ten paired 200-decision episodes with four
warm-up decisions. The discount is $0.99$; the pre-tanh proposal starts at standard
deviation $1$ and has minimum standard deviation $0.05$.
It entails 30.72 million imagined transitions per fitted model. Horizons
5 and 50 form a separate prespecified sweep on the four priority tasks with
occupancy and Gaussian models. Full imagined-step cost, GPU memory, training
throughput and prior sampling are profiled separately.

On the intersection of completed primary plan cells, the equal-task mean normalized-return
difference against occupancy (candidate minus occupancy, with the cross-task standard error)
is $-0.0000\pm0.0007$ for the conditioner-only variant ($17$ common cells), $-0.0053\pm0.0068$
for the Gaussian ($13$), $+0.0089\pm0.0080$ for conditional RealNVP ($11$), $-0.0128\pm0.0135$
for latent overshooting ($12$), and $+0.0051\pm0.0066$ for the invariant penalty ($12$).
These descriptive contrasts condition on completion by both planners; their cross-task
SEs do not measure training-seed uncertainty on the full task--seed grid.
Against the native reference, four priority-suite
cells have both planners complete (the three swimmer cells and quadruped walk seed 1103); the
native model returns more on three of them ($0.1312$ and $0.1246$ on swimmer seeds 1101--1102
and $0.1098$ on quadruped, against $0.1109$, $0.1188$, $0.1041$) and less on swimmer seed 1103
($0.1152$ against $0.1211$). The selected common cells do not establish a
suite-wide planning comparison.

\subsubsection{Native recurrent baseline}\label{app:native-baseline}
The offline DreamerV3 comparison \citep{hafner2023dreamerv3} retains the original CNN encoder/decoder,
categorical recurrent state-space model (RSSM) with deterministic memory, KL balancing and free nats,
reward and continuation heads, and world-model optimizer. We use
\href{https://github.com/danijar/dreamerv3/tree/e3f02248693a79dc8b0ebd62c93683888ddaccfe}{\texttt{danijar/dreamerv3}, revision \texttt{e3f0224}},
with offline window sampling and no actor--critic updates.
The size12m configuration uses 64-frame training windows on the four priority
tasks, with the same three training seeds. Its world model has $8.85$ million
parameters, against $3.26$--$3.29$ million in the primary models; it trains
for 25,000 updates of 16 windows in bfloat16, using the native optimizer at
learning rate $4\times10^{-5}$ with 1,000 warm-up updates and adaptive gradient
clipping $0.3$. Evaluation uses the final checkpoint. The primary fits use
100,000 updates of 16 windows of length 16, so both protocols observe exactly
$25.6$ million frames per model; wall time is not equated.
Table~\ref{tab:native-results} reports the corresponding model sizes and measured outcomes.

The native image head uses the original bounded MSE objective. For common
observation scoring we explicitly evaluate a Gaussian of standard deviation
$0.1$ around its predicted image mean; this external score is distinct from
the native training objective. Outgoing rewards are shifted to the incoming
state convention, the first unobserved reward is masked, and shard boundaries
are not labeled as environment terminations.

Native forecasts and CEM use the same physical actions and evaluation budgets;
Table~\ref{tab:native-results} reports the measured comparison.
Under the frame-conditioned behaviour policy, native long rollouts decode each
step to obtain the next action context. This cost differs from the preserving
model's direct innovation rollout. Native rollouts start from filtered data
and extend through $10^3$ steps, the common comparison horizon with the primary models;
there is no Gaussian latent reference, static-limit identity, or convergence
certificate for that baseline.

\subsubsection{Detailed recurrent and native-model results}
\label{app:recurrent-results}
Figure~\ref{fig:survival-forest} reports paired survival contrasts.
\begin{figure}[htbp]
\centering
\includegraphics[width=\textwidth]{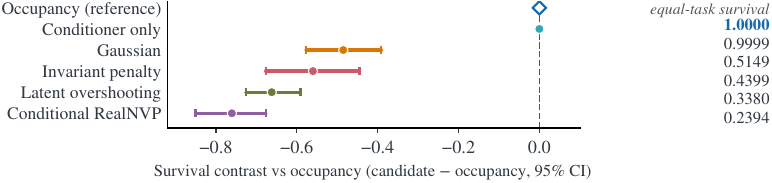}
\caption{\textbf{All four nonpreserving arms lose chains at $10^5$ steps.}
Paired survival contrasts (candidate minus occupancy), with nominal $95\%$ paired training-seed
percentile-bootstrap intervals over twelve tasks and three seeds (10,000 draws, fixed task suite).
The right column gives equal-task survival. Occupancy retains every evaluated chain.}
\label{fig:survival-forest}
\end{figure}

\paragraph{Recurrent evaluation coverage and uncertainty.}
The primary comparison contains 216 fitted task/seed/arm checkpoints; 168 have at least
one unavailable estimate because a rollout is censored or planning fails. Of 648 primary
evaluation stages, 545 complete and 103 have deterministic planning failures.
The corresponding counts are 178/216 completed stages for ablations, 12/24 for the
planning sweep, and 89/108 for the selected gain, with 38, 12, and 19 planning
failures, respectively. There are 172 failures in all. Each primary, ablation, and
selected-gain checkpoint has a short-horizon, long-horizon, and planning stage;
the sweep contains 24 task--arm--seed cells. The protocol does not estimate unconditional
distances from censored rollouts or full-grid paired returns when a required planner fails.

Gaussian minus occupancy one-step observation NLL is $1.017$ nats per pixel channel
(paired training-seed interval $[-1.324, 3.413]$). Returns are divided by the
planned physics-step budget, including early termination. Uncertainty uses training
seeds as replicates within the fixed task suite; rollout repetitions are averaged
within each checkpoint.

The validation-selected gain $1.001$ leaves survival unchanged (contrast $0$, nominal interval
$[0,0]$) and changes the marginal gap by $-0.00016$ (nominal interval
$[-0.00029,-0.00004]$). The prespecified paired comparisons use this selected gain;
the exploratory $1.003$ control appears in
Appendix~\ref{app:gain-control}.

Figure~\ref{fig:recurrent-results} summarizes the fitted-grid measurements.
\begin{figure}[p]
\centering
\includegraphics[width=\textwidth]{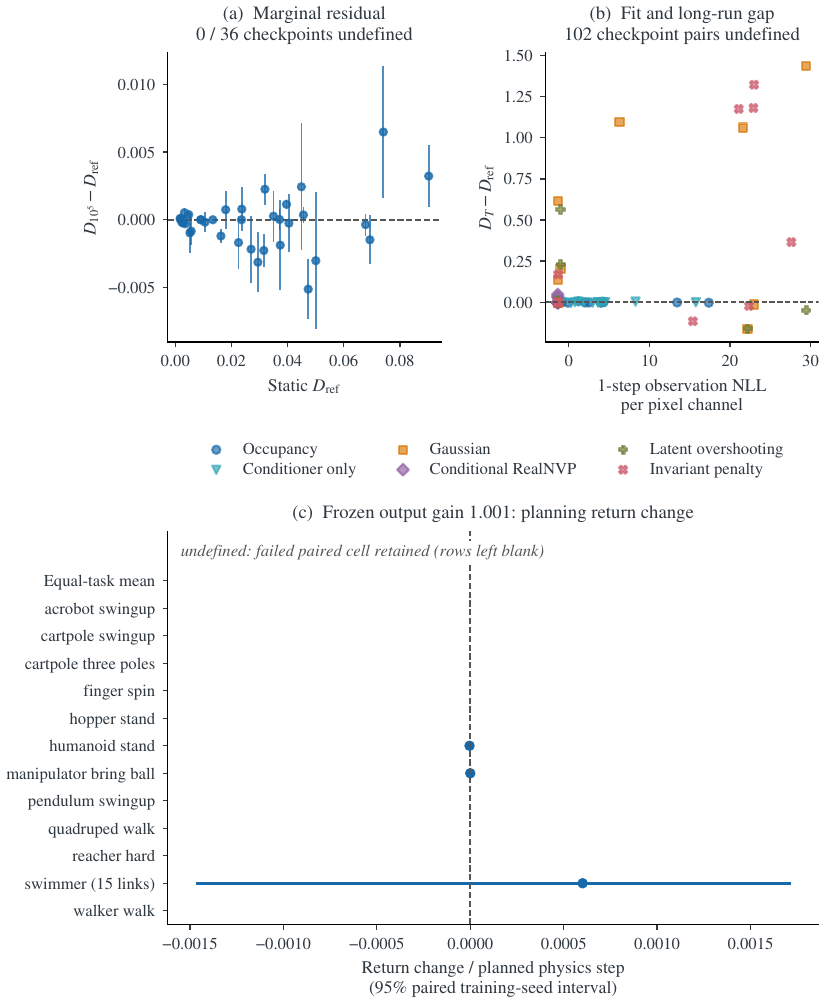}
\caption{\textbf{Marginal agreement, conditional fit, and planning are distinct outcomes.}
\textbf{(a)} The $10^5$-step rollout-minus-static energy gap versus the static score for occupancy checkpoints;
vertical bars show $\pm1$ paired evaluation SE of that gap; horizontal uncertainty is not shown.
\textbf{(b)} Observation NLL versus marginal gap for cells with uncensored estimates;
102 censored cells are excluded from this scatter and retained in survival summaries.
\textbf{(c)} Validation-selected gain return contrasts with nominal paired training-seed
percentile-bootstrap intervals; failed comparisons remain unestimated.
$D_{\rm ref}$ is a limiting level for preserving models under their convergence assumptions.}
\label{fig:recurrent-results}
\end{figure}

\begin{wmtable}{Native offline DreamerV3 reference on 4 eligible / 4 registered priority tasks. Values are equal-task means $\pm$ training-seed SE. NLL is a common external Gaussian image score, not the native training likelihood. Return uses the planned physics-step budget. Observed-frame exposure is matched at 25.6M frames per model; the native reference is granted the larger capacity and context (8.85M vs.\ 3.26--3.29M parameters, 64-frame vs.\ 16-frame training windows) at fewer updates (25,000 vs.\ 100,000); wall time is not equated. Bold marks the lowest measured NLL and MSE means.}{tab:native-results}
\small\setlength{\tabcolsep}{4pt}
\begin{tabular}{@{}lcccc@{}}
\toprule
Model & Parameters & Obs. NLL $\downarrow$ & 10-step MSE $\downarrow$ & Return \\
\midrule
Native DreamerV3 offline & 8.85M & $\mathbf{-1.2990\pm0.0008}$ & $\mathbf{0.0024\pm0.00004}$ & $0.0734\pm0.0013$ \\
Occupancy & 3.26--3.29M & $-0.0861\pm0.4763$ & $0.1206\pm0.0244$ & failed \\
Gaussian & 3.26--3.29M & $1.3022\pm2.5705$ & $0.0542\pm0.0514$ & failed \\
\bottomrule
\end{tabular}
\end{wmtable}

\begin{figure}[htbp]
\centering
\includegraphics[width=\textwidth]{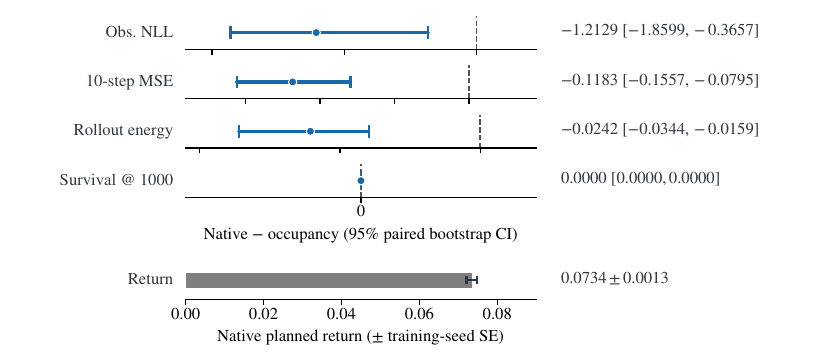}
\caption{\textbf{The native DreamerV3 reference is the stronger short-horizon predictor; survival
is tied at the shared evaluation horizon.} Native $-$ occupancy contrasts with nominal
95\% paired training-seed percentile-bootstrap intervals (4 priority tasks $\times$ 3 seeds,
horizon 1000): observation
NLL, 10-step MSE, and rollout energy all favor the native model with intervals excluding zero,
while survival is identical (1.0 vs.\ 1.0). Bar: native planned return $0.0734\pm0.0013$ with all
12 plan cells completed (a full-grid occupancy return is unavailable because of planner
failures, Fig.~\ref{fig:crash-census}). The native $10^5$-step marginal is unmeasured, and no
Gaussian-reference identity exists for its categorical state.}
\label{fig:native-panel}
\end{figure}
\begin{figure}[htbp]
\centering
\includegraphics[width=\textwidth]{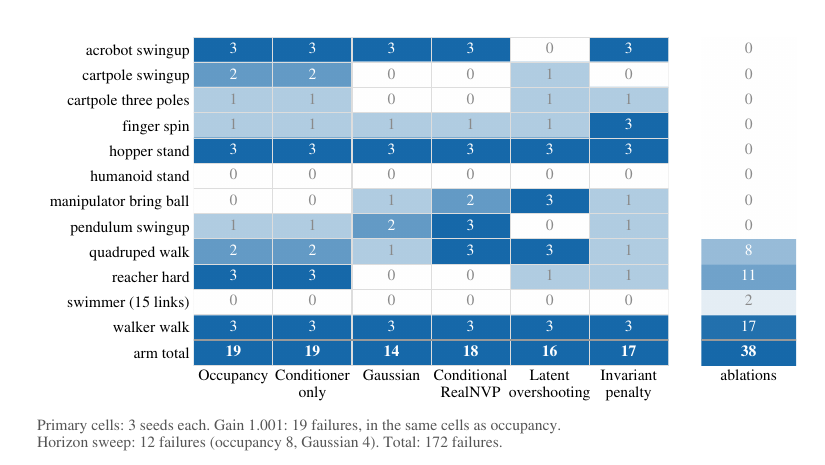}
\caption{\textbf{Planner completion varies across tasks and transition families.}
Each cell counts failures among three training seeds: the CEM planner has fewer finite candidates
than required elites. The margin strip counts $38$ ablation failures. The selected gain shares
the same $19$ failed task--seed cells as occupancy, and $12$ planning-sweep jobs fail,
giving $172$ failures among $348$ planning evaluations. Primary failure counts range from
$14/36$ for Gaussian to $19/36$ for occupancy and conditioner only.}
\label{fig:crash-census}
\end{figure}
\clearpage
\paragraph{Per-task results.}
Tables~\ref{tab:recurrent-task-summary}--\ref{tab:recurrent-sweep-summary} summarize primary
tasks, retrained variants, and planning horizons. The supplementary
\texttt{recurrent\_checkpoints.csv} retains all $324$ primary, selected-gain, and ablation
records; \texttt{recurrent\_planning\_sweep.csv} retains $48$ horizon records from $24$ sweep jobs.
The accompanying field definitions specify units and uncertainty. A \emph{failed} return
denotes a planner failure; a \emph{censored} gap denotes a rollout with escaped chains.
Completed-cell summaries use only measured returns.
\begingroup\raggedbottom
\begin{wmtable}{\textbf{Task-level recurrent outcomes.} Each cell lists mean survival at $10^5$ steps, mean marginal gap $D_{10^5}-D_{\rm ref}$, and completed planners out of three seeds (top to bottom). Gap means require all three seeds; cens. denotes chain censoring. The gap uses RMS pixel distance. Exact checkpoint scores and paired evaluation SEs: \texttt{recurrent\_checkpoints.csv}.}{tab:recurrent-task-summary}
\small\setlength{\tabcolsep}{3pt}
\begin{tabular}{@{}lrrrrrr@{}}
\toprule
Task & Occupancy & \shortstack{Conditioner\\only} & Gaussian & \shortstack{Conditional\\RealNVP} & \shortstack{Latent\\overshooting} & \shortstack{Invariant\\penalty} \\
\midrule
Acrobot swingup & \shortstack[r]{1.00000\\$-0.0015$\\0/3} & \shortstack[r]{1.00000\\$-0.0018$\\0/3} & \shortstack[r]{1.00000\\$-0.0011$\\0/3} & \shortstack[r]{0.66667\\\textit{cens.}\\0/3} & \shortstack[r]{1.00000\\$+0.0060$\\3/3} & \shortstack[r]{1.00000\\$-0.0017$\\0/3} \\
\addlinespace[3pt]
Cartpole swingup & \shortstack[r]{1.00000\\$-0.0023$\\1/3} & \shortstack[r]{1.00000\\$-0.0028$\\1/3} & \shortstack[r]{0.00000\\\textit{cens.}\\3/3} & \shortstack[r]{0.00045\\\textit{cens.}\\3/3} & \shortstack[r]{0.00000\\\textit{cens.}\\2/3} & \shortstack[r]{0.00590\\\textit{cens.}\\3/3} \\
\addlinespace[3pt]
Cartpole three poles & \shortstack[r]{1.00000\\$-0.0006$\\2/3} & \shortstack[r]{1.00000\\$-0.0004$\\2/3} & \shortstack[r]{0.04183\\\textit{cens.}\\3/3} & \shortstack[r]{0.00000\\\textit{cens.}\\3/3} & \shortstack[r]{0.04228\\\textit{cens.}\\2/3} & \shortstack[r]{0.33830\\\textit{cens.}\\2/3} \\
\addlinespace[3pt]
Finger spin & \shortstack[r]{1.00000\\$-0.0007$\\2/3} & \shortstack[r]{1.00000\\$-0.0006$\\2/3} & \shortstack[r]{0.33333\\\textit{cens.}\\2/3} & \shortstack[r]{0.62048\\\textit{cens.}\\2/3} & \shortstack[r]{0.00000\\\textit{cens.}\\2/3} & \shortstack[r]{0.61182\\\textit{cens.}\\0/3} \\
\addlinespace[3pt]
Hopper stand & \shortstack[r]{1.00000\\$+0.0005$\\0/3} & \shortstack[r]{1.00000\\$-0.0003$\\0/3} & \shortstack[r]{1.00000\\$+0.6653$\\0/3} & \shortstack[r]{0.66659\\\textit{cens.}\\0/3} & \shortstack[r]{0.33333\\\textit{cens.}\\0/3} & \shortstack[r]{0.66667\\\textit{cens.}\\0/3} \\
\addlinespace[3pt]
Humanoid stand & \shortstack[r]{1.00000\\$+0.0001$\\3/3} & \shortstack[r]{1.00000\\$+0.0001$\\3/3} & \shortstack[r]{0.66463\\\textit{cens.}\\3/3} & \shortstack[r]{0.00000\\\textit{cens.}\\3/3} & \shortstack[r]{0.00000\\\textit{cens.}\\3/3} & \shortstack[r]{0.33931\\\textit{cens.}\\3/3} \\
\addlinespace[3pt]
Manipulator bring ball & \shortstack[r]{1.00000\\$+0.0000$\\3/3} & \shortstack[r]{1.00000\\$+0.0001$\\3/3} & \shortstack[r]{0.33333\\\textit{cens.}\\2/3} & \shortstack[r]{0.09753\\\textit{cens.}\\1/3} & \shortstack[r]{0.00000\\\textit{cens.}\\0/3} & \shortstack[r]{0.33333\\\textit{cens.}\\2/3} \\
\addlinespace[3pt]
Pendulum swingup & \shortstack[r]{1.00000\\$-0.0010$\\2/3} & \shortstack[r]{1.00000\\$-0.0001$\\2/3} & \shortstack[r]{1.00000\\$-0.0027$\\1/3} & \shortstack[r]{0.66667\\\textit{cens.}\\0/3} & \shortstack[r]{1.00000\\$-0.0027$\\3/3} & \shortstack[r]{1.00000\\$-0.0027$\\2/3} \\
\addlinespace[3pt]
Quadruped walk & \shortstack[r]{1.00000\\$+0.0018$\\1/3} & \shortstack[r]{1.00000\\$+0.0011$\\1/3} & \shortstack[r]{0.55558\\\textit{cens.}\\2/3} & \shortstack[r]{0.02747\\\textit{cens.}\\0/3} & \shortstack[r]{0.33341\\\textit{cens.}\\0/3} & \shortstack[r]{0.00073\\\textit{cens.}\\2/3} \\
\addlinespace[3pt]
Reacher hard & \shortstack[r]{1.00000\\$+0.0010$\\0/3} & \shortstack[r]{1.00000\\$+0.0015$\\0/3} & \shortstack[r]{0.00000\\\textit{cens.}\\3/3} & \shortstack[r]{0.04008\\\textit{cens.}\\3/3} & \shortstack[r]{0.00000\\\textit{cens.}\\2/3} & \shortstack[r]{0.33333\\\textit{cens.}\\2/3} \\
\addlinespace[3pt]
Swimmer (15 links) & \shortstack[r]{1.00000\\$+0.0005$\\3/3} & \shortstack[r]{0.99915\\\textit{cens.}\\3/3} & \shortstack[r]{0.98295\\\textit{cens.}\\3/3} & \shortstack[r]{0.00000\\\textit{cens.}\\3/3} & \shortstack[r]{0.99996\\\textit{cens.}\\3/3} & \shortstack[r]{0.42175\\\textit{cens.}\\3/3} \\
\addlinespace[3pt]
Walker walk & \shortstack[r]{1.00000\\$+0.0001$\\0/3} & \shortstack[r]{1.00000\\$+0.0006$\\0/3} & \shortstack[r]{0.26685\\\textit{cens.}\\0/3} & \shortstack[r]{0.08728\\\textit{cens.}\\0/3} & \shortstack[r]{0.34717\\\textit{cens.}\\0/3} & \shortstack[r]{0.22815\\\textit{cens.}\\0/3} \\
\bottomrule
\end{tabular}
\end{wmtable}

\begin{wmtable}{\textbf{Retrained architectural variants.} Four tasks (Walker, Reacher, Swimmer, Quadruped), three seeds each. All variants retain every chain; gaps are equal-task means $\pm$ training-seed SE. Full NLL, MSE, gap, and return records: \texttt{recurrent\_checkpoints.csv}; aggregate SEs: \texttt{recurrent\_method\_summary.csv}.}{tab:recurrent-ablation-summary}
\small\setlength{\tabcolsep}{8pt}
\begin{tabular}{@{}lccc@{}}
\toprule
Variant & Surviving cells & Marginal gap & Completed planners \\
\midrule
$c_\star=0.99$ & 12/12 & $+0.00044\pm0.00065$ & 4/12 \\
$c_\star=0.9999$ & 12/12 & $+0.00047\pm0.00071$ & 4/12 \\
$r=0$ & 12/12 & $-0.00025\pm0.00035$ & 8/12 \\
$r=0.01$ & 12/12 & $+0.00042\pm0.00075$ & 3/12 \\
$d=512$ & 12/12 & $-0.00052\pm0.00055$ & 6/12 \\
$d=1024$ & 12/12 & $+0.00035\pm0.00032$ & 9/12 \\
\bottomrule
\end{tabular}
\end{wmtable}

\begin{wmtable}{\textbf{Planning-horizon sweep.} Counts are completed seeds out of three; a mean return is shown only when all three complete. Returns use the planned physics-step denominator. The 48 seed--horizon rows in \texttt{recurrent\_planning\_sweep.csv} represent 24 sweep jobs. Twelve jobs fail during $H=5$; their $H=50$ evaluations are unattempted.}{tab:recurrent-sweep-summary}
\small\setlength{\tabcolsep}{5pt}
\begin{tabular}{@{}llrrrr@{}}
\toprule
& & \multicolumn{2}{c}{Completed} & \multicolumn{2}{c}{Mean return} \\
\cmidrule(lr){3-4}\cmidrule(lr){5-6}
Task & Model & $H=5$ & $H=50$ & $H=5$ & $H=50$ \\
\midrule
Quadruped walk & Occupancy & 1/3 & 1/3 & --- & --- \\
Quadruped walk & Gaussian & 2/3 & 2/3 & --- & --- \\
Reacher hard & Occupancy & 0/3 & 0/3 & --- & --- \\
Reacher hard & Gaussian & 3/3 & 3/3 & 0.0293 & 0.0065 \\
Swimmer (15 links) & Occupancy & 3/3 & 3/3 & 0.1227 & 0.1174 \\
Swimmer (15 links) & Gaussian & 3/3 & 3/3 & 0.1184 & 0.1186 \\
Walker walk & Occupancy & 0/3 & 0/3 & --- & --- \\
Walker walk & Gaussian & 0/3 & 0/3 & --- & --- \\
\bottomrule
\end{tabular}
\end{wmtable}

\endgroup

\subsubsection{Scope of the recurrent comparisons}\label{app:limitations}
The comparisons condition on the collected data, eligible tasks, fixed fitted behaviour policy,
three paired seeds, and reported budgets. Because the matching gates reject every alternative
transition family, their contrasts combine architecture with achieved fit; the selected gain
isolates output scaling. Planning contrasts use common completed cells, with completion counts
reported alongside returns. The prior certificate applies to the specified initialization bank;
for second moment $6.9\times10^9$, its total-variation cap becomes informative near $10^4$ steps.

\newcommand{\CPUStaticAuditTable}{%
\begin{wmtable}{Independent static resampling for the capped Reacher checkpoint. Eight fresh Gaussian clouds of 2,048 points are averaged within each of eight existing reference pools. Mean $\pm$ sample SE uses the eight reference-level values. Recorded rollout samples are reused; Appendix~\ref{app:reacher-replication} reports the independent rollout replication.}{tab:cpu-static}
\small\setlength{\tabcolsep}{5pt}
\begin{tabular}{@{}lr@{}}
\toprule
Quantity & Energy distance \\
\midrule
Fresh static distance & $0.30264\pm0.00141$ \\
Stored static minus fresh static & $0.00595\pm0.00235$ \\
Stored data-start rollout minus fresh static & $-0.00199\pm0.00190$ \\
Stored reference-start rollout minus fresh static & $-0.00041\pm0.00250$ \\
\bottomrule
\end{tabular}
\end{wmtable}
}
\newcommand{\CPUGuardAuditTable}{%
\begin{wmtable}{Post-hoc reset-mixture rescoring on all held-out pairs: Reacher 479,968; Swimmer 247,968. Costs are nats per latent dimension ($d=256$). The common reset probability is $r=10^{-3}$. Ref.\ dom.\ is the percentage of pairs for which the reference component has posterior mixture responsibility above one half; it is not the reset sampling frequency. These CPU evaluations include no retraining or new rollout.}{tab:cpu-guard}
\small\setlength{\tabcolsep}{5pt}
\begin{tabular}{@{}lrrrrrr@{}}
\toprule
Checkpoint & Mean & Median & 99.9\% & Maximum & \% $>10$ & Ref.\ dom.\ \% \\
\midrule
Reacher, default & $1.118\!\times\!10^{4}$ & $0.8509$ & $1.211\!\times\!10^{4}$ & $1.415\!\times\!10^{8}$ & $0.3665$ & --- \\
Reacher, default $+r$ & $0.8547$ & $0.8433$ & $2.9334$ & $4.6512$ & $0.0000$ & $15.82$ \\
Reacher, cap .999 & $0.8389$ & $0.8300$ & $2.5030$ & $192.7131$ & $0.0002$ & --- \\
Reacher, cap .999 $+r$ & $0.8328$ & $0.8265$ & $2.4403$ & $4.0283$ & $0.0000$ & $8.76$ \\
Swimmer, default & $2.674\!\times\!10^{7}$ & $1.433\!\times\!10^{7}$ & $1.887\!\times\!10^{8}$ & $2.942\!\times\!10^{8}$ & $97.1081$ & --- \\
Swimmer, default $+r$ & $6.6779$ & $6.7025$ & $16.3152$ & $22.0964$ & $12.6641$ & $99.15$ \\
Swimmer, cap .999 & $314.0148$ & $294.4076$ & $1175.8230$ & $1987.7783$ & $94.7945$ & --- \\
Swimmer, cap .999 $+r$ & $6.7789$ & $6.7911$ & $16.4691$ & $22.0765$ & $14.7499$ & $98.26$ \\
\bottomrule
\end{tabular}
\end{wmtable}
}
\newcommand{\CPUAnalyticBoundTable}{%
\begin{wmtable}{Finite-start bounds in a two-dimensional OU specialization. The initial law is uniform on $\{(-3,0),(3,0)\}$, so its initial KL is infinite. The update coefficient is $0.75$ and the architectural cap used in the bound is $0.8$; no reset is used. KL and $L^1$-TV are computed by $220\times220$ Gauss--Legendre quadrature on $[-12,12]^2$. These are numerical integrals of an analytic mixture law, not high-dimensional empirical TV estimates.}{tab:cpu-ou-bound}
\small\setlength{\tabcolsep}{5pt}
\begin{tabular}{@{}rrrrr@{}}
\toprule
Horizon & KL integral & KL upper bound & $L^1$-TV integral & $L^1$-TV upper bound \\
\midrule
1 & $2.1033$ & $3.2617$ & $1.5094$ & $2.0000$ \\
2 & $0.8497$ & $2.0875$ & $0.9991$ & $2.0000$ \\
5 & $0.0418$ & $0.5472$ & $0.2119$ & $1.0461$ \\
10 & $0.0002$ & $0.0588$ & $0.0123$ & $0.3428$ \\
\bottomrule
\end{tabular}
\end{wmtable}
}
\newcommand{\CPUScalingTable}{%
\begin{wmtable}{CPU timing of the parallel prior, with four CPU threads, float32, batch size 32, six action coordinates, hidden width 128, four joint layers and two post layers. Times are medians of five repetitions. The measurement excludes the optimizer, data loading, CNN encoder/decoder, and simulator. Parameter memory is not peak process memory; these measurements do not predict GPU throughput or trained-model accuracy.}{tab:cpu-scaling}
\small\setlength{\tabcolsep}{5pt}
\begin{tabular}{@{}rrrrr@{}}
\toprule
State width & Parameters & Parameter MiB & Forward/backward ms & Sampling ms \\
\midrule
256 & 302,856 & $1.16$ & $7.63$ & $2.50$ \\
512 & 484,360 & $1.85$ & $8.84$ & $3.24$ \\
1024 & 847,368 & $3.23$ & $13.40$ & $4.57$ \\
\bottomrule
\end{tabular}
\end{wmtable}
}

\section{Lorenz calibration and competing stabilization objectives}
\label{app:seeds}

A fitted encoder determines its own decoded-reference distribution. The first
comparison asks whether that static score predicts the same model's long-run
distributional error. It does not assume that all encoders have the same
plateau or that a predictable plateau is small. Throughout the appendix,
$D_\infty$ is the static decoded-reference score of
Section~\ref{sec:certificates}; the recurrent-study tables print the same
quantity as $D_{\rm ref}$.

\begin{wmtable}{\textbf{Lorenz prediction--rollout comparison by training seed.} Static and terminal energy distances use mean $\pm$ Monte Carlo SE from eight repetitions, with sample SD. The standardized gap combines these SEs in quadrature; it is nominal because the saved summaries do not retain their covariance. The final column is an energy distance in Gaussian coordinates, not a TV bound.}{tab:seeds}
\small\setlength{\tabcolsep}{5pt}
\begin{tabular}{@{}rrrrr@{}}
\toprule
Seed & Static $D_\infty$ & Rollout $D_{10^5}$ & Nominal gap/SE & $D(\hat\mu^z_{10^5},\gauss)$ \\
\midrule
$0$ & $0.1963\pm0.0098$ & $0.1849\pm0.0087$ & $-0.87$ & $0.00215$ \\
$1$ & $0.2228\pm0.0133$ & $0.2076\pm0.0112$ & $-0.87$ & $0.00069$ \\
$2$ & $0.1548\pm0.0087$ & $0.1593\pm0.0084$ & $+0.37$ & $0.00111$ \\
\bottomrule
\end{tabular}
\end{wmtable}
\begin{wmtable}{\textbf{Lorenz rollout fidelity, survival, and scale.} Three training seeds and seven sampled horizons. $D_T$ is energy distance to data; $f_T$ is the fraction of finite chains; $m_T=d^{-1}\Exp\|z_T\|^2$ uses finite chains. All stationary-model chains are finite. Unconstrained distances are conditional on survival, and extreme values can be numerically unreliable.}{tab:seedsfree}
\small\setlength{\tabcolsep}{5pt}
\begin{tabular}{@{}rrrrrrrr@{}}
\toprule
& & \multicolumn{3}{c}{Stationary} & \multicolumn{3}{c}{Unconstrained} \\
\cmidrule(lr){3-5}\cmidrule(lr){6-8}
Seed & $T$ & $D_T$ & $f_T$ & $m_T$ & $D_T$ & $f_T$ & $m_T$ \\
\midrule
$0$ & $0$ & $-0.0018$ & $1.0000$ & $0.8057$ & $-0.0017$ & $1.0000$ & $1.0110$ \\
 & $1$ & $-0.0007$ & $1.0000$ & $0.8191$ & $-0.0022$ & $1.0000$ & $1.0161$ \\
 & $10$ & $0.0616$ & $1.0000$ & $0.8712$ & $-0.0025$ & $1.0000$ & $1.0048$ \\
 & $100$ & $0.1789$ & $1.0000$ & $0.9926$ & $9.25{\times}10^{17}$ & $0.9984$ & --- \\
 & $1000$ & $0.1851$ & $1.0000$ & $1.0035$ & $1.17{\times}10^{10}$ & $0.9801$ & --- \\
 & $10000$ & $0.1775$ & $1.0000$ & $1.0027$ & $1.07{\times}10^{29}$ & $0.8254$ & --- \\
 & $100000$ & $0.1849$ & $1.0000$ & $0.9965$ & $-0.0035$ & $0.1395$ & $2.04{\times}10^{9}$ \\
\midrule
$1$ & $0$ & $0.0034$ & $1.0000$ & $0.8566$ & $0.0032$ & $1.0000$ & $1.0137$ \\
 & $1$ & $0.0071$ & $1.0000$ & $0.8694$ & $0.0025$ & $1.0000$ & $1.0125$ \\
 & $10$ & $0.1740$ & $1.0000$ & $0.9535$ & $-0.0030$ & $1.0000$ & $1.0090$ \\
 & $100$ & $0.2129$ & $1.0000$ & $1.0271$ & $0.0025$ & $0.9999$ & $1.0069$ \\
 & $1000$ & $0.2158$ & $1.0000$ & $1.0228$ & $0.0077$ & $0.9994$ & $1.0031$ \\
 & $10000$ & $0.2113$ & $1.0000$ & $1.0252$ & $0.0047$ & $0.9932$ & $1.0136$ \\
 & $100000$ & $0.2076$ & $1.0000$ & $1.0322$ & $0.0026$ & $0.9394$ & $1.0172$ \\
\midrule
$2$ & $0$ & $0.0012$ & $1.0000$ & $0.6588$ & $0.0012$ & $1.0000$ & $1.0096$ \\
 & $1$ & $0.0028$ & $1.0000$ & $0.6685$ & $0.0008$ & $1.0000$ & $1.0061$ \\
 & $10$ & $0.0538$ & $1.0000$ & $0.7216$ & $-0.0007$ & $1.0000$ & $1.0055$ \\
 & $100$ & $0.1538$ & $1.0000$ & $0.9676$ & $3.68{\times}10^{13}$ & $0.9993$ & --- \\
 & $1000$ & $0.1634$ & $1.0000$ & $1.0113$ & $1.11{\times}10^{9}$ & $0.9925$ & --- \\
 & $10000$ & $0.1588$ & $1.0000$ & $1.0040$ & $3.54{\times}10^{9}$ & $0.9236$ & --- \\
 & $100000$ & $0.1593$ & $1.0000$ & $1.0100$ & $-0.0020$ & $0.4602$ & $9.53{\times}10^{15}$ \\
\bottomrule
\end{tabular}
\end{wmtable}
\begin{wmtable}{\textbf{Static reference scores of the unconstrained Lorenz models.} Mean $\pm$ Monte Carlo SE over eight static draws. These scores compare decoded Gaussian samples with data; the unconstrained rollout need not converge to that reference law.}{tab:seedsstatic}
\small\begin{tabular}{@{}rr@{}}
\toprule
Seed & $D(E^{-1}_\#\gauss,p_{\mathrm{data}})$ \\
\midrule
$0$ & $0.0464\pm0.0030$ \\
$1$ & $0.0308\pm0.0034$ \\
$2$ & $0.0226\pm0.0042$ \\
\bottomrule
\end{tabular}
\end{wmtable}

The three stationary models have different static distances, and each terminal
score is within one nominal combined Monte Carlo SE of its own prediction.
The unconstrained models have lower static reference scores but retain
$13.95\%$, $93.94\%$, and $46.02\%$ of their chains at $10^5$ steps.
Their static decoded-Gaussian scores are not predictions of their limiting
laws. In particular, near-zero survivor-conditioned distances at the last
horizon coexist with numerical failure and, in two seeds, extreme moments.
These comparisons do not establish that an unconstrained kernel lacks an
invariant distribution.

\subsection{Reweighting the marginal fit}
\label{app:wsweep}

This comparison tests whether improving the fitted marginal changes the rollout plateau.
The four constrained settings share data, schedule, and a 30,000-update budget;
the $w=\infty$ setting fits the marginal first and then fits the conditional
with the chart frozen. Two unconstrained conditional-flow arms use $w=1,10$.
Each static score is evaluated from the same fitted checkpoint as its rollout.

\begin{wmtable}{\textbf{Complete marginal-weight sweep.} Finite weights share a 30,000-update budget; $w=\infty$ uses marginal-first training. NLLs are nats per observation dimension. Static $D_\infty$ uses decoded Gaussian samples for every arm, without predicting an unconstrained limit. Distance SEs use eight Monte Carlo repetitions and sample SD. Terminal distances are omitted when chains are lost.}{tab:wsweep}
\small\setlength{\tabcolsep}{4pt}
\begin{tabular}{@{}llrrrrrr@{}}
\toprule
& & \multicolumn{3}{c}{Held-out NLL} & & & \\
\cmidrule(lr){3-5}
Class & $w$ & Pair & Marginal & Conditional & Static $D_\infty$ & $D_{10^5}$ & Finite \\
\midrule
Stationary & $1$ & $2.2679$ & $2.6477$ & $-0.3799$ & $0.1126\pm0.0098$ & $0.1066\pm0.0070$ & $1.0000$ \\
 & $3$ & $2.3891$ & $2.3382$ & $+0.0509$ & $0.0809\pm0.0056$ & $0.0881\pm0.0107$ & $1.0000$ \\
 & $10$ & $2.4548$ & $2.2713$ & $+0.1835$ & $0.0165\pm0.0042$ & $0.0162\pm0.0035$ & $1.0000$ \\
 & $\infty$ & $3.5045$ & $2.2582$ & $+1.2463$ & $0.0072\pm0.0026$ & $0.0028\pm0.0036$ & $1.0000$ \\
\midrule
Unconstrained & $1$ & $1.4906$ & $2.3240$ & $-0.8334$ & $0.0532\pm0.0031$ & --- & $0.0599$ \\
 & $10$ & $1.4511$ & $2.2648$ & $-0.8137$ & $0.0057\pm0.0025$ & --- & $0.0641$ \\
\bottomrule
\end{tabular}
\end{wmtable}

For constrained $w=1$ to $10$, held-out pair NLL increases by $0.1869$ nats per
dimension while the static distance decreases. The corresponding unconstrained
NLL changes by $-0.0395$. These are fitted, finite-budget outcomes; an ordering
of globally optimized objectives need not order their held-out scores.
Both unconstrained arms retain only about $6\%$ of chains at $10^5$.
The negative distance estimate $-2.26$ is therefore not evidence of better
full-distribution fidelity. The smallest constrained plateau is unresolved at
the experiment's sampling resolution.

\subsection{Soft stabilization objectives}
\label{app:stabilizers}

This comparison measures fidelity and survival under three soft stabilization objectives.
The three penalties share an unconstrained transition and use weights
$\{0.01,0.1,1\}$, 8,000 Adam updates, and matched initializations at seed zero.
They add an eight-step Huber rollout loss, a Jacobian-direction penalty, or a
four-step differentiable energy-distance penalty to total three-dimensional
pair NLL. The pair NLL entries below are consequently nats per pair, not nats
per dimension. The stationary arm uses the same encoder specification and
update budget with a different transition family.

Huber targets are detached encoder states; coordinates in the training penalty
are clamped to $[-10^3,10^3]$. The Jacobian probe uses two normalized
transpose-Jacobian iterations on a random direction. It is neither an exact
spectral norm nor an operator-contraction certificate. The measure penalty
uses rollout samples and independent real states, following the motivation of
invariant-measure regularization \citep{schiff2024dyslim}.
Checkpoints minimize recorded validation NLL; the seed table states the
unequal evaluation intervals for seed zero and seeds one--two.

\begin{wmtable}{\textbf{Long-horizon regularization on Lorenz-63.} Seed 0, 8,000 updates per configuration. The three penalties share the unconstrained architecture; \textsc{ours} uses a different transition family. $D_\infty$ is the static decoded-Gaussian reference score, not a predicted limit for the unconstrained models. Rollout distances use mean $\pm$ Monte Carlo SE over eight repetitions and are displayed only when every chain is finite.}{tab:stab}
\small\setlength{\tabcolsep}{5pt}
\begin{tabular}{@{}llrrrr@{}}
\toprule
Objective & $w$ & Pair NLL & $D_\infty$ & Finite at $10^5$ & $D_{10^5}$ \\
\midrule
\textsc{free} & --- & $5.604$ & $0.1108$ & $0.00027$ & --- \\
\midrule
Multistep & 0.01 & $5.836$ & $0.1334$ & $0.13644$ & --- \\
 & 0.1 & $6.253$ & $0.0997$ & $1.00000$ & $0.6776\pm0.0200$ \\
 & 1 & $5.840$ & $0.1572$ & $0.95813$ & --- \\
\midrule
Jacobian probe & 0.01 & $5.391$ & $0.1055$ & $0.00003$ & --- \\
 & 0.1 & $5.599$ & $0.0910$ & $0.00000$ & --- \\
 & 1 & $5.498$ & $0.1551$ & $0.00104$ & --- \\
\midrule
Measure penalty & 0.01 & $6.410$ & $0.2089$ & $0.00000$ & --- \\
 & 0.1 & $7.574$ & $0.2170$ & $1.00000$ & $0.1058\pm0.0142$ \\
 & 1 & $6.438$ & $0.1594$ & $0.43649$ & --- \\
\midrule
\textsc{ours} & --- & $8.003$ & $0.2039$ & $1.00000$ & $0.2125\pm0.0091$ \\
\bottomrule
\end{tabular}
\end{wmtable}
\begin{wmtable}{\textbf{Seed-specific outcomes for the selected multistep weight.} All runs use $w=1$ for multistep and 8,000 updates. Checkpoints minimize validation NLL over the recorded evaluations: every 1,000 updates for seed 0 and every 4,000 for seeds 1--2, plus the final update. Survivor-conditioned distances are included descriptively without Monte Carlo SE; they do not score the complete rollout law.}{tab:stabseeds}
\small\setlength{\tabcolsep}{6pt}
\begin{tabular}{@{}lrrrrr@{}}
\toprule
Arm & Seed & Finite & $D_\infty$ & $D_{10^5}$ & $D_{10^5}/D_\infty$ \\
\midrule
Multistep & $0$ & $0.95813$ & $0.1572$ & $0.0224$ & $0.143$ \\
 & $1$ & $0.99991$ & $0.2271$ & $0.0282$ & $0.124$ \\
 & $2$ & $0.31396$ & $0.1342$ & --- & --- \\
\midrule
\textsc{ours} & $0$ & $1.00000$ & $0.2039$ & $0.2125$ & $1.042$ \\
 & $1$ & $1.00000$ & $0.1380$ & $0.1444$ & $1.046$ \\
 & $2$ & $1.00000$ & $0.0924$ & $0.0836$ & $0.905$ \\
\bottomrule
\end{tabular}
\end{wmtable}

Only two of the nine penalized seed-zero configurations retain at least
$99\%$ of chains. Both retain every chain: multistep weight $0.1$ has distance
$0.6776$, and measure-penalty weight $0.1$ has distance $0.1058$, versus
$0.2125$ for the stationary model. The latter is a direct example of a soft
penalty attaining better long-run fidelity. Its static reference score does
not become a predicted limit by virtue of that success. At the selected
multistep weight one, only one of three seeds retains at least $99\%$ of chains;
the stationary model retains every chain in all three. These results identify
a survival--fidelity trade-off under the recorded budgets, without ruling out
successful stabilization by a different regularizer or tuning procedure.

\paragraph{Time-varying negative control.}\label{app:control}
On the ramped Lorenz system, stationary-model pair NLL is $4.7793$ versus
$3.5166$ for the unconstrained model, in nats per dimension. Its rollout
distance reaches $0.841$ at $10^3$ steps. The unconstrained model also fails:
its second moment is $1.6\times10^4$ by step ten and becomes nonfinite later.
The reference two-half distance is only $0.0052$, showing that this simple
finite-data diagnostic misses the imposed nonstationarity. A model's invariant
marginal need not track a changing data marginal.

\section{Fixed-latent pixel models: marginal predictability and conditional failure}
\label{app:dmcwm}\label{app:breadth}

Each task has a separately trained and frozen convolutional autoencoder.
All transition families use its same 256-dimensional latent, Gaussian smoothing
scale $0.1$, decoder, and data split. Walker's held-out reconstruction PSNR is
$30.25$ dB. These are transition comparisons in a fixed representation, not
end-to-end recurrent pixel-world-model comparisons. Learned invertible state
charts, where present, are trained separately for each transition model.

The data contain 64 training and 32 held-out uninterrupted trajectories,
yielding 959,936 and 479,968 pairs before burn-in. Burn-in is applied to both
training pairs and reference observations. For Walker the latent screen selects
6,000 observation steps, leaving 575,936 training and 287,968 held-out pairs.
Its matched 250-frame window baseline is $0.05408\pm0.01691$, with threshold
$0.11908$, half-to-half distance $0.03223$, estimated ESS 1,141, spread ratio
$0.986$, and no frozen coordinate. Observation-step burn-in differs from the
physical-state screen's simulator-step units. These diagnostics support the
retained finite window; they do not verify a population invariant law.

\begin{wmtable}{Walker transition budgets and held-out NLL, in nats per frozen-latent dimension. Parameter counts exclude the common autoencoder. A dash denotes an unevaluated or inapplicable likelihood component. Source: the recorded Walker comparison.}{tab:e11arms}
\small\setlength{\tabcolsep}{6pt}
\begin{tabular}{@{}lrrrrr@{}}
\toprule
& \multicolumn{2}{c}{Parameters (M)} & \multicolumn{3}{c}{NLL (nats/dim)} \\
\cmidrule(lr){2-3}\cmidrule(lr){4-6}
Model & Coordinates & Transition & Marginal & Conditional & Pair \\
\midrule
Stationary & $4.20$ & $7.20$ & $-0.0307$ & $-0.0732$ & $-0.1039$ \\
Free flow & $4.20$ & $5.52$ & $-0.0256$ & $-0.1549$ & $-0.1805$ \\
Gaussian with chart & $4.20$ & $8.37$ & $+0.1390$ & $+0.1197$ & $+0.2587$ \\
Gaussian raw & --- & $8.37$ & --- & $+0.4717$ & $+0.4717$ \\
Flow matching & --- & $8.47$ & --- & --- & --- \\
\bottomrule
\end{tabular}
\end{wmtable}

The Walker comparison uses transition heads with $5.52$--$8.47$M parameters;
capacity and optimization differ across families, including conditional flow matching
\citep{lipman2023flow}. Native recurrent-model comparisons appear in
Appendix~\ref{app:native-baseline}.
The twelve-task comparison includes the stationary transition, raw Gaussian,
and flow matching; the other two chart-based arms are additional Walker controls.

\subsection{Complete twelve-task outcomes}
\label{app:fixed-latent-results}

The task-wide comparison separates prediction of each model's own limiting
distance from accuracy relative to the data.

\begin{wmtable}{Static and terminal latent energy distances on all twelve pixel tasks. Levels use the same eight paired Monte Carlo repetitions of each frozen checkpoint. The paired difference is terminal minus static, with sample SE. These uncertainties do not measure training-seed or collection variation. Every constrained chain remains finite.}{tab:tier2}
\small\setlength{\tabcolsep}{7pt}
\begin{tabular}{@{}lrrr@{}}
\toprule
Task & $D_\infty$ & $D_{10^5}$ & Paired $\Delta\pm\mathrm{SE}$ \\
\midrule
Acrobot swingup & $0.143$ & $0.144$ & $+0.0008\pm0.0040$ \\
Cartpole swingup & $0.212$ & $0.212$ & $-0.0005\pm0.0024$ \\
Cartpole three poles & $0.116$ & $0.116$ & $+0.0002\pm0.0016$ \\
Finger spin & $0.169$ & $0.175$ & $+0.0065\pm0.0062$ \\
Hopper stand & $0.193$ & $0.193$ & $+0.0007\pm0.0038$ \\
Humanoid stand & $0.023$ & $0.023$ & $+0.0008\pm0.0016$ \\
Manipulator bring ball & $0.224$ & $0.229$ & $+0.0049\pm0.0036$ \\
Pendulum swingup & $0.082$ & $0.086$ & $+0.0037\pm0.0050$ \\
Quadruped walk & $0.114$ & $0.114$ & $-0.0009\pm0.0042$ \\
Reacher hard & $0.305$ & $0.299$ & $-0.0061\pm0.0038$ \\
Swimmer (15 links) & $1.100$ & $1.101$ & $+0.0010\pm0.0132$ \\
Walker walk & $0.160$ & $0.156$ & $-0.0036\pm0.0022$ \\
\bottomrule
\end{tabular}
\end{wmtable}

\begin{wmtable}{\textbf{Twelve-task survival and pixel fidelity at $T=10^5$.} Energy estimates use Euclidean distance on flattened, downsampled $16\times16$ decoder outputs with intensities in $[0,1]$; this ground distance is not RMS-normalized. The flow-matching/ours ratio is reported only when both methods retain every chain. The paired static-versus-rollout comparison is given separately in Table~\ref{tab:tier2}.}{tab:e11e}
\small\setlength{\tabcolsep}{5pt}
\begin{tabular}{@{}lrrrrrr@{}}
\toprule
& \multicolumn{3}{c}{Finite-chain fraction} & \multicolumn{2}{c}{Pixel distance} & \\
\cmidrule(lr){2-4}\cmidrule(lr){5-6}
Task & Ours & Gauss. raw & FM & Ours & FM & FM/ours \\
\midrule
Acrobot swingup & $1.00$ & $0.00$ & $1.00$ & $0.0106$ & $0.0031$ & $0.30$ \\
Cartpole swingup & $1.00$ & $0.00$ & $1.00$ & $0.0159$ & $0.0010$ & $0.06$ \\
Cartpole three poles & $1.00$ & $0.00$ & $1.00$ & $0.0022$ & $0.0141$ & $6.5$ \\
Finger spin & $1.00$ & $0.74$ & $1.00$ & $0.0153$ & $0.0013$ & $0.08$ \\
Hopper stand & $1.00$ & $0.95$ & $1.00$ & $0.0105$ & $0.0007$ & $0.07$ \\
Humanoid stand & $1.00$ & $0.00$ & $1.00$ & $0.0040$ & $0.0084$ & $2.1$ \\
Manipulator bring ball & $1.00$ & $0.00$ & $0.43$ & $0.0070$ & $0.5780$ & --- \\
Pendulum swingup & $1.00$ & $0.00$ & $1.00$ & $0.0028$ & $0.0002$ & $0.09$ \\
Quadruped walk & $1.00$ & $0.00$ & $1.00$ & $0.0154$ & $0.0831$ & $5.4$ \\
Reacher hard & $1.00$ & $0.98$ & $0.23$ & $0.0168$ & $0.4449$ & --- \\
Swimmer (15 links) & $1.00$ & $1.00$ & $1.00$ & $0.6031$ & $1.6668$ & $2.8$ \\
Walker walk & $1.00$ & $0.00$ & $1.00$ & $0.0277$ & $2.2742$ & $82$ \\
\bottomrule
\end{tabular}
\end{wmtable}
\noindent swimmer\_swimmer15 carries the metastability flag from the occupancy screen and is retained with that qualification.

The stationary models retain every chain on all twelve tasks, but their pixel
fidelity is not uniformly best. Of ten tasks on which flow matching also
retains every chain, five favour flow matching and five favour the stationary
model. The other two flow-matching scores condition on partial survival.
Swimmer's large stationary distance and metastability qualification remain
part of the result. Static-versus-rollout agreement in Table~\ref{tab:tier2}
therefore answers a different question from which model has the smallest
pixel distance.

\paragraph{Sampling conventions.}\label{app:tier2baselines}
Rollouts reproduce the external OU driver's coefficients, clipping, and
interleaved actions, initializing its state from the stationary driver law.
Conditional flow matching uses 16 Euler steps. Its one-step 8-versus-16-step
sensitivity reaches $14.6\%$ of observed displacement on Swimmer, exceeding
the $10\%$ sensitivity heuristic; the reported scores use 16 Euler steps.
The compact Walker comparison below separates stochastic samples from mean
steps, which define different rollout laws.

\begin{wmtable}{Walker decoded-pixel distance and numerical survival at $10^5$ steps. Entries summarize the recorded full trajectory grid; distances for partially surviving models are conditional on survival. A dash means no surviving sample. Finite-sample uncertainty for the paired stationary plateau is reported separately in Table~\ref{tab:tier2}.}{tab:e11rollout}
\small\setlength{\tabcolsep}{7pt}
\begin{tabular}{@{}llrr@{}}
\toprule
Transition & Rollout convention & Finite fraction & Pixel distance \\
\midrule
Stationary & Sampling & $1.0000$ & $0.02770$ \\
Free flow & Sampling & $0.6030$ & $0.03458$ \\
Gaussian with chart & Sampling & $0$ & --- \\
Gaussian with chart & Mean step & $0$ & --- \\
Gaussian raw & Sampling & $0$ & --- \\
Gaussian raw & Mean step & $0$ & --- \\
Flow matching & Sampling & $1.0000$ & $2.27422$ \\
Flow matching & Mean step & $1.0000$ & $2.26613$ \\
\bottomrule
\end{tabular}
\end{wmtable}

At $10^3$ steps, Walker mean-step flow matching and the stationary model have
similar distances ($0.02738$ and $0.02778$). By $10^5$, both flow-matching
conventions reach approximately $2.27$. The sampled flow-matching second
moment is $1.06$, closer to the data's $1.01$ than the stationary model's
$0.80$ despite its worse distributional distance: a matched moment alone does
not identify the more faithful distribution.

\subsection{Typical conditional fit and extreme losses}
\label{app:conditional-tails}

The conditional-loss distribution distinguishes a broad fit failure from a rare extreme tail.

\paragraph{Conditional-likelihood distributions.} Held-out pair costs distinguish poor fit on typical transitions from a small number of extreme losses. The tables report the empirical mean, median, maximum, and upper-tail fraction directly. Large observed losses do not establish infinite population moments. Pair-level standard errors would additionally require accounting for temporal dependence.
\begin{wmtable}{\textbf{Conditional NLL across held-out pairs.} All costs are nats per latent dimension. The tail fraction is the empirical proportion exceeding ten nats per dimension. Each row compares the same held-out pairs under the constrained model and the raw-latent Gaussian head.}{tab:e11f}
\small\setlength{\tabcolsep}{5pt}
\begin{tabular}{@{}lrrrrr@{}}
\toprule
& \multicolumn{4}{c}{Ours} & Gaussian raw \\
\cmidrule(lr){2-5}
Task & Median & Mean & Maximum & Fraction $>10$ & Median \\
\midrule
Acrobot swingup & $-0.5204$ & $-0.4951$ & $+4.19 \times 10^{2}$ & $1 \times 10^{-5}$ & $+0.6007$ \\
Cartpole swingup & $-0.4823$ & $+33.4751$ & $+8.43 \times 10^{6}$ & $3 \times 10^{-4}$ & $-0.0620$ \\
Cartpole three poles & $+0.4501$ & $+0.4126$ & $+1.26 \times 10^{2}$ & $6 \times 10^{-6}$ & $+0.9424$ \\
Finger spin & $-0.6457$ & $+0.5314$ & $+3.88 \times 10^{5}$ & $2 \times 10^{-5}$ & $-0.5393$ \\
Hopper stand & $-0.2634$ & $-0.2357$ & $+3.03 \times 10^{2}$ & $4 \times 10^{-5}$ & $+0.1258$ \\
Humanoid stand & $+0.4159$ & $+0.4170$ & $+3.2921$ & $0$ & $+0.9886$ \\
Manipulator bring ball & $+0.4458$ & $+1.61 \times 10^{3}$ & $+2.58 \times 10^{7}$ & $6 \times 10^{-3}$ & $+0.5640$ \\
Pendulum swingup & $-0.7428$ & $+2.84 \times 10^{3}$ & $+3.00 \times 10^{8}$ & $2 \times 10^{-5}$ & $-0.3783$ \\
Quadruped walk & $-0.3659$ & $+4.07 \times 10^{3}$ & $+6.50 \times 10^{7}$ & $8 \times 10^{-3}$ & $+0.4521$ \\
Reacher hard & $+0.8509$ & $+1.11 \times 10^{4}$ & $+1.42 \times 10^{8}$ & $4 \times 10^{-3}$ & $+0.4041$ \\
Swimmer (15 links) & $+1.43 \times 10^{7}$ & $+2.67 \times 10^{7}$ & $+2.94 \times 10^{8}$ & $0.97$ & $+0.2616$ \\
Walker walk & $-0.0797$ & $-0.0732$ & $+1.4903$ & $0$ & $+0.4798$ \\
\bottomrule
\end{tabular}
\end{wmtable}
\begin{wmtable}{\textbf{Sensitivity to extreme pair losses.} The trimmed mean removes the largest 0.1\% of costs and therefore changes the estimand. The 90\% column gives the fraction of pairs whose largest costs account for 90\% of the total, when defined. The final column is the Gaussian head\textquotesingle s median cost on the constrained model\textquotesingle s worst 64 pairs.}{tab:e11ftail}
\small\setlength{\tabcolsep}{6pt}
\begin{tabular}{@{}lrrrr@{}}
\toprule
Task & Trimmed mean & Fraction for 90\% & Marginal NLL & Gaussian on worst \\
\midrule
Acrobot swingup & $-0.4977$ & --- & $-0.4885$ & $+0.8250$ \\
Cartpole swingup & $-0.5002$ & $6 \times 10^{-6}$ & $-0.4935$ & $+1.3204$ \\
Cartpole three poles & $+0.4116$ & $0.73$ & $+0.4159$ & $+1.0937$ \\
Finger spin & $-0.6325$ & $3 \times 10^{-6}$ & $-0.6097$ & $+0.3829$ \\
Hopper stand & $-0.2397$ & --- & $-0.2151$ & $+0.6000$ \\
Humanoid stand & $+0.4165$ & $0.83$ & $+0.5093$ & $+1.1412$ \\
Manipulator bring ball & $+2.4806$ & $2 \times 10^{-4}$ & $+0.6216$ & $+0.5382$ \\
Pendulum swingup & $-0.7399$ & $1 \times 10^{-5}$ & $-0.7285$ & $+0.8168$ \\
Quadruped walk & $+29.7949$ & $3 \times 10^{-4}$ & $-0.2371$ & $-0.1430$ \\
Reacher hard & $+2.9868$ & $3 \times 10^{-4}$ & $+0.9779$ & $+0.0687$ \\
Swimmer (15 links) & $+2.66 \times 10^{7}$ & $0.43$ & $+6.6530$ & $+0.9427$ \\
Walker walk & $-0.0741$ & --- & $-0.0307$ & $+0.9733$ \\
\bottomrule
\end{tabular}
\end{wmtable}

The Swimmer median of $1.43\times10^7$ and $97\%$ of pairs above ten nats per
dimension identify broad conditional misfit. Reacher instead has a median
near $0.85$ with an extreme upper tail. The trimmed statistics in
Table~\ref{tab:e11ftail} summarize the retained pairs. The raw Gaussian
assigns moderate costs to many pairs in the constrained model's worst tail,
so those losses are specific to the fitted conditional model.

\subsection{Alternate observation-space diagnostics}
\label{app:metrics}

The frozen Walker checkpoints are evaluated in decoded $16\times16$ pixels,
fixed random CNN features, standardized state-probe outputs, and frozen
AE latents. All four metrics share trajectories within this evaluation, which
uses separate rollout draws from the preceding tables. Random CNN features
supply a nonlinear image geometry, not a validated perceptual metric.
The state probe is a two-hidden-layer multilayer perceptron (MLP) of width 512 trained on up to 60,000
real validation observations and 18 recorded physical observables. Its
$R^2=0.616$ is a checkpoint-selection score on the final ordered $20\%$ of
that pool, not an independent test score. Probe error, ambiguity, and failures
on generated images limit the physical interpretation.

\begin{wmtable}{\textbf{Alternate metrics on Walker walk.} Four energy distances on the same trajectories within this evaluation. Finite and in-box fractions are distinct; distances use the in-box subset. Values are means over up to four Monte Carlo repetitions at one checkpoint per arm, without training-seed uncertainty. The state column compares probe outputs with recorded standardized states, and its $T=0$ value includes probe error.}{tab:metrics}
\small\setlength{\tabcolsep}{4pt}
\begin{tabular}{@{}lrrrrrrr@{}}
\toprule
Arm & $T$ & Finite & In box & Pixels & Features & State probe & AE latent \\
\midrule
\textsc{ours} & $0$ & $1.0000$ & $1.0000$ & $0.0004$ & $0.0001$ & $0.0373$ & $0.0014$ \\
 & $1$ & $1.0000$ & $1.0000$ & $0.0011$ & $0.0003$ & $0.0787$ & $0.0072$ \\
 & $10$ & $1.0000$ & $1.0000$ & $0.0221$ & $0.0038$ & $0.1102$ & $0.1261$ \\
 & $100$ & $1.0000$ & $1.0000$ & $0.0301$ & $0.0072$ & $0.1190$ & $0.1608$ \\
 & $1000$ & $1.0000$ & $1.0000$ & $0.0300$ & $0.0073$ & $0.1221$ & $0.1595$ \\
 & $10000$ & $1.0000$ & $1.0000$ & $0.0285$ & $0.0065$ & $0.1186$ & $0.1544$ \\
 & $100000$ & $1.0000$ & $1.0000$ & $0.0290$ & $0.0068$ & $0.1176$ & $0.1582$ \\
\midrule
\textsc{gauss\_E} & $0$ & $1.0000$ & $1.0000$ & $0.0004$ & $0.0001$ & $0.0373$ & $0.0014$ \\
 & $1$ & $1.0000$ & $1.0000$ & $0.0035$ & $0.0011$ & $0.0875$ & $0.0140$ \\
 & $10$ & $1.0000$ & $0.6594$ & $0.1105$ & $0.0139$ & $0.1499$ & $1.2502$ \\
 & $100$ & $0.0000$ & $0.0000$ & --- & --- & --- & --- \\
 & $1000$ & $0.0000$ & $0.0000$ & --- & --- & --- & --- \\
 & $10000$ & $0.0000$ & $0.0000$ & --- & --- & --- & --- \\
 & $100000$ & $0.0000$ & $0.0000$ & --- & --- & --- & --- \\
\midrule
\textsc{free} & $0$ & $1.0000$ & $1.0000$ & $0.0004$ & $0.0001$ & $0.0373$ & $0.0014$ \\
 & $1$ & $1.0000$ & $1.0000$ & $0.0028$ & $0.0003$ & $0.0404$ & $0.0181$ \\
 & $10$ & $1.0000$ & $0.9998$ & $0.0075$ & $0.0007$ & $0.0642$ & $0.0274$ \\
 & $100$ & $0.9979$ & $0.9977$ & $0.0116$ & $0.0016$ & $0.0721$ & $0.0382$ \\
 & $1000$ & $0.9932$ & $0.9932$ & $0.0129$ & $0.0021$ & $0.0728$ & $0.0430$ \\
 & $10000$ & $0.9492$ & $0.9492$ & $0.0141$ & $0.0023$ & $0.0758$ & $0.0529$ \\
 & $100000$ & $0.6082$ & $0.6079$ & $0.0211$ & $0.0044$ & $0.0750$ & $0.0587$ \\
\bottomrule
\end{tabular}
\end{wmtable}
\begin{wmtable}{\textbf{Monte Carlo variation for the stationary model under alternate metrics.} Mean $\pm$ SE from four evaluation repetitions, using sample SD. Every chain is finite and inside the box. These SEs describe evaluation sampling at the fixed checkpoint and probe.}{tab:metricsse}
\small\setlength{\tabcolsep}{5pt}
\begin{tabular}{@{}rrrrr@{}}
\toprule
$T$ & Pixels & Features & State probe & AE latent \\
\midrule
$0$ & $0.0004\pm0.0004$ & $0.0001\pm0.0002$ & $0.0373\pm0.0022$ & $0.0014\pm0.0010$ \\
$1$ & $0.0011\pm0.0007$ & $0.0003\pm0.0001$ & $0.0787\pm0.0033$ & $0.0072\pm0.0021$ \\
$10$ & $0.0221\pm0.0006$ & $0.0038\pm0.0002$ & $0.1102\pm0.0048$ & $0.1261\pm0.0017$ \\
$100$ & $0.0301\pm0.0007$ & $0.0072\pm0.0003$ & $0.1190\pm0.0026$ & $0.1608\pm0.0021$ \\
$1000$ & $0.0300\pm0.0011$ & $0.0073\pm0.0003$ & $0.1221\pm0.0022$ & $0.1595\pm0.0030$ \\
$10000$ & $0.0285\pm0.0014$ & $0.0065\pm0.0002$ & $0.1186\pm0.0009$ & $0.1544\pm0.0039$ \\
$100000$ & $0.0290\pm0.0010$ & $0.0068\pm0.0001$ & $0.1176\pm0.0033$ & $0.1582\pm0.0044$ \\
\bottomrule
\end{tabular}
\end{wmtable}

These distances retain finite AE latents within a box whose center is the
data midpoint and whose half-width is five times its coordinate range. The
stationary arm keeps all chains; its late-distance ranges vary little under
each of the four geometries. The free flow has lower survivor-conditioned
distances in every geometry, while only $60.82\%$ remain finite and $60.79\%$
remain inside the box at the final horizon. The probe distance $0.0373$ already
present at initialization is a diagnostic reference, not a subtractable physical
error floor. All reported retained fractions remain essential to interpreting
these alternative metrics.

\section{Architectural contraction, fidelity cost, and recurrent memory}
\label{app:e25}

The cap experiment uses the sequential block scan of
Appendix~\ref{app:arch}. Its uniform
coefficient cap gives the $L^2$ operator bound of Theorem~\ref{thm:C}.
For finite initial density-ratio $L^2$ error, the sufficient horizon for a
100-fold relative reduction is
$\lceil\log(10^{-2})/\log c_\star\rceil$. This gives 44, 459, 4,603, and
46,050 steps for caps $0.9,0.99,0.999,0.9999$. The default numerical noise
floor $\sin\phi_{\min}=10^{-4}$ gives approximately $9.2\times10^8$ steps.
These relative guarantees are not absolute pixel-error tolerances and do not
apply directly to an atomic initial law. The parallel finite-start argument
in Section~\ref{sec:certificates} addresses the latter issue with a different construction.

\begin{wmtable}{\textbf{Angle-capped transitions on twelve pixel tasks.} All models use $c_\star=0.999$, giving a $100$-fold contraction of initial $L^2$ error within $4{,}603$ steps when that error is finite. $D_\infty$ and $D_{10^5}$ use the same eight paired evaluation repetitions; $t$ is their mean difference divided by its Monte Carlo standard error, computed with sample variance. $\widehat\sigma_2$ is a finite-basis estimate, not a certified lower endpoint. $\Delta$NLL compares with the frozen default-cap checkpoint. Dashes indicate unavailable rate estimates.}{tab:e25}
\small\setlength{\tabcolsep}{4pt}
\begin{tabular}{@{}lrrrrr@{}}
\toprule
Task & $\widehat\sigma_2$ & $D_\infty$ & $D_{10^5}$ & Paired $t$ & $\Delta$NLL \\
\midrule
\texttt{acrobot\_swingup} & --- & $0.146$ & $0.144$ & $-0.51$ & $-0.0034$ \\
\texttt{cartpole\_swingup} & --- & $0.211$ & $0.207$ & $-1.28$ & $-33.9489$ \\
\texttt{cartpole\_three\_poles} & $0.9613$ & $0.119$ & $0.117$ & $-0.87$ & $-0.0009$ \\
\texttt{finger\_spin} & $0.9773$ & $0.180$ & $0.182$ & $+0.68$ & $-1.1688$ \\
\texttt{hopper\_stand} & $0.9591$ & $0.189$ & $0.192$ & $+0.75$ & $-0.0006$ \\
\texttt{humanoid\_stand} & $0.9823$ & $0.049$ & $0.046$ & $-1.03$ & $-0.0029$ \\
\texttt{manipulator\_bring\_ball} & $0.9847$ & $0.232$ & $0.230$ & $-0.45$ & $-1.6\times10^{3}$ \\
\texttt{pendulum\_swingup} & $0.9815$ & $0.204$ & $0.212$ & $+0.51$ & $-2.8\times10^{3}$ \\
\texttt{quadruped\_walk} & $0.9906$ & $0.084$ & $0.087$ & $+1.54$ & $-4.1\times10^{3}$ \\
\texttt{reacher\_hard} & $0.9597$ & $0.309$ & $0.301$ & $-5.28$ & $-1.1\times10^{4}$ \\
\texttt{swimmer\_swimmer15} & $0.9928$ & $1.105$ & $1.123$ & $+0.72$ & $-2.7\times10^{7}$ \\
\texttt{walker\_walk} & $0.9816$ & $0.170$ & $0.166$ & $-1.85$ & $+0.0000$ \\
\bottomrule
\end{tabular}
\end{wmtable}

All twelve capped runs pass the specified flatness check between $10^4$ and
$10^5$: the difference is within three combined Monte Carlo SEs plus
$0.01D_\infty$. Both horizons exceed 4,603, so that observation does not
locate convergence or verify completion by the theoretical horizon.
Eleven of the twelve static-versus-terminal paired gaps have $|t|<3$.
Reacher's is $-0.0079\pm0.0015$ ($t=-5.28$); the static resampling of
Appendix~\ref{app:cpu-diagnostics} and the independent rollout replication of
Appendix~\ref{app:reacher-replication} are consistent with variation in the original static draw.

\begin{wmtable}{\textbf{Contraction horizon and fitted-model trade-offs.} $T^\star=\lceil\log(10^{-2})/\log c_\star\rceil$ bounds the horizon for a $100$-fold reduction of finite initial $L^2$ error. Default refers to the numerical noise floor $\sin\phi_{\min}=10^{-4}$, which gives a finite but impractical bound of approximately $9.2\times10^8$ steps. $\Delta$NLL is relative to the frozen default checkpoint for that task; these comparisons include variation between fitted checkpoints.}{tab:e25trade}
\small\setlength{\tabcolsep}{6pt}
\begin{tabular}{@{}llrrr@{}}
\toprule
Task & $c_\star$ & $T^\star$ & $D_\infty$ & $\Delta$NLL \\
\midrule
\texttt{pendulum\_swingup} & $0.9$ & $44$ & $0.128$ & $-2842.147$ \\
\texttt{pendulum\_swingup} & $0.99$ & $459$ & $0.108$ & $-2842.134$ \\
\texttt{pendulum\_swingup} & $0.999$ & $4603$ & $0.211$ & $-2842.084$ \\
\texttt{pendulum\_swingup} & $0.9999$ & $46050$ & $0.203$ & $-2841.520$ \\
\texttt{pendulum\_swingup} & Default & $9.2\times10^8$ & $0.076$ & $+0.000$ \\
\midrule
\texttt{reacher\_hard} & $0.9$ & $44$ & $0.337$ & $-11114.228$ \\
\texttt{reacher\_hard} & $0.99$ & $459$ & $0.322$ & $-11114.865$ \\
\texttt{reacher\_hard} & $0.999$ & $4603$ & $0.310$ & $-11114.873$ \\
\texttt{reacher\_hard} & $0.9999$ & $46050$ & $0.298$ & $-11114.307$ \\
\texttt{reacher\_hard} & Default & $9.2\times10^8$ & $0.306$ & $+0.000$ \\
\midrule
\texttt{quadruped\_walk} & $0.9$ & $44$ & $0.091$ & $-4069.268$ \\
\texttt{quadruped\_walk} & $0.99$ & $459$ & $0.108$ & $-4068.519$ \\
\texttt{quadruped\_walk} & $0.999$ & $4603$ & $0.084$ & $-4064.337$ \\
\texttt{quadruped\_walk} & $0.9999$ & $46050$ & $0.092$ & $-4009.974$ \\
\texttt{quadruped\_walk} & Default & $9.2\times10^8$ & $0.118$ & $+0.000$ \\
\bottomrule
\end{tabular}
\end{wmtable}

Tighter caps shorten the relative-contraction bound but do not order fitted
fidelity or held-out likelihood. Moving from $0.999$ to $0.9$ improves
conditional NLL on Pendulum and Quadruped while worsening Reacher by about
$0.65$ nats per dimension. Default-cap retraining still gives very poor pair
NLL on Pendulum and Quadruped ($9,123$ and $25,438$, compared with frozen
values $2,841$ and $4,069$). Thus the extreme default losses persist under
these additional fits. Static distances also change: the best capped level is
worse than the matched default on Pendulum and Reacher, and better on Quadruped.
The shortest bound and smallest static discrepancy occur at different cap settings.

On matched Lorenz seeds, cap
$0.999$ increases pair NLL by $0.075$--$0.162$, and cap $0.99$ by
$0.376$--$0.503$. Enforcing a noise floor can therefore cost accuracy on sharp
conditionals while reducing extreme held-out log-loss on other tasks.

\subsection{Deterministic memory and partial noise}
\label{app:e26}\label{app:rssm}

The synthetic memory comparison has $d_h=d_z=8$ and trains six configurations
for 4,000 updates. Rotation-only memory conserves $\|h\|^2$ pathwise; quantile
swirls remove that radius invariant but do not by themselves supply strict
one-step contraction. A cap on the noisy $z$ block does not certify the whole
$(h,z)$ state when deterministic memory remains. The recurrent pixel model of
Appendix~\ref{app:gpu-protocol} instead includes every persistent coordinate in its stochastic prior.

The initial memory variance is $1.44$, with the other coordinates standard
Gaussian, giving finite divergence
\[
 \chi^2(\mu_0\|\gauss)
 =\{1.44(2-1.44)\}^{-d_h/2}-1=1.365.
\]
Finite divergence alone does not imply contraction.

\begin{wmtable}{\textbf{Relaxation with finite initial divergence.} All models have $d_h=d_z=8$ and start with memory variance $1.44$ ($\chi^2=1.365$). The operator column is a sampled estimate, not a bound. $c_\star$ is the architectural upper bound, and $T_{\mathrm{rel}}$ is sufficient for a 100-fold reduction of $L^2$ error. The default noise floor gives $c_\star=\sqrt{1-10^{-8}}$. Distances at $T=1000$ are normalized by a single independent-Gaussian control distance; this ratio has no calibrated significance threshold.}{tab:e26}
\small\setlength{\tabcolsep}{5pt}
\begin{tabular}{@{}llrrrr@{}}
\toprule
Memory update & Cap & $\hat\sigma$ & $T_{\mathrm{rel}}$ & $\Exp\lVert h_T\rVert^2$ & $D_T/D_{\mathrm{ctrl}}$ \\
\midrule
Joint update & Default & $0.974$ & $9.2\!\times\!10^8$ & $8.01$ & $1.24$ \\
Rotation & $1$ & $1.002$ & --- & $11.51$ & $11.64$ \\
Quantile swirl & $1$ & $1.016$ & --- & $11.45$ & $11.66$ \\
Stochastic memory & Default & $0.973$ & $9.2\!\times\!10^8$ & $8.02$ & $1.09$ \\
Rotation & $1$ (requested .99) & $1.002$ & --- & $11.51$ & $12.45$ \\
Stochastic memory & $0.99$ & $0.973$ & $459$ & $8.02$ & $0.99$ \\
\bottomrule
\end{tabular}
\end{wmtable}

Rotation memory retains second moment $11.51$ instead of eight. The quantile
swirl remains near that value at 1,000 steps, a finite-horizon non-relaxation
rather than evidence about asymptotic mixing. Both stochastic-memory models
reach second moments near eight and distances near the single Gaussian control.
With cap $0.99$ on every updated block, the relative horizon is 459 and the
separate absolute bound in the paper's $L^1$-TV convention is
$\|\mu_0K^T-\gauss\|_{\mathrm{TV}}\le0.99^T\sqrt{1.365}\le0.01$
for $T\ge474$ (total variation, not the energy statistic of
Table~\ref{tab:e26}). The stochastic-memory pair has NLL $0.4130$ with the cap
and $0.4153$ with the default floor --- one fitted pair, not a general cost
estimate; deterministic-memory scores concern fewer coordinates and are not
directly comparable to that pair.

\section{Static-reference and finite-start diagnostics}
\label{app:cpu-diagnostics}

\subsection{Independent static resampling at a fixed Reacher checkpoint}
\label{app:static-resampling}

The static estimate in 256-dimensional frozen-autoencoder coordinates, using unnormalized
Euclidean distance, is resampled at the capped Reacher checkpoint
independently: eight fresh Gaussian clouds of 2,048 points per reference pool,
averaged within pool, giving eight reference-level values for uncertainty
estimation. The 64 clouds are not treated as 64 independent replications of
the reference data or model; the recorded terminal rollout samples are reused.

\CPUStaticAuditTable

The resampled static average sits below the recorded static draw, and both
recorded rollout laws are closer to the resampled average than to the recorded
draw, with unresolved residual contrasts (Table~\ref{tab:cpu-static}). Scoring
the same Gaussian sample with float32 and float64 decoder arithmetic changes
this static distance by about $9\times10^{-9}$, six orders of magnitude below
the sampling contrast. An independent replication with 32 fresh $10^5$-step
rollout pairs also gives a zero-compatible gap
(Appendix~\ref{app:reacher-replication}). Together, the results are consistent with
sampling variation in the original static draw.

\subsubsection{Independent Reacher rollout replication}
\label{app:reacher-replication}
The capped Reacher checkpoint is evaluated with 32 fresh paired repetitions of
2,048 points each at $T=100000$. The data-initialized gap $D_T-D_{\rm ref}$ is
$0.001711\pm0.002227$, with interval $[-0.002831, 0.006252]$ and cross/within
sample covariance $-0.007606$. The Gaussian-initialized gap
$D_T^{\gamma}-D_{\rm ref}$ is $0.0008676\pm0.002131$, with interval
$[-0.003478, 0.005213]$ and cross/within sample covariance $-0.005885$.
The initialization contrast $D_T-D_T^{\gamma}$ is $0.000843\pm0.002063$, with
interval $[-0.003365, 0.005051]$ and cross/within sample covariance $-0.008885$.
Uncertainties are sample SEs with 95\% Student intervals, paired per repetition.

Action, initialization, Gaussian-start, transition, reference, and static streams
use explicit seeds. Disjointness from the original evaluation is verified for all
except the transition stream, whose original seed is unavailable. The paired
uncertainty conditions on this one checkpoint and the fixed held-out data pool.
This evaluation uses frozen-autoencoder coordinates and clipped OU actions;
it measures rollout-sampling variation without replicating training.

\subsection{Reference mixtures bound losses without restoring dynamics}
\label{app:reference-mixture}

This rescoring measures how much of the conditional density is supplied by the
reference component when a mixture controls extreme loss.
For frozen checkpoint density $p$ and its decoded-reference density $p_E$,
the post-hoc mixture
\[
 p_r(x'\mid x,a)
 =(1-r)p(x'\mid x,a)+r p_E(x')
\]
retains reference preservation when $p$ has it. It bounds per-dimension NLL
above by $[-\log p_E(x')-\log r]/d$, and increases the original NLL
by at most $-\log(1-r)/d$ pointwise. For
$r=10^{-3}$ and $d=256$, the latter is
$3.9082\times10^{-6}$. The rescoring evaluates all validation pairs at this
common reset probability, without retraining or new rollouts.

\CPUGuardAuditTable

Reacher's default extreme losses become bounded by the reference component;
the median changes little. Its worst-pair decomposition identifies very small
conditional scales and large standardized residuals as the main source of the
loss, rather than the state-chart Jacobian. The corresponding capped checkpoint
is much better conditioned on those pairs. The CPU and GPU mean scores differ
by approximately $0.6\%$ for the uncapped Reacher model; both reveal the same
severe conditional tail.

For Swimmer, the reference component has posterior density responsibility above one half on
$99.15\%$ of default-checkpoint pairs and $98.26\%$ of capped-checkpoint pairs;
its sampling probability remains $0.1\%$. The guarded default
median remains about $6.70$ nats per dimension, and $12.66\%$ of pairs remain
above ten. The reduction in scored log-loss is supplied mainly by the static reference component.

\subsection{Untrained parallel kernels and atomic starts}
\label{app:atomic-diagnostics}

The parallel-kernel CPU experiment uses random, untrained weights. It checks
Gaussian starts, a fixed uniform 16-point bank in eight dimensions, and exact
conditional tanh-Gaussian policy shifts. The bank's initial KL to the Gaussian
is infinite, while its saved second moment is finite. The implemented bound
therefore begins after the first noisy transition. Bounded-observable intervals
use Hoeffding's inequality with a union bound over at most 200 comparisons and
family failure probability $0.01$; the independent unit is a rollout chain at
a fixed horizon. All 114 reported comparisons are compatible with their bounds.
These finite-sample checks test implementation consistency.

\CPUAnalyticBoundTable

The two-dimensional analytic specialization supplies a complementary direct
numerical integral. Its one-step implemented density agrees with the analytic
conditional Gaussian density at the checked inputs, and the reported mixture KL and $L^1$-TV integrals lie
below their respective finite-start bounds. Quadrature and finite-sample checks
do not replace the proof. In a conditioning negative control, both correct
$\alpha\sim\gauss_1$ independent of $z\sim\gauss_2$ and incorrect
$\alpha=-z_1$ have the correct marginal action law. After the same $\pi/4$
joint rotation and coefficient-$0.8$ noise update, the first-coordinate
variances are $0.998$ and $1.650$, against theoretical values $1$ and $1.64$.
Thus marginal Gaussian action fit alone does not supply the conditional chart.

\CPUScalingTable

The timing experiment is a four-thread CPU microbenchmark of the prior alone.
At latent widths $256$, $512$, and $1024$, the kernel uses $0.30$, $0.48$, and $0.85$ million
parameters (Table~\ref{tab:cpu-scaling}; timing medians are plotted in
Figure~\ref{fig:certificate}c). The benchmark excludes observation networks, optimization, and
data loading; each fitted recurrent checkpoint additionally records GPU memory and full-model
throughput.
Numerical GPU results for the recurrent model are
reported under Appendix~\ref{app:gpu-protocol}.

\section{Controlled occupancy and downstream planning}
\label{app:controlled-exp}

\subsection{A system with an exact conditional action chart}
\label{app:controlled-gaussian}

This system separates the cost of an unconditional action chart from the cost
of requiring preservation at every action.
The controlled Gaussian data follow
\begin{equation}
 a\mid z\sim\mathcal N(\tau z,1-\tau^2),\qquad
 z'=\varrho a+\sqrt{1-\varrho^2}\,\xi,\qquad \xi\sim\gauss_1.
 \label{eq:e12sys}
\end{equation}
Starting with $z\sim\gauss_1$ gives standard Gaussian state and action
marginals for $|\tau|,|\varrho|<1$. Their dependence changes with $\tau$.
The exact chart $\alpha=(a-\tau z)/\sqrt{1-\tau^2}$ removes that dependence;
a marginal action transform cannot, since $a$ is already marginally Gaussian.
The excess action-chart NLL from modeling $H(a)$ instead of $H(a\mid z)$ is
$-\frac12\log(1-\tau^2)$, or $0.2231$ and $0.8304$ at $\tau=0.6,0.9$.
This chart cost is distinct from transition NLL.

For independent actions, the optimal excess transition NLL over all
per-action-preserving kernels is $-\frac12\log(1-\varrho^2)$
(Proposition~\ref{prop:controlled-floor}). For dependent actions we optimize
the narrower implemented family
$z'=c(a)z+\sqrt{1-c(a)^2}\varepsilon$ by minimizing its Gaussian conditional
cross entropy for each action and integrating over $a\sim\gauss_1$.
At $\varrho=0.8$, its excess costs are $0.5108,0.3681,0.1408$ for
$\tau=0,0.6,0.9$. Only the first is an unrestricted per-action-class floor.
A $250\times250$ copula-grid optimization followed by truncated 24-node
action quadrature gives approximately $0.32$ and $0.04$ for the latter two
settings. These approximate, uncertified class optima show that the scalar
OU restriction contributes materially to the measured advantage.

\begin{wmtable}{Controlled Gaussian conditional fit. Excess transition NLL is relative to the true conditional entropy. The scalar OU optimum equals the unrestricted per-action floor only at $\tau=0$. Each configuration has one training seed.}{tab:controlled}
\small\setlength{\tabcolsep}{6pt}
\begin{tabular}{@{}cccrrrr@{}}
\toprule
& & & \multicolumn{2}{c}{Transition excess NLL} & \multicolumn{2}{c}{Per-action OU family} \\
\cmidrule(lr){4-5}\cmidrule(lr){6-7}
$\varrho$ & $\tau$ & Chart & Occupancy & Per-action & Optimum & Loss/optimum \\
\midrule
$0.3$ & $0.0$ & Conditional & $0.0011$ & $0.0470$ & $0.0472$ & $1.00$ \\
$0.5$ & $0.0$ & Conditional & $0.0011$ & $0.1428$ & $0.1438$ & $0.99$ \\
$0.8$ & $0.0$ & Conditional & $0.0011$ & $0.5089$ & $0.5108$ & $1.00$ \\
$0.95$ & $0.0$ & Conditional & $0.0011$ & $1.1623$ & $1.1640$ & $1.00$ \\
$0.8$ & $0.6$ & Conditional & $0.0011$ & $0.3668$ & $0.3681$ & $1.00$ \\
$0.8$ & $0.6$ & Marginal & $0.0011$ & $0.3667$ & $0.3681$ & $1.00$ \\
$0.8$ & $0.9$ & Conditional & $0.0011$ & $0.1399$ & $0.1408$ & $0.99$ \\
$0.8$ & $0.9$ & Marginal & $0.0011$ & $0.1399$ & $0.1408$ & $0.99$ \\
\bottomrule
\end{tabular}

\end{wmtable}

Each configuration uses 400,000 training and 200,000 held-out transitions,
20,000 updates, and 32,768 Gaussian-start chains through $10^5$ steps.
The occupancy model's approximately $0.0011$-nat excess is a finite-sample
fit statistic, not a certified optimization optimum.

\begin{wmtable}{\textbf{Action-chart and closed-loop diagnostics.} A dash indicates an unrecorded chart diagnostic; correlation refers to the chart output and state in the occupancy model. Second moments are measured at $T=10^5$ under the behaviour policy, and all listed chains remain finite. The final column is a separate 200-step Gaussian-start KS diagnostic for the per-action model under that policy. It is not a test of preservation for a fixed action.}{tab:e12controls}
\small\setlength{\tabcolsep}{5pt}
\begin{tabular}{@{}cccrrrr@{}}
\toprule
& & & & \multicolumn{2}{c}{$\Exp|z_{10^5}|^2$} & \\
\cmidrule(lr){5-6}
$\varrho$ & $\tau$ & Chart & $\mathrm{Corr}(\alpha,z)$ & Occupancy & Per-action & Per-action KS \\
\midrule
$0.3$ & $0.0$ & Conditional & --- & $1.0071$ & $0.9936$ & $0.0014$ \\
$0.5$ & $0.0$ & Conditional & --- & $1.0089$ & $0.9936$ & $0.0015$ \\
$0.8$ & $0.0$ & Conditional & --- & $1.0107$ & $0.9936$ & $0.0014$ \\
$0.95$ & $0.0$ & Conditional & --- & $1.0102$ & $0.9936$ & $0.0015$ \\
$0.8$ & $0.6$ & Conditional & $0.0011$ & $1.0108$ & $1.0666$ & $0.0089$ \\
$0.8$ & $0.6$ & Marginal & $0.6003$ & $1.0107$ & $1.0666$ & $0.0088$ \\
$0.8$ & $0.9$ & Conditional & $0.0039$ & $1.0132$ & $1.1256$ & $0.0179$ \\
$0.8$ & $0.9$ & Marginal & $0.9004$ & $1.0130$ & $1.1256$ & $0.0179$ \\
\bottomrule
\end{tabular}
\end{wmtable}
\begin{wmtable}{\textbf{Held-out chart likelihood.} Costs are nats per action coordinate. The entropy is analytic for the Gaussian behaviour policy. The difference from this entropy is an empirical excess cost and can be slightly negative from sampling.}{tab:e12chart}
\small\setlength{\tabcolsep}{7pt}
\begin{tabular}{@{}cccrrr@{}}
\toprule
$\varrho$ & $\tau$ & Chart & NLL & $H(a\mid z)$ & Excess \\
\midrule
$0.8$ & $0.6$ & Conditional & $1.1953$ & $1.1958$ & $-0.0005$ \\
$0.8$ & $0.6$ & Marginal & $1.4186$ & $1.1958$ & $+0.2228$ \\
$0.8$ & $0.9$ & Conditional & $0.5880$ & $0.5886$ & $-0.0005$ \\
$0.8$ & $0.9$ & Marginal & $1.4204$ & $0.5886$ & $+0.8318$ \\
\bottomrule
\end{tabular}
\end{wmtable}

At $\tau=0.9$, the fitted conditional and marginal charts have state
correlations $0.0039$ and $0.9004$; their chart NLL gap is $0.8324$ nats.
These check the learned transform against the analytic $0.8304$-nat reference.
The ideal transform, rather than this finite correlation test, gives exact
independence.

\paragraph{A fixed-model dependence intervention.}\label{app:controlled-guard}
A raw-action-conditioned OU kernel preserves the reference for each external
action while potentially failing under state-dependent actions. At
$\tau=0.9$, the fitted model has 200-step KS discrepancy $0.0179$ under the
behaviour policy versus a Gaussian control of $0.0020$.
A separate fixed-model intervention uses 200,000 training transitions, 8,000
updates, and 2,000 rollout steps. State-dependent driving gives second moment
$1.115$; independent driving of the same model gives $0.9985$, with KS
$0.0014$ versus control $0.0015$. The action marginals agree under the reference
state law, while their state dependence differs. This intervention isolates
that dependence in the tested system.

\subsection{Planning on the stochastic Duffing oscillator}
\label{app:planning}

This comparison measures planning error under learned transitions with different
preservation constraints.
The system is $\dd x=v\dd t$ and
$\dd v=(-\delta v-\alpha_D x-\beta x^3+\mathrm{drive}\,a)\dd t+\sigma\dd W$,
with $\dd t=0.05$ and
$(\alpha_D,\beta,\delta,\mathrm{drive},\sigma)=(-1,1,0.2,2.5,0.25)$. Collection uses
$a_t=\operatorname{clip}(0.6\tanh x_t+u_t,-1,1)$ with an OU driver $u_t$.
The learned action chart is therefore approximate: clipping creates boundary
mass and the driver retains memory. This planning dataset does not have the
known nonsingular innovation chart used in the recurrent protocol of
Appendix~\ref{app:gpu-protocol}.

Each model uses the same 400,000 transitions and 20,000 updates. The state
chart and action chart have 534,576 and 78,944 parameters; the occupancy
transition has 503,004 and the Gaussian transition 792,070. The ablated mixer
has 269,364 parameters. The per-action arm feeds the raw action to its
transition, so its name here differs from the conditioner-only ablation of the
recurrent study.
The objective adds total state-marginal, action-chart, and conditional NLL;
the implementation does not divide the two-dimensional conditional NLL by two.

CEM uses 512 candidates, four iterations, and 64 elites to minimize a
swing-up-and-hold cost. The real simulator executes each selected action.
Three seeds share evaluation episodes and planner randomness across models
at horizons $5,15,50,150,500$. The independent replication unit is the seed,
with correlated repeated horizons within a seed.

\begin{wmtable}{\textbf{Paired planning costs on the controlled Duffing oscillator.} Oracle cost and each model's excess cost $\Delta$ over the oracle are mean $\pm$ sample standard deviation across three seeds. Models share evaluation episodes and planner seeds within each replication. Horizons are repeated measurements; they are not treated as independent samples.}{tab:planning}
\small\setlength{\tabcolsep}{4pt}
\begin{tabular}{@{}rrrrr@{}}
\toprule
$H$ & Oracle cost & Ours: $\Delta$ & Per-action: $\Delta$ & Gaussian: $\Delta$ \\
\midrule
$5$ & $0.204\pm0.083$ & $-0.004\pm0.004$ & $+2.047\pm0.988$ & $+0.003\pm0.002$ \\
$15$ & $0.212\pm0.091$ & $-0.000\pm0.007$ & $+1.746\pm0.802$ & $+0.002\pm0.005$ \\
$50$ & $0.253\pm0.117$ & $+0.005\pm0.022$ & $+1.630\pm0.789$ & $+0.003\pm0.004$ \\
$150$ & $0.433\pm0.250$ & $+0.016\pm0.099$ & $+1.441\pm0.630$ & $-0.013\pm0.023$ \\
$500$ & $0.796\pm0.550$ & $+0.242\pm0.181$ & $+1.060\pm0.745$ & $+0.001\pm0.015$ \\
\bottomrule
\end{tabular}
\end{wmtable}
\begin{wmtable}{\textbf{Oracle-normalized calibration gaps.} The gap is the absolute difference between predicted and achieved cost, divided by the oracle's gap at the same horizon and seed. Entries are mean $\pm$ sample standard deviation across three seeds; the oracle is $1$ by definition.}{tab:planning-calibration}
\small\setlength{\tabcolsep}{10pt}
\begin{tabular}{@{}rrrr@{}}
\toprule
$H$ & Ours & Per-action & Gaussian \\
\midrule
$5$ & $1.44\pm0.07$ & $2.27\pm0.37$ & $1.04\pm0.03$ \\
$15$ & $1.48\pm0.14$ & $3.06\pm0.30$ & $0.97\pm0.08$ \\
$50$ & $1.43\pm0.34$ & $4.87\pm0.48$ & $0.99\pm0.14$ \\
$150$ & $1.18\pm0.20$ & $2.28\pm0.38$ & $1.07\pm0.07$ \\
$500$ & $0.73\pm0.15$ & $0.96\pm0.10$ & $1.01\pm0.06$ \\
\bottomrule
\end{tabular}
\end{wmtable}

The Gaussian baseline is close to the oracle at every reported horizon. The
occupancy model is also close at short horizons but has mean excess $0.242$
at horizon 500. The per-action arm's excess cost is positive in all 15
seed--horizon cells. Averaging horizons within each seed gives excess costs
$1.325,0.976,2.453$ for per-action, $-0.009,0.089,0.076$ for occupancy, and
$-0.001,0.002,-0.003$ for Gaussian. A two-sided sign test with only three
independent seeds gives $p=0.25$ for per-action and $p=1$ for each other arm.

The fixed-budget oracle itself becomes less effective at long horizons:
its cost increases from $0.204$ to $0.796$. The normalized calibration ratio
therefore depends strongly on the oracle denominator and is not monotone in
horizon. The per-action deficit also coincides with worse conditional fit
($-8.70$ versus $-9.34$ nats per transition). The comparison does not isolate
occupancy drift as its cause. One Gaussian seed has a nonfinite dispersion
diagnostic by $10^3$ autonomous steps despite $98.86\%$ survival, while its
planner remains competitive, showing that autonomous-rollout and planning evaluations
measure different properties.

\section{Numerical exactness and structural diagnostics}
\label{app:exactness}

\subsection{Independent density and preservation checks}
\label{app:rigidity}

For a diffeomorphism $T$, change of variables gives
\begin{equation}
 \KL(T_\#\gauss\|\gauss)
 =\Exp_{z\sim\gauss}\left[
 \tfrac12(\|T(z)\|^2-\|z\|^2)-\log|\det J_T(z)|\right].
 \tag{$\ddagger$}\label{eq:ddagger}
\end{equation}
A preserving map makes the integrand zero pointwise. A small sampled mean
alone is insufficient because large signed residuals can cancel. Independent
automatic differentiation and general matrix factorization check the analytic
Jacobian against this identity.

\begin{wmtable}{\textbf{Pointwise Gaussian identity.} Six perturbed transition families, 8,192 inputs, $d=32$, CPU double precision. The analytic log-determinant is independently checked against 16 autograd Jacobians per family. Ranges show the two sides of \eqref{eq:dagger}; exactness is an architectural property.}{tab:pointwise}
\small\setlength{\tabcolsep}{4pt}
\begin{tabular}{@{}lrccc@{}}
\toprule
Layer & Max identity defect & $\log|\det J|$ & $\tfrac12\Delta\|z\|^2$ & Exact? \\
\midrule
Rotation & $3.1\times10^{-13}$ & $[0,0]$ & $[0,0]$ & yes \\
Quantile swirl & $4.1\times10^{-13}$ & $[-0.015,+0.014]$ & $[-0.015,+0.014]$ & yes \\
Grid swirl & $4.2\times10^{-13}$ & $[-0.074,+0.064]$ & $[-0.074,+0.064]$ & yes \\
Rotation + quantile & $4.6\times10^{-13}$ & $[-0.012,+0.012]$ & $[-0.012,+0.012]$ & yes \\
Affine & $7.8\times10^{-1}$ & $[-0.309,+0.121]$ & $[-0.856,+0.567]$ & no \\
Additive & $2.2\times10^{-1}$ & $[0,0]$ & $[-0.194,+0.217]$ & no \\
\bottomrule
\end{tabular}
\end{wmtable}

The additive control's zero log-determinant does not preserve the Gaussian:
its identity residual is $0.217$. The largest analytic-versus-autograd
log-determinant discrepancy among the tested map families is
$8.8\times10^{-14}$. A separate bijective-noise sampler check compares
$\log p(z'\mid z,a)$ with
$\log\gauss(\varepsilon)-\log|\det(\partial z'/\partial\varepsilon)|$.
Across seven cells with dimensions 4--16, one to four noise blocks, and
parameter perturbations through scale three, its maximum discrepancy is
$4.4\times10^{-12}$. These are finite numerical checks, not proofs for
arbitrary weights or conditioning.

Orthogonal-map tests on 8,192 inputs show residual scales near $10^{-14}$
in float64, $10^{-6}$ in float32, and $10^{-2}$ with TF32 enabled. Exactness
checks disable TF32. Checkpoint geometry checks also find small, nonzero
floating-point orthogonality defects. Neither these local measurements nor
agreement of a few marginal moments supplies a uniform long-run roundoff bound.

\begin{wmtable}{Sequential-scan rotation--OU arithmetic check: 16,384 chains, dimension 32, two noise blocks, float32 with TF32 disabled, and independent actions. Selected horizons from the complete recorded sweep. The independent Gaussian row has the same sample size. This checks numerical consistency at fixed weights, not a uniform error theorem.}{tab:sixdecades}
\small\setlength{\tabcolsep}{8pt}
\begin{tabular}{@{}rrrrr@{}}
\toprule
Horizon & RMS & Maximum mean & Maximum covariance defect & Kurtosis \\
\midrule
$1$ & $0.997796$ & $0.0156$ & $0.0266$ & $2.9885$ \\
$10^2$ & $1.000537$ & $0.0172$ & $0.0326$ & $2.9993$ \\
$10^4$ & $0.999767$ & $0.0181$ & $0.0291$ & $2.9894$ \\
$10^6$ & $1.001119$ & $0.0158$ & $0.0325$ & $2.9985$ \\
Independent Gaussian & $1.000560$ & $0.0204$ & $0.0247$ & $3.0110$ \\
\bottomrule
\end{tabular}
\end{wmtable}

The conditioning graph is equally important: additive layers with angle-tied
noise give RMS $1.0014$ but KS $0.0361$, and choosing identity for $|x|<1$
with an independent Gaussian refresh otherwise gives standard deviation
$0.7186$ and KS $0.1087$, versus $1.0003$ and $0.0023$ when the branches are
selected by an independent draw instead. Each component preserves the
reference; selection based on the coordinate being updated need not.

\subsection{Architecture scope: parallel prior and sequential scan}
\label{app:arch}\label{app:capacityceiling}

The main construction rotates the joint state--innovation input, reads only the
transformed innovation coordinates to determine its diagonal noise coefficients,
and updates all state coordinates in parallel. Its postmaps are Gaussian
rotations with unit Jacobian determinant. A full-reference reset gives an exact
two-component density mixture. The cap applies to every persistent stochastic
state coordinate. The proof of its finite-start entropy bound concerns this
conditioning graph and must not be inferred from a finite-basis operator estimate.

The sequential-scan experiments use a scan whose block-$i$ conditioner
reads $(y_{<i},x_{>i},a)$ and excludes the updated $x_i$. Rotation couplings and
CDF-conjugated quantile swirls surround the scan. A quantile swirl uses smooth,
compactly supported planar rotations on the open unit cube; conjugating by the
Gaussian CDF balances its radius change against its Jacobian. These maps can
preserve the Gaussian without preserving Euclidean volume in Gaussian coordinates.
The sequential-scan $L^2$ bound (Theorem~\ref{thm:C}) and the parallel bound
(Section~\ref{sec:certificates}, Appendix~\ref{app:parallel-certificate}) are
proved under their respective conditioning graphs.

\paragraph{Finite copulas and noise-only maps.}\label{app:spend}
The copula family used with the sequential scan mixes Gaussian copulas and finite cosine expansions
with positive normalized weights and zero-integral cosine basis functions.
A sufficient coefficient bound enforces positivity while retaining uniform
margins. Its conditioner excludes the updated coordinate; sampling inverts
the monotone conditional CDF using 40 double-precision bisection steps.
This is a finite preserving subfamily of the scalar class in
Theorem~\ref{thm:L}. Gaussian-preserving transformations of fresh noise alone
leave its conditional law, hence the transition kernel, unchanged
(Lemma~\ref{lem:N}). This observation does not restrict useful state--noise
interactions or complementary-coordinate conditioners.

\subsection{A scalar illustration of error sensitivity}
\label{app:knife}

At $\lambda=0.999$, the coefficient perturbation $\delta=10^{-4}\lambda$ is a relative
error of $0.01\%$. It increases the stationary variance by about $11\%$ while costing
$2.5\times10^{-6}$ nats of conditional KL averaged under the true unit-variance state law
(Proposition~\ref{prop:knife} states and proves the calculation). Tying the
fitted coefficient and scale through $m^2+s^2=1$ removes this particular
variance error.

\section{Capacity and operator diagnostics}
\label{app:capacity}\label{sec:exp:capacity}\label{app:certest}

\subsection{Finite transition capacity}
\label{app:finite-capacity}

This experiment compares the finite transition families on a fixed preserving target.
Combining rotations and swirls improves on either
primitive alone. Broader stochastic dependence also helps, but the unconstrained
models remain better at this training budget. Their substantial late-training
improvement makes these outcomes fitted-model comparisons, rather than
converged approximation limits.

\begin{wmtable}{Conditional capacity at a fixed Gaussian reference with no learned encoder. All models use 6,000 updates on the same preserving target kernel. NLL is per latent dimension. Deterministic depth and noise expressivity are distinct changes.}{tab:kernelonly}
\small\setlength{\tabcolsep}{6pt}
\begin{tabular}{@{}lrr@{}}
\toprule
Transition & Parameters & NLL/dim \\
\midrule
Quantile swirls, two OU blocks & $670,406$ & $0.6981$ \\
Rotations, two OU blocks & $672,366$ & $0.6609$ \\
Both primitives, two OU blocks & $671,386$ & $0.6265$ \\
Both, twice the deterministic depth & $1,209,134$ & $0.5943$ \\
Both, four OU blocks & $738,202$ & $0.5824$ \\
Both, two copula blocks & $705,442$ & $0.5394$ \\
Unconstrained affine stack & $607,089$ & $-0.6276$ \\
Unconstrained conditional flow & $544,872$ & $-0.7622$ \\
Unconstrained flow, twice the depth & $1,089,744$ & $-0.9843$ \\
\bottomrule
\end{tabular}
\end{wmtable}

The kernel-only copula improvement does not automatically transfer to joint
chart--transition learning: at 60,000 updates with three noise blocks, the
OU pair NLL is $2.1777$ nats per dimension against $2.3221$, $2.1895$, and
$2.3104$ for the three broader copula variants despite their $3$--$10\%$ more
parameters. The representation theorem describes an ideal class of stationary
pair laws; it does not certify the capacity or optimization of these finite
families.

\subsection{A non-Gaussian copula example}
\label{app:chartsub}

Let $z,\alpha$ be independent standard Gaussians and define
\[
 s=\operatorname{frac}(\Phi(z)+\Phi(\alpha)),\qquad
 c(s,q')=1+0.9\cos(2\pi s)\cos(\pi q'),\qquad z'=\Phi^{-1}(q').
\]
The density is at least $0.1$ and has uniform margins. For each fixed
$\alpha$, $s$ is uniform when $z$ is Gaussian, so the kernel also preserves
the Gaussian action by action. It separates a scalar Gaussian-copula
restriction from the occupancy-versus-per-action distinction.
Numerical quadrature gives information $\int c\log c=0.111716$ nats.
For $x=\Phi^{-1}(s)$ and $y=\Phi^{-1}(q')$, symmetry gives $\Exp[xy]=0$.
A global Gaussian-copula correlation $\rho$ therefore has expected log density ratio
$-\frac12\log(1-\rho^2)-\rho^2/(1-\rho^2)\le0$, with equality at independence.
That scalar family loses the entire information in this example.

\begin{wmtable}{\textbf{A non-Gaussian copula witness.} Scores are held-out log density ratios relative to independent Gaussian output, in nats, with mean $\pm$ sample SD across three training/data seeds. The truth has population score $0.111716$ by numerical quadrature. The adaptive Gaussian arm is a conditional-density baseline without a guaranteed Gaussian output margin.}{tab:chartsub}
\small\setlength{\tabcolsep}{5pt}
\begin{tabular}{@{}lrrr@{}}
\toprule
Family & Parameters & Recovered score & Truth minus score \\
\midrule
Global Gaussian copula (analytic optimum) & $1$ & $0$ & $0.1117$ \\
Input-dependent Gaussian conditional & $33,537$ & $0.0636\pm0.0005$ & $+0.0481$ \\
Cosine copula, order 3 & $9$ & $0.1123\pm0.0005$ & $-0.0005$ \\
\bottomrule
\end{tabular}
\end{wmtable}

The order-three learned cosine basis constrains
$\sum_{kl}|\beta_{kl}|\le0.499$ and contains the truth at
$\beta_{2,1}=0.45$. Its three-seed score $0.1123\pm0.0005$ uses sample SD;
the slight excess over population information is held-out sampling variation.
The adaptive Gaussian arm recovers $0.0636\pm0.0005$ by letting its coefficient
depend on the scalar being updated. It is a normalized conditional density,
but violates the exact noise stage's conditioning rule and has no guaranteed
Gaussian output marginal. Its remaining gap is a fitted-family result, not an
ablation of an exact preserving component. The cosine basis contains this
specific truth, without parameterizing every positive copula.

\subsection{Finite-basis operator estimates}
\label{app:operator-estimates}

Population compressions onto a finite basis cannot certify the full Markov
operator norm. Sampling error can further inflate their estimates, while
truncation can omit slow modes. Constant-OU controls recover known norms within
$0.0047$ and orthogonal deterministic maps return norm one. A non-normal
Gaussian example has singular norm $0.900$ but spectral radius $0.474$,
showing why those quantities cannot be interchanged. An OU step with
coefficient $0.9$ followed by a fair global sign flip removes odd correlations:
a linear-only estimate is $0.004$, while quadratic estimates $0.806$ and
$0.812$ detect the true quadratic mode $0.810$.

\end{document}